\def\eqref#1{equation~\ref{#1}}
\def\1{\bm{1}}
\DeclareMathAlphabet{\mathsfit}{\encodingdefault}{\sfdefault}{m}{sl}
\SetMathAlphabet{\mathsfit}{bold}{\encodingdefault}{\sfdefault}{bx}{n}
\newtheorem{assumption}{Assumption}
\newtheorem{lemma}{Lemma}
\newtheorem{theorem}{Theorem}
\newtheorem{definition}{Definition}
\newtheorem{corollary}{Corollary}
\title{MGDA Converges under Generalized Smoothness, Provably}
\author{Qi Zhang\thanks{Equal contribution.}\space\thinspace\textsuperscript{1}, Peiyao Xiao\footnotemark[1]\thinspace\space\textsuperscript{2}, Shaofeng Zou\textsuperscript{1} \& Kaiyi Ji\textsuperscript{2}\\
School of Electrical, Computer and Energy Engineering, Arizona State University\textsuperscript{1}\\
Department of Computer Science and Engineering, University at Buffalo\textsuperscript{2}\\
\texttt{\{qzhan261,zou\}@asu.edu, \{peiyaoxi,kaiyiji\}@buffalo.edu} \\
}
\begin{document}

\maketitle

\begin{abstract}
Multi-objective optimization (MOO) is receiving more attention in various fields such as multi-task learning. Recent works provide some effective algorithms with theoretical analysis but they are limited by the standard $L$-smooth or bounded-gradient assumptions, which typically do not hold for neural networks, such as Long short-term memory (LSTM) models and Transformers. In this paper, we study a more general and realistic class of generalized $\ell$-smooth loss functions, where $\ell$ is a general non-decreasing function of gradient norm. We revisit and analyze the fundamental multiple
gradient descent algorithm (MGDA) and its stochastic version with double sampling for solving the generalized $\ell$-smooth MOO problems,  which approximate the conflict-avoidant (CA) direction that maximizes the minimum improvement among objectives. We provide a comprehensive convergence analysis of these algorithms and show that they converge to an $\epsilon$-accurate Pareto stationary point with a guaranteed $\epsilon$-level average CA distance (i.e., the gap between the updating direction and the CA direction) over all iterations,  
where totally $\mathcal{O}(\epsilon^{-2})$ and $\mathcal{O}(\epsilon^{-4})$ samples are needed for deterministic and stochastic settings, respectively. 
We prove that they can also guarantee a tighter $\epsilon$-level CA distance in each iteration using more samples. Moreover, we analyze an efficient variant of MGDA named MGDA-FA using only $\mathcal{O}(1)$ time and space, while achieving the same performance guarantee as MGDA. 
\end{abstract}

\section{Introduction}
There have been a variety of emerging applications of multi-objective optimization (MOO), such as online advertising \citep{ma2018modeling}, autonomous driving \citep{huang2019apolloscape}, and reinforcement learning \citep{thomas2021multi}. Mathematically, the MOO problem takes the following formulation.
\begin{align}\label{eq:MOO}
F^*=\min_{x\in\mathbb{R}^m}{F}(x) := (f_1(x), f_2(x),...,f_K(x)),
\end{align}
where $K$ is the total number of objectives and $f_k(x)$ is the $k$-objective function given model parameters $x$. Under the stochastic setting, $f_k(x)=\mathbb{E}_s[f_k(x;s)]$, where $s$ denotes data sample. {In MOO setting, we are interested in optimizing all the objectives simultaneously. However,} this problem is challenging due to the gradient conflict that some objectives with larger gradients dominate the update direction at the sacrifice of
significant performance degeneration on the less-fortune objectives with smaller gradients. {Thus, a  widely-adopted target is to find the Pareto stationary point $x$ that the performance of all objectives cannot be further improved without compromising some objectives. } A variety of MOO-based methods have been proposed to mitigate this conflict and find a more balanced solution among all objectives. In particular, the multiple gradient descent algorithm (MGDA) \citep{desideri2012multiple} aims to find a conflict-avoidant (CA) update direction that maximizes the minimal improvement among all objectives and converges to a Pareto stationary point at which there is no common descent direction for all objective functions. 
This idea then inspired numerous follow-up methods including but not limited to CAGrad \citep{liu2021conflict}, PCGrad \citep{yu2020gradient}, GradDrop~\citep{chen2020just}, FAMO \citep{liu2024famo} and  FairGrad \citep{ban2024fair} with a convergence guarantee in the deterministic setting with full-gradient computations.  
% Besides, another issue of MOO lies in the fact that it is hard to find a point that minimizes all objective functions simultaneously. Towards this end, the target becomes finding a Pareto stationary point such that there is no common descent direction for all objective functions. 
% In this context, many methods have been proposed including multiple gradient descent algorithm (MGDA) \citep{desideri2012multiple}, CAGrad \citep{liu2021conflict}, PCGrad \citep{yu2020gradient}, MoCo \citep{fernando2022mitigating}, SDMGrad \citep{xiao2024direction} with extensive empirical results and some theoretical analysis. 
The theoretical understanding of the convergence and complexity of stochastic MOO is not well-developed until very recently.  \cite{liu2021stochastic} proposed stochastic multi-gradient (SMG) as a stochastic version of MGDA, and established its convergence guarantee. %but with an increasing batch size.
\cite{zhou2022convergence} analyzed the non-convergence issues of MGDA,
CAGrad and PCGrad in the stochastic setting, and further proposed a convergent approach named CR-MOGM. More recently, \cite{fernando2022mitigating} and \cite{chen2024three} proposed single-loop stochastic MOO methods named MoCo and MoDo, and proved their convergence to an $\epsilon$-accurate Pareto stationary point while guaranteeing an $\epsilon$-level {\bf average CA distance}\footnote{CA distance means the distance between the updating direction and the CA direction. Its formal definition can be found in \Cref{sec:CAs}} over all iterations. \cite{xiao2024direction} proposed a double-loop algorithm named SDMGrad that enables to obtain an unbiased stochastic multi-gradient via a double-sampling strategy. They established the convergence of SDMGrad with a guaranteed $\epsilon$-level CA distance in every iteration, which we call as {\bf iteration-wise CA distance}.

% Recent works of MOO tend to approximate the conflict-avoidant (CA) direction to mitigate the gradient conflict \citep{fernando2022mitigating,xiao2024direction,chen2024three}. In this case, measuring the distance from the update direction to the CA direction, the so-called CA distance will be important. The larger the CA distance is, the further the update direction will be away from the CA direction and the more conflict it will have. Theoretically, there are two kinds of CA distance analysis in literature. The single-loop algorithms MoCo and MoDo guarantee a $\epsilon$-level of average CA distance over iterations. The double-loop algorithm SDMGrad can guarantee small-level CA distances in every iteration, which we call iteration-wise CA distance.
% \begin{figure}[t]
% \centering
% \begin{minipage}[t]{0.45\textwidth}
%     \centering
%     \includegraphics[width=\textwidth]{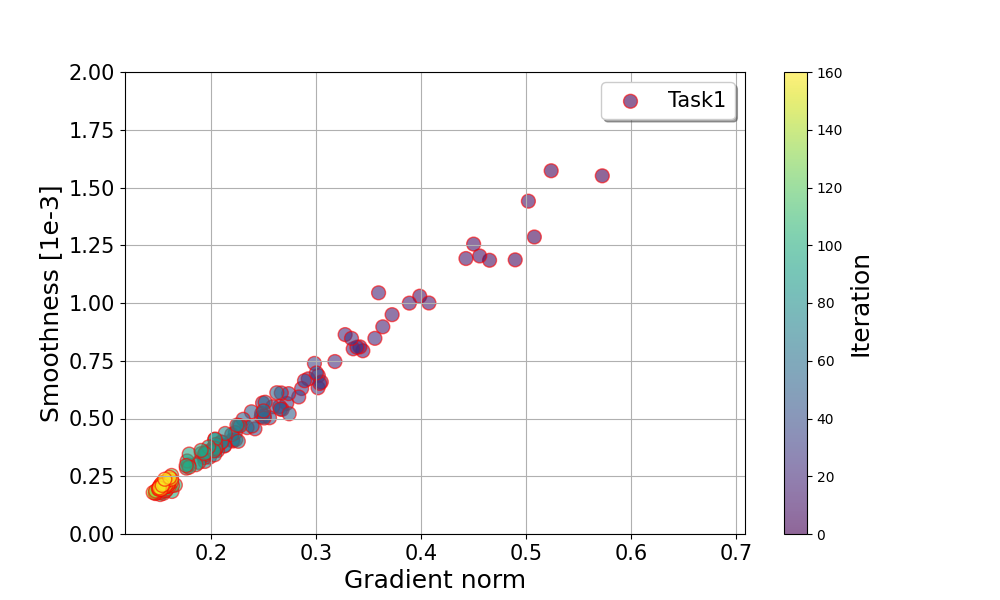}
%     % \caption{}
%     \label{fig:first-figure}
% \end{minipage}
% % \vspace{-0.5cm}
% % \hfill
% \begin{minipage}[t]{0.45\textwidth}
%     \centering
%     \includegraphics[width=\textwidth]{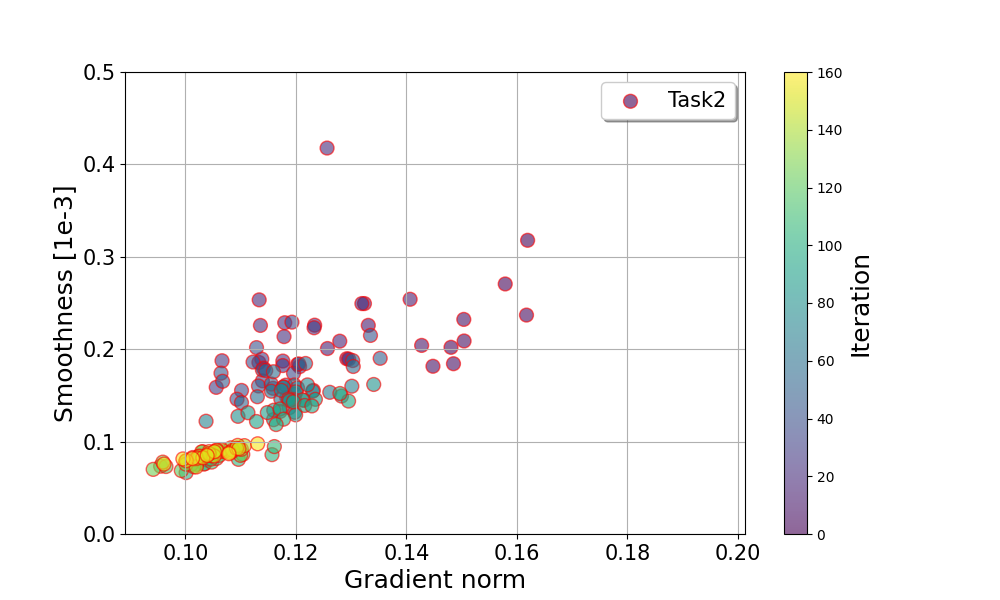}
%     % \caption{Caption for the second figure}
% \end{minipage}
% \caption{Local smoothness constant vs. gradient norm on training SegNet on CityScapes dataset of each task. Task 1 on the left and Task 2 on the right.} \label{fig:task2}
% \end{figure}
\begin{wrapfigure}{r}{0.4\textwidth}
    \vspace{-0.7cm}
     \centering
    \includegraphics[width=0.5\textwidth]{scatter_plot_GS_task1.png}
    % \caption{}
    % \label{fig:first}
    \caption{Local smoothness constant vs. gradient norm for  training SegNet on CityScapes dataset of Task 1.} \label{fig:firsttaskgs}
    \vspace{-0.8cm}
\end{wrapfigure}
However, all existing works are limited by the standard $L$-smooth and bounded-gradient assumptions.
Nevertheless, a recent study \citep{zhang2019gradient} indicates that such assumptions may not necessarily be true for the training of neural networks and an alternative $(L_0,L_1)$-smoothness condition was observed and studied, which assumes the Lipschitz constant to be linear in the gradient norm and the gradient norm to be potentially infinite. Furthermore, this phenomenon has been consistently observed in our experiments (e.g., see \Cref{fig:firsttaskgs}). Interestingly, it has been widely observed that 
MGDA algorithms always converge even under such generalized smoothness conditions \citep{sener2018multi,liu2021conflict,xiao2024direction}. This naturally raises a thought-provoking question:
\begin{list}{$\bullet$}{\topsep=0.3ex \leftmargin=0.25in \rightmargin=0.in \itemsep =-0.022in}
   \item[Q.]\textit{Can the fundamental MGDA algorithm provably converge under the generalized smoothness condition, while achieving a sufficiently small CA distance?} 
\end{list}
This question remains open due to the {\bf following challenges}. 
% Previous studies have been unable to address this question because of the following limitations. 
The analysis of existing MOO methods cannot be generalized to this $(L_0,L_1)$-smoothness directly due to the possible unbounded smoothness or gradient norm. In addition, all existing works \citep{reisizadeh2023variance,zhang2019gradient,li2024convergence2, li2024convex,jin2021non,crawshaw2022robustness,chen2023generalized,zhang2024convergence} in generalized smoothness are limited to the single task problems, which are fundamentally different from the MOO problems since even though each single task is generalized smooth, the linear combination of these tasks is not necessarily generalized smooth. In this paper, we provide an affirmative answer to this question. Our main contributions are summarized below. 
\subsection{Our Contributions}
% We propose two single-loop MOO methods, Generalized Smooth Multi-objective Gradient descent (GSMGrad) and its stochastic variant SGSMGrad, and provide them with a comprehensive convergence analysis under the generalized smoothness condition in different settings. Our detailed contributions are listed below. 

 % In this paper, we address this question by conducting a convergence analysis of the fundamental MGDA algorithm under the generalized smoothness condition. Additionally, we ensure a low average CA distance in the vanilla MGDA and achieve a small iteration-wise  CA distance, facilitated by a warm start process.  
 
We establish a comprehensive convergence analysis for MGDA under the generalized smoothness condition, in both the deterministic and stochastic settings. 
Moreover, the analysis covers both the average CA distance and the iteration-wise CA distance. 
% , where the latter distance  

% Besides, with the help of a warm-start, we are able to guarantee small iteration-wise CA distance for our single-loop algorithm. Our detailed contributions are listed below.

% To provide a fundamental understanding of the MOO under more realistic assumptions and theoretically characterize the CA distance during the training process, we propose a novel single-loop algorithm, Generalized Smooth Multi-objective Gradient descent (GSMGrad) and its stochastic variant SGSMGrad, plus convergence rates under different settings.

%\vspace{0.1cm}
\noindent\textbf{Weakest assumptions in MOO.} In this paper, we investigate the generalized $\ell$-smooth assumption, where $\ell$ is a general non-decreasing function of gradient norm, and includes both the standard $L$-smooth and $(L_0,L_1)$-smooth assumptions as special cases. 
% In this paper, we study the case that $\ell$ is any function such that $\frac{a^2}{\ell(a)}$ is monotonically increasing for any $a>0$,{\color{red} (solved) KY: explain $\ell$ here in more details}
% which includes both the standard $L$-smooth and $(L_0,L_1)$-smooth assumptions as special cases.
This assumption finds many applications, such as LSTM models \citep{zhang2019gradient}, transformers \citep{crawshaw2022robustness}, distributionally robust optimization \citep{jin2021non} and higher-order polynomial functions \citep{chen2023generalized}. In addition, we {\bf do not make any bounded-gradient assumption}, which is required in previous analysis to ensure the bounded multi-gradient approximation.   
To the best of our knowledge, this is the first work to investigate generalized smoothness in MOO problems. Table \ref{table:reference} presents a detailed comparison of assumptions with existing analyses.
% and our $\ell$-smooth assumption is the weakest assumption in MOO and even the weakest assumption in single-task optimizations.

%\vspace{0.1cm}
\noindent\textbf{Convergence analysis.}
% Generalized smooth MGDA algorithm and its stochatsic variant are easy to implement by updating the weights $w$ of objectives and model parameters $x$ simultaneously via 
% a single-loop structure. 
% A warm-start initialization sub-procedure is also introduced at the beginning of both algorithms to ensure a sufficiently small iteration-wise CA distance under the single-loop structure. We also propose a computation- and memory-efficient variant named MGDA-FA by updating the objective weights $w$ using only forward passes of $F(\cdot)$ rather than gradient $\nabla F$, which effectively reduces $O(K)$ time and space to $O(1)$ without hurting the performance guarantee.        
% We provide a comprehensive analysis of MGDA
% under the generalized $\ell$-smooth condition in both deterministic and stochastic settings. 
% {\color{red} KY: perhaps we  wan to mention that 'Stochastic MGDA uses double-sampling strategy from \cite{xiao2024direction} to ensure unbiased multi-gradient approximation.'}
We first show that the vanilla MGDA method can converge to an $\epsilon$-accurate Pareto stationary point, while guaranteeing a small $\epsilon$-level average CA distance. A warm start process is introduced before the main loop of MGDA to achieve a more aggressive $\epsilon$-level iteration-wise CA distance. In the stochastic setting, we provide the convergence guarantee for MGDA with a double sampling, which was introduced by \cite{xiao2024direction,chen2024three} to obtain unbiased multi-gradient approximations. Furthermore, 
% because it contributes to have a more precise control on it. In addition, the stochastic MGDA employs the double-sampling strategy from \citep{xiao2024direction} to ensure the unbiased multi-gradient approximation. 
we analyze a computation- and memory-efficient variant of MGDA, named MGDA with fast approximation (MGDA-FA), which updates the objective weights $w$ using only forward passes of $F(\cdot)$ rather than gradient $\nabla F$, effectively reducing $O(K)$ time and space to $O(1)$ without hurting the performance guarantee.

\vspace{0.1cm}
\noindent\textbf{Sample complexity comparison.} To achieve an $\epsilon$-accurate Pareto stationary point and an $\epsilon$-level average CA distance, we show that MGDA require $\mathcal O(\epsilon^{-2})$ and $\mathcal O(\epsilon^{-4})$ samples in the deterministic and stochastic settings, respectively. {\bf Both of these complexities match with the existing best results.}  
Furthermore, to achieve an $\epsilon$-level iteration-wise CA distance, MGDA with warm start requires an increased number of samples, at the order of $\mathcal{O}(\epsilon^{-11})$ and $\mathcal{O}(\epsilon^{-17})$ respectively, in both deterministic and stochastic scenarios, due to smaller step sizes and mini-batch data sampling, shown in \Cref{table:reference}. Typically, achieving an {\bf $\epsilon$-level iteration-wise CA} distance results in much higher sample complexity, such as $\mathcal O(\epsilon^{-24})$ in \cite{fernando2022mitigating}, $\mathcal O(\epsilon^{-12})$\setcounter{footnote}{0} \footnote{The order differs from \citep{chen2024three} due to different definitions of the $\epsilon$-accurate Pareto stationarity and is taken when both $\epsilon$-accurate Pareto stationarity and $\epsilon$-level iteration-wise CA distance can be achieved.} in \cite{chen2024three} and $\mathcal O(\epsilon^{-12})$ in \cite{xiao2024direction} for non-convex stochastic setting. Moreover, we show that MGDA-FA achieves the same performance guarantee as vanilla MGDA.

 % GSMGrad and SGSMGrad require an increased number $\mathcal O(\epsilon^{-11})$ and $\mathcal O(\epsilon^{-17})$ of samples in the deterministic and stochastic settings, due to smaller step sizes and data sampling noise, respectively.   

 % the total sample complexities required are $\mathcal O(\epsilon^{-2})$ for deterministic setting, and $\mathcal O(\epsilon^{-4})$ for stochastic setting. These complexities match the optimal level for results of single-task optimizations. To guarantee the iteration-wise CA distance in the order of $\mathcal O(\epsilon)$, a warm start process is required to bound the initial CA distance. The sample complexity increases to $\mathcal O(\epsilon^{-11})$  for the deterministic setting due to smaller step sizes and $\mathcal O(\epsilon^{-17})$ due to the leveraging of mini-batches for the stochastic setting. Moreover, GSMGrad-FA has the same convergence rate as GSMGrad.

% \vspace{0.2cm}
% \noindent\textbf{Supportive experiments.} Our experiments on the MTL benchmark  Cityscapes~\citep{cordts2016cityscapes} and NYU-v2 \citep{silberman2012indoor} validate our theory and demonstrate the effectiveness of our proposed algorithms. 

% conduct experiments on the MTL benchmark  Cityscapes \citep{cordts2016cityscapes} and demonstrate that the proposed algorithm GSMGrad can achieve improved performance and a better balance on different tasks than existing state-of-the-art methods. Besides, we show that GSMGrad-FA can achieve a comparable result and will be more useful when there is a memory limitation.
\vspace{-0.3cm}
\subsection{Related Works}
\vspace{-0.2cm}

\textbf{Gradient-based multi-objective optimization.}  
A variety of gradient manipulation techniques have emerged for simultaneous learning of multiple tasks. One prevalent category of methods adjusts the weights of various objectives according to factors such as uncertainty~\citep{kendall2018multi}, gradient norm~\citep{chen2018gradnorm}, and training complexity~\citep{guo2018dynamic}. Methods based on MOO have garnered increased attention due to their systematic designs, enhanced training stability, and model-agnostic nature. For instance, \cite{sener2018multi} framed Multi-Task Learning (MTL) as a MOO problem and introduced an optimization method akin to MGDA~\citep{desideri2012multiple}.
Afterward, many MGDA-based methods have been proposed to mitigate gradient conflict with promising empirical performance. Among them, PCGrad  \citep{yu2020gradient} avoids conflict by projecting the gradient of each task on the norm plane of other tasks. GradDrop~\citep{chen2020just} randomly drops out conflicted gradients. CAGrad~\citep{liu2021conflict} adds a constraint on the update direction to be close to the average gradient. NashMTL~\citep{navon2022multi} and 
FairGrad~\citep{ban2024fair} formulated MTL as a bargaining game and a resource allocation problem, respectively. Theoretically, \cite{fernando2022mitigating} proposed a provably convergent stochastic MOO method named MoCo
% the understanding of the convergence and complexity of MOO is not well-developed until recently and few of them focus on the stochastic settings.
based on an auxiliary tracking variable for gradient approximation.  \cite{chen2024three} characterized the trade-off among optimization, generalization, and conflict avoidance in MOO. 
\cite{xiao2024direction} proposed and analyzed a stochastic MOO method named SDMGrad with a preference-oriented regularizer. However, all these works rely on the $L$-smoothness and bounded-gradient assumptions. In contrast, this paper focuses on the MOO problems with generalized $\ell$-smooth objectives.

\begin{table}[t]
\centering\small
% \vspace{-0.3cm}
\setlength{\tabcolsep}{0.8mm}{
% \begin{tabular}{|c c c c |} 
%  \hline
%  Method& Smoothness \footnotemark[1] & Assumption\footnotemark[2]&Sample Complexity \\ [0.3ex] 
%  \hline\hline
%  SMG \citep{liu2021stochastic} & (LS) &(BG)&N/A\footnotemark[3]\\
%  CR-MOGM\citep{zhou2022convergence} & (LS)  &(BF), (BG)&$\mathcal O(\epsilon^{-4})$ \\ 
%   MoCo\citep{fernando2022mitigating} & (LS)   &(BF), (BG)&$\mathcal O(\epsilon^{-4})$ \\ 
% MoDo\citep{chen2024three} & (LS)  &(BG)&$\mathcal O(\epsilon^{-4})$\\
% SDMGrad\citep{xiao2024direction}  & (LS)  & (BG)&$\mathcal O(\epsilon^{-4})$\\
% {\bf Stochastic MGDA (this paper)} &(GS)  & N/A&$\mathcal O(\epsilon^{-4})$\\
%  \hline
\begin{tabular}{|c c c c c|} 
 \hline
 Method& Smoothness \footnotemark[1] & Assumption\footnotemark[2]& Setting & Complexity \footnotemark[3]\\ [0.3ex] 
 \hline\hline
 CAGrad \citep{liu2021conflict} & (LS) & (OD) & Deterministic & N/A\\
 PCGrad \citep{yu2020gradient}  &(LS)& (C) or (BC) & Deterministic & N/A\\
 % FAMO \citep{liu2024famo} &(LS)&& Deterministic\\
 % FairGrad \cite{ban2024fair} &(LS)&& Deterministic\\
 SMG \citep{liu2021stochastic} & (LS) &(C) or (BC)& Stochastic & N/A\\
 CR-MOGM \citep{zhou2022convergence} & (LS)  &(BF) and  (BG)& Stochastic & $\mathcal{O}(\epsilon^{-4})$\\ 
  MoCo \citep{fernando2022mitigating} & (LS)   &(BF) and (BG)& Stochastic & $\mathcal{O}(\epsilon^{-4})$\\ 
MoDo \citep{chen2024three} & (LS)  &(BG)& Stochastic & $\mathcal{O}(\epsilon^{-4})$\\
SDMGrad \citep{xiao2024direction}  & (LS)  & (BG)& Stochastic & $\mathcal{O}(\epsilon^{-4})$\\
{\bf MGDA (Them. \ref{thm:gddeterministic} and \ref{thm:consistsmallcadistance} )} &{\bf (GS) } & {\bf N/A}&{\bf  Deterministic} & $\mathcal{O}(\epsilon^{-2})$\\
{\bf Stochastic MGDA (Them. \ref{thm:gdstochastic} and \ref{thm:sgdeachca})} &{\bf (GS) } & {\bf N/A}& {\bf Stochastic} & $\mathcal{O}(\epsilon^{-4})$\\
 \hline
 
\end{tabular}}
\caption{{Comparison to assumptions in existing analyses. MGDA \citep{desideri2012multiple} assumes the access of optimal update direction and step size for each iteration and gets an asymptotic result thus is omitted in the table.
Explanation on the upper footmarks: $1:$ (LS) indicates that the objectives are standard $L$-smooth while (GS) the objectives are generalized $\ell$-smooth as defined in Definition \ref{define:rlsmooth2}; $2:$ (OD) denotes the assumption of optimal direction in each iteration, (C) denotes the convex loss function assumption, (BC) denotes a lower bound on multi-task curvature, which is defined as $H(f,x,x')=\int_0^1\nabla f(x)^\top\nabla^2f(x+a(x-x'))\nabla f(x)da$ for a function $f$,
(BF) denotes the bounded function value assumption and (BG) denotes the bounded gradient assumption; $3:$ Sample complexity achieving $\epsilon$-accurate Pareto stationary point. 
N/A denotes no exploration on non-convex settings.
% $3:$ the analysis in \citep{liu2021stochastic} focuses on convex objective functions.
}
}
\label{table:reference}
\vspace{-0.5cm}
\end{table}

\noindent
\textbf{Generalized smoothness.} 
% The standard $L$-smoothness condition \citep{nemirovskij1983problem, ghadimi2013stochastic} has been widely studied. 
The generalized $(L_0,L_1)$-smoothness was firstly proposed by \cite{zhang2019gradient}, which was observed from extensive empirical experiments in training neural networks. A clipping algorithm was developed by \cite{zhang2019gradient} and the convergence rate was provided. Later, \cite{jin2021non} analyzed the convergence of a normalized momentum method. The SPIDER algorithm was also applied to solve generalized smooth problems in \cite{reisizadeh2023variance,chen2023generalized}, where \cite{chen2023generalized} studied a new notion of $\alpha$-symmetric generalized smoothness, which includes $(L_0,L_1)$-smoothness as a special case. Very recently, a new generalized $\ell$-smoothness condition was studied in \cite{li2024convex,li2024convergence2}, which is the weakest smoothness condition and includes all the smoothness conditions discussed above. However, all the existing works on generalized smoothness are limited to single-task optimizations and the understanding of MOO is insufficient. This paper provides the first study of MOO under the generalized $\ell$-smoothness condition.

\vspace{-0.2cm}
\section{Preliminaries}
\vspace{-0.2cm}
\subsection{Generalized smoothness}
The standard $L$-smoothness condition is widely investigated in existing optimization studies \citep{ghadimi2013stochastic,ghadimi2016mini}, which assumes a function $f: \mathcal X \to \mathbb R$ to be $L$-smooth if there exists a bounded constant $L$ such that for any $x,y\in \mathcal X$,
% \begin{flalign}
    $\|\nabla f(x)-\nabla f(y)\|\le L\|x-y\|.$
% \end{flalign}
Nevertheless, recent studies show that in the training of neural networks such as LSTM models \citep{zhang2019gradient}, transformers \citep{crawshaw2022robustness}, distributionally robust optimization \citep{jin2021non} and high-order polynomials functions \citep{chen2023generalized}, the standard $L$-smoothness assumption does not hold. Instead, a generalized $(L_0,L_1)$-smoothness assumption was observed and studied in the training of LSTM models in \cite{zhang2019gradient}, which assumes that for any $x\in \mathcal X$,
% \begin{flalign}
    $\|\nabla^2 f(x) \|\le L_0+L_1\|\nabla f(x)\|. $
% \end{flalign}
This assumption implies the Lipschitz constant is potentially unbounded and reduces to the $L$-smoothness if $L_1=0$. Later, a more generalized assumption was proposed and studied in \cite{li2024convex}:
\begin{definition}\label{define:rlsmooth2}
(Generalized $\ell$-smoothness, Definition 1 in \cite{li2024convex}). A real-valued differentiable function $f:\mathcal{X}\rightarrow \mathbb{R}$ is  generalized $\ell$-smooth if  $\|\nabla^2 f(x)\|\le \ell(\|\nabla f(x)\|)$  almost everywhere in $\mathcal X$, {\bf where $\ell : [0, +\infty) \rightarrow (0, +\infty)$ is a continuous non-decreasing function.}
\end{definition}
The $(L_0,L_1)$-smoothness is a special case of generalized $\ell$-smoothness, where $\ell(a)=L_0+L_1a$. 
%In this paper, we study the case where the $\ell$ function is sub-quadratic, which implies function $\varphi$ is monotonically increasing.
Another definition of generalized smooth is widely used and equivalent to the $\ell$-smoothness:
\begin{definition}\label{define:rlsmooth}
(($r, \ell$)-smoothness, Definition 2 in \cite{li2024convex}). A real-valued differentiable function $f:\mathcal{X}\rightarrow \mathbb{R}$ is $(r, \ell)$-smooth 
if 1) for any $x \in \mathcal{X}, B(x,r(\|\nabla f(x)\|)) \in \mathcal{X}$, and 2) for any $x_1,x_2 \in B(x,r(\|\nabla f(x)\|)), \|\nabla f(x_1)- \nabla f(x_2)\| \leq \ell(\|\nabla f(x)\|) \|x_1 - x_2\|$, where for continuous functions $r, \ell : [0, +\infty) \rightarrow (0, +\infty)$, $r$ is non-increasing, $\ell$ is non-decreasing and $B(x, R)$ is the Euclidean ball centered at $x$ with radius $R$.
\end{definition}
In $B(x, r(\|\nabla  f(x)\|))$, $f$ is also $L$-smooth where $L=\ell(\|\nabla f(x)\|)$. Definitions \ref{define:rlsmooth2} and \ref{define:rlsmooth} are equivalent \citep{li2024convex}:
An $(r, \ell)$-smooth function is $\ell$-smooth; and a $\ell$-smooth function satisfying Assumption \ref{ass:diffandlowerbound}
is $(r, m)$-smooth with $m(u) := \ell(u + a)$ and $r(u) := a/m(u)$ for any $a > 0$.
\subsection{Pareto concepts in multi-objective optimization (MOO)}
% Since MOO can hardly find a point that minimizes all individual objective functions simultaneously, 
MOO aims to find points at which there is no common descent direction for all objectives. Considering points $x_1, x_2\in\mathbb{R}^m$, we claim that $x_1$ dominates $x_2$ if $f_i(x_1)\geq f_i(x_2)$ for all $i\in[K]$ and $F(x_1)\neq F(x_2)$. We say a point is \textit{Pareto optimal} if it is not dominated by any other point. In other words, we cannot improve one objective without compromising another when we reach a \textit{Pareto optimal point}. In non-convex settings, MOO aims to find a \textit{Pareto stationary point} defined as follows.
\begin{definition}\label{def:pspoints}
$x\in\mathbb{R}^m$ is a Pareto stationary point if $\min_{w\in\mathcal{W}}\|\nabla F(x)w\|^2=0$, {where  $\mathcal{W}$ is the probability simplex over $[K]$}. $x$ an $\epsilon$-accurate Pareto stationary point if $\min_{w\in\mathcal{W}}\|\nabla F(x)w\|^2\leq\epsilon^2$.
%under deterministic setting and $\mathbb{E}[\min_{w\in\mathcal{W}}\|\nabla F(x)w\|^2]\leq\epsilon^2$ under stochastic setting, where $\mathcal{W}$ is the probability simplex defined on $[K]$.
\end{definition}
 
\subsection{Existing MOO Algorithms}
\textbf{Deterministic MOO.} One of the big challenges of MOO is the gradient conflict, i.e., the gradients of different objectives may vary heavily in scale such that the largest gradient dominates the update direction. As a result, the performance of those objectives with smaller gradients \citep{yu2020gradient} may be significantly compromised. As the most fundamental MOO algorithm, MGDA tends to find a balanced update direction for all objectives by considering the minimum improvement across all objectives and maximizes it by solving the following problem
\begin{align}\label{eq:maxmin}
\max_{d\in\mathbb{R}^m}\min_{i\in[K]}\Big\{\frac{1}{\alpha}(f_i(x)-f_i(x-\alpha d))\Big\}\approx\max_{d\in\mathbb{R}^m}\min_{i\in[K]}\langle\nabla f_i(x),d\rangle,
\end{align}
where $\alpha$ is the step size, $d$ is the update direction, and the first-order Taylor approximation is applied at $x$. To efficiently solve the above problem in \cref{eq:maxmin}, we substitute the following relation
\begin{align}\label{eq:maxmind}
\max_{d\in\mathbb{R}^m}\min_{i\in[K]}\langle \nabla f_i(x),d\rangle-\frac{1}{2}\|d\|^2=\max_{d\in\mathbb{R}^m}\min_{w\in\mathcal{W}}\Big\langle\sum_{k=1}^K\nabla f_i(x)w_i,d\Big\rangle-\frac{1}{2}\|d\|^2,
\end{align}
where the regularization term $-\frac{1}{2}\|d\|^2$ is to regulate the magnitude of our update direction. The solution to the problem in \cref{eq:maxmind} can be obtained by solving the following problem \citep{desideri2012multiple}: 
\begin{align}\label{eq:cadirection}
d^*=\nabla F(x)w^*;\;\; s.t.\;w^*\in\arg\min_{w\in\mathcal{W}}\frac{1}{2}\|\nabla F(x)w\|^2.
\end{align}
The above approach has been widely used in various variants of MGDA such as SDMGrad, CAGrad, and PCGrad \citep{xiao2024direction,yu2020gradient,liu2021conflict}.

%\vspace{0.2cm}
\noindent\textbf{Stochastic MOO.} SMG  \citep{liu2021stochastic} is the first stochastic MGDA. It directly replaces the gradients with stochastic gradients and the update rule becomes
\begin{align*}
    d_s^*=\nabla F(x;s)w_s^*;\;\;s.t.\;w_s^*\in\arg\min_{w\in\mathcal{W}}\frac{1}{2}\|\nabla F(x;s)w\|^2,
\end{align*}
where $\nabla F(x;s)$ is the estimate of $\nabla F(x)$ based on the sample $s$.
However, this leads to a \textbf{biased} gradient estimation of the update direction $d_s^*$, and thus it requires an increasing batch size. To solve this issue, another work MoCo \citep{fernando2022mitigating} introduces a tracking variable $Y$ as a stochastic estimation of the true gradient. Afterward, a double-sampling strategy is proposed by \cite{chen2024three,xiao2024direction} to generate a near-unbiased update direction. 

{\bf Strong assumptions in analysis.} All the works mentioned above require bounded gradients such as \cite{chen2024three,fernando2022mitigating,xiao2024direction} or $L$-smoothness such as \cite{ban2024fair,liu2021conflict,navon2022multi,yang2024federated}. Their analyses do not apply to generalized $\ell$-smoothness objectives, since the Lipschitz constant is potentially infinity. %Thus the step size we choose becomes $0$, making the algorithms diverge. 

% {\color{red} (solved) KY: Mention about their assumptions like bounded gradients and also mention none of these analyses can be applied to generalized smoothness case}

% \begin{align}
% w_\rho^*=\arg\min_{w\in\mathcal{W}}\frac{1}{2}\|\nabla F(x)w\|^2+\frac{\rho}{2}\|w\|^2
% \end{align}
% \begin{remark}[On the Lipschitz continuity of $w_\rho^*(x)$]
    
% \end{remark}
\subsection{Conflict-avoidant (CA) direction and CA distance}\label{sec:CAs}
We call the update direction $d^*$ in \cref{eq:cadirection} the \textit{conflict-avoidant} (CA) direction since it mitigates gradient conflict. Though it may not be feasible to calculate the exact CA direction, we aim to find an update direction to be close to the CA direction. Therefore, measuring the gap between the CA direction and the estimated update direction is important, which we define as the CA distance.
\begin{definition}\label{def:cadistance}
$\|d-d^*\|$ is the CA distance between estimated update direction $d$ and CA direction $d^*$.
%under deterministic setting while $\|\mathbb{E}_\xi[d_\xi]-d^*\|$ \zhang{should it be $\mathbb{E}_\xi\|[d_\xi-d^*\|]$?} denotes CA distance between stochastic update direction $d_\xi$ and CA direction $d^*$ under stochastic setting.
\end{definition}
The larger the CA distance is, the further the estimated update direction will be away from the CA direction, and the more conflict there will be. In single-loop algorithms, MoCo \citep{fernando2022mitigating} ensures the average CA distance over iteration is of the order of $\epsilon$, while MoDo \citep{chen2024three} guarantees the $\epsilon$-order iteration-wise CA weight distance (as stated in their Theorem 3.4). Meanwhile, the double-loop algorithm SDMGrad \citep{xiao2024direction} guarantees an $\epsilon$-order CA distance in every iteration. In this work, we analyze the CA distance in both cases and provide convergence results.

\vspace{-0.2cm}
\section{MGDA Algorithms under Generalized Smoothness}
\vspace{-0.2cm}
% In this section, we present MGDA and its stochastic version both are easy to implement with a simple single-loop structure. The warm start option depends on the requirement of CA distance. We also show the efficient variant of it, called MGDA-FA with constant-level computational and memory costs.
% a practical implementation method in our algorithm which is faster in speed and more memory-saving.
\subsection{MGDA with and without Warm Start}
It has been shown in \cref{eq:cadirection} that MGDA needs to approximate the optimal weight $w^*$ and the optimal updating direction $d^*$.  
% We start to adopt MGDA in our method by computing an approximated weight $w_t$ and an update direction $d_t$  according to \cref{eq:cadirection} where $t$ is the iteration number.
However, since the optimal weight $w^*$ of the convex function is not unique, we deal with this issue by adding an $\ell_2$ regularization term and the problem becomes
\begin{align}\label{eq:weightrho}
w_\rho^*=\arg\min_{w\in\mathcal{W}}\frac{1}{2}\|\nabla F(x)w\|^2+\frac{\rho}{2}\|w\|^2.
\end{align}
Besides the benefit of a unique solution, adding an $\ell_2$ regularization term also makes $w_\rho^*(x)$ Lipschitz continuous \citep{fernando2022mitigating}. Note that $w^*(x)$ may not be Lipschitz continuous because  $\nabla F(x)^\top\nabla F(x)$ may not be positive definite. Nevertheless, the analysis of CA distance is difficult because $w^*$ may not be Lipschitz continuous. Thus, we will characterize the gap between $w^*$ and $w_\rho^*$ plus the change of $w_\rho^*$ after adding this $\ell_2$ regularization term. As a result, the update rules become Lines 5-6 in \Cref{alg:1}. We first update $w_t$ by a projected gradient descent process and compute the update direction $d_t=\nabla F(x_t)w_t$ to update model parameters.

For our single-loop algorithm, CA distance is proportional to the term $\|w_t-w_{t,\rho}^*\|$, which decreases as the algorithm iterates with some error terms controlled by appropriately chosen small step sizes. If we initialize $w_0$ randomly, $\|w_0-w_{0,\rho}^*\|$ will be a constant order, and so will the first CA distance. Meanwhile, we can only get an $\epsilon$-order CA distance after a certain iteration number $t^\prime>1$ when $\|w_{t^\prime}-w_{t^\prime,\rho}^*\|$ takes an $\epsilon$ order. Thus, we introduce an extra warm start process using \Cref{alg:warmstart} to guarantee the new $w_0$ is close enough to $w_{0,\rho}^*$ and a small level CA distance in every iteration. {\bf However, this warm start process is not needed if we only require a small averaged CA distance.}

\begin{algorithm}[htbp]
 \small
   \caption{Single loop MGDA with and without warm start}
   \label{alg:gs}
\begin{algorithmic}[1]\label{alg:1}
    \STATE {\bfseries Initialize:}  model parameters $x_0$, weights $w_0$ and a constant $\rho$
    % \STATE{$x_1=x_0-\alpha_0\nabla F(x_0)w_0$}
    \STATE{\textit{Option I: }$w_0$=\textbf{warm-start}($w_0$, $x_0$, $\rho$) \textcolor{blue}{\# for analyzing iteration-wise CA distance}}
    \STATE{\textit{Option II: }$w_0\leftarrow w_0$ \hspace{2.25cm} \textcolor{blue}{\# for analyzing averaged CA distance}}
    \FOR{$t=0, 1,...,T-1$}
%     \STATE{$w_{t+1}=\proj_\mathcal{W} \big (w_t-\beta_t
% [G(x_t)^\top G(x_t)w_t+\rho w_t]\big)$, \textcolor{red}{FAMO}}
    \STATE{$w_{t+1}=\Pi_\mathcal{W}\big(w_t-\beta[\nabla F(x_t)^\top\nabla F(x_t)w_t+\rho w_t]\big)$}
    \STATE{$x_{t+1}=x_t-\alpha \nabla F(x_t)w_t$}
    
\ENDFOR
\end{algorithmic}
\end{algorithm}

\vspace{-0.3cm}
\begin{algorithm}[htbp]
 \small
   \caption{\textbf{warm-start}($w_0$, $x_0$, $\rho$)}
   % \label{alg:warmstart}
\begin{algorithmic}[1]\label{alg:warmstart}
    % \STATE {\bfseries Input:}  model parameters $x_0$ and weights $w_0$
    % \STATE{$x_1=x_0-\alpha_0\nabla F(x_0)w_0$}
    \FOR{$n=0, 1, ..., N-1$}
    \STATE{$w_{n+1}=\Pi_\mathcal{W}\big(w_n-\beta'[\nabla F(x_0)^\top\nabla F(x_0)w_n+\rho w_n]\big)$}
    \ENDFOR
    \STATE{\textbf{Output} $w_N$} 
\end{algorithmic}
\end{algorithm}

% \subsection{Clipped Multi-objective Gradient descent}
% \begin{algorithm}[htbp]
%    \caption{Clipped Multi-objective Gradient descent (CMGrad)}
%    \label{alg:clip}
% \begin{algorithmic}
%     \STATE {\bfseries Initialize:}  model parameters $x_0$ and weights $w_0$
%     \FOR{$t=0, 1,...,T-1$}
%     \STATE{$x_{t+1}=x_t-h_t \nabla F(x_t)w_t$, where $h_t:=\min\{\textcolor{red}{confirm it later}\}$}
%     \STATE{update $w_t$ according to \cref{eq:updaterule}}
% \ENDFOR
% \end{algorithmic}
% \end{algorithm}
\vspace{-0.3cm}
\subsection{Stochastic MGDA with Double Sampling}
In the stochastic setting, our algorithm keeps the same structure, having a warm start process if we aim to control the CA distance in every iteration. In \Cref{alg:warmstart}, we do the same projected gradient descent without using stochastic gradients. This is because we only need to
compute $\nabla F(x_0)^\top\nabla F(x_0)$ once and reuse it in the whole loop, which does not bring a computational burden. Then in the update loop, we update the weight and model parameters accordingly. We use a double-sampling strategy here to make the weight gradient estimator unbiased \citep{xiao2024direction} such that $d_t$ is a near-unbiased multi-gradient
$\mathbb{E}[\nabla G_2(x_t)^\top\nabla G_3(x_t)w_t+\rho w_t]=\nabla F(x_t)^\top\nabla F(x_t)w_t+\rho w_t,$
where $\nabla G_2(x_t)$ and $\nabla G_3(x_t)$ are independent and unbiased estimates of $\nabla F(x_t)$. 
Similarly, we do not involve a warm start process if we require the average CA distance to be small.

\vspace{-0.2cm}
\subsection{MGDA with Fast Approximation}
It can be seen from \Cref{alg:1} and \Cref{alg:2} that MGDA requires $\mathcal{O}(K)$ space and time to compute and store all task gradients at each iteration for updating the weight $w_t$. This becomes a drawback when the number of tasks or the model size is large. Motivated by \cite{liu2024famo}, one solution is to use the Taylor Theorem to approximate the gradient for updating the weight $w_t$ as 
\begin{align*}
{F}(x_t)-{F}(x_{t+1})=\nabla F(x_t)^\top(x_t-x_{t+1})-R(x_t)=\alpha\nabla F(x_t)^\top\nabla F(x_t)w_t-R(x_t), 
\end{align*}
where $R(x_t)$ is the remainder term and it takes the order $R(x_t)=o(\|x_t-x_{t+1}\|^2)$, which can be made sufficiently small by adjusting the step size. By incorporating this fast approximation (FA) in \Cref{alg:1}, we then  present MGDA-FA in \Cref{alg:gs-famo} (shown in Appendix \ref{sec:algorithm4}), where $x_t$ is updated  along the update direction $d_t=\nabla F(x_t)w_t$ to get $x_{t+1}$ following by the update rule of $w_t$ 
\begin{align}\label{eq:updaterule}
w_{t+1}=\Pi_\mathcal{W}\Big(w_t-\beta\Big[\frac{F(x_t)-F(x_{t+1})}{\alpha}+\rho w_t\Big]\Big).
\end{align}
As a result, in the model parameters update process, MGDA-FA only requires one backward process by calculating the gradient of $F(x_t)w_t$ w.r.t. $x_t$ without storing it, and additional forward processes to compute $F(x_{t+1})$ in the weight update process. This approach saves computational and memory costs in the practical implementation significantly. {\bf Most importantly, we provide a theoretical guarantee for this efficient method (in \cref{eq:updaterule})}. %\zhang{is it ok to move the famo algo to the appendix?}
% theoretical analysis when the CA distance is guaranteed to be small in every iteration under the deterministic setting.
% {\color{red} KY: write out the algorithm for this one}
\begin{algorithm}[h]
   % \vspace{-0.1cm}
   \small
   \caption{Stochastic MGDA with Double Sampling}
\begin{algorithmic}[1]\label{alg:2}
    \STATE {\bfseries Initialize:}  model parameters $x_0$, weights $w_{0}$ and a constant $\rho$
    % \FOR{$n=0, 1, ..., N-1$}
    % \STATE{$w_{0,n+1}=\Pi_\mathcal{W}\big(w_{0,n}-\beta_n[\nabla F(x_0)^\top\nabla F(x_0)w_{0,n}+\rho w_{0,n}]\big)$}
    % \ENDFOR
    \STATE{\textit{Option I: }$w_0$=\textbf{warm-start}($w_0$, $x_0$, $\rho$) \textcolor{blue}{\# for analyzing iteration-wise CA distance}}
    \STATE{\textit{Option II: }$w_0\leftarrow w_0$ \hspace{2.35cm}\textcolor{blue}{\# for analyzing averaged CA distance}}
    \FOR{$t=0, 1,...,T-1$}
%     \STATE{$w_{t+1}=\proj_\mathcal{W} \big (w_t-\beta_t
% [G(x_t)^\top G(x_t)w_t+\rho w_t]\big)$, \textcolor{red}{FAMO}}
        \STATE{$x_{t+1}=x_t-\alpha 
 \nabla G_1(x_t)w_t$}
        \STATE{$w_{t+1}=\Pi_\mathcal{W}\big(w_t-\beta[\nabla G_2(x_t)^\top\nabla G_3(x_t)w_t+\rho w_t]\big)$ \textcolor{blue}{\# double sampling}}

\ENDFOR
\end{algorithmic}
\end{algorithm}
\vspace{-0.4cm}

\section{Convergence Analysis under Average CA distance}\label{sec:averageca}
In this section, we provide the theoretical results for Algorithms \ref{alg:1} and \ref{alg:2} {\bf without warm starts} to obtain an $\epsilon$-accurate Pareto stationary point, with the average CA distance over iterations in $\mathcal O(\epsilon)$. 

\subsection{Deterministic setting}
% We provide the analysis for Algorithm \ref{alg:1} for our deterministic setting with the following assumptions:
\begin{assumption}\label{ass:diffandlowerbound}
Each objective function $f_i$ $\forall i\in[K]$ is twice differentiable and lower bounded by $f_i^* := \inf_{x\in\mathbb{R}^m} f_i(x) > -\infty$.
\end{assumption}
\begin{assumption}\label{ass:lsmooth}
Each objective function $f_i$ $\forall i\in[K]$ is generalized $\ell$-smooth defined in Definition \ref{define:rlsmooth2}, where $\ell : [0, +\infty) \rightarrow (0, +\infty)$ is a continuous non-decreasing function such that $\varphi(a)=\frac{a^2}{2\ell(2a)}$ is monotonically increasing for any $a\ge 0$.
\end{assumption}
These assumptions are the most relaxed ones in existing MOO works since they directly assume objective smoothness or gradient/function value boundness \citep{liu2021conflict,fernando2022mitigating,navon2022multi,xiao2024direction,chen2024three,yang2024federated,ban2024fair}. It also includes the widely studied standard $L$-smoothness \citep{nemirovskij1983problem,ghadimi2013stochastic,ghadimi2016mini}, $(L_0,L_1)$-smoothness \citep{zhang2019gradient} as special cases. Moreover, for any $0\le\gamma\le 2$ and $x\in \mathcal X$, our assumption even holds for function $f$ such that $\|\nabla^2 f(x)\|\le L_0+L_1\|\nabla f(x)\|^\gamma$, where $\gamma$ are limited to $[0,1]$ in \cite{chen2023generalized}.

%We then provide our theoretical results. 
Let $c> 0$ and $F>0$ be some constants such that 
$
    \Delta+c\le F,
$
where $\Delta=\max_{i\in[K]} \{f_i(x_0)-{f_i^*}\}$.
Define $M=\sup\{z\ge0|\varphi(z)\le F\}$. We then have the following convergence rate for Algorithm \ref{alg:1}:
\begin{theorem}\label{thm:gddeterministic}
 Let Assumptions \ref{ass:diffandlowerbound} and \ref{ass:lsmooth} hold. Set {\small$\beta=
\mathcal O(\frac{1}{M^2}), \alpha=
\mathcal O(\frac{1}{M^2}+\frac{1}{M\ell(M+1)}), 
T= \max\Big(\Theta \big(\frac{1}{\alpha \epsilon^2}\big), \Theta\left(\frac{1}{\beta \epsilon^2}\right)\Big)$} and $ \rho= \mathcal O(\epsilon^2)$. We then have that 
    $\frac{1}{T}\sum_{t=0}^{T-1}\|\nabla F(x_t)w_t\|^2\le \epsilon^2.$
\end{theorem}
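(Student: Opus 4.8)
The plan is to combine three ingredients: a local descent inequality valid under $\ell$-smoothness, an inductive argument keeping all gradient norms bounded by $M$, and a recursion controlling the weight approximation error $\|w_t - w_{t,\rho}^*\|$, where $w_{t,\rho}^* := w_\rho^*(x_t)$ is the regularized min-norm weight from \cref{eq:weightrho}. Throughout I would write $d_t = \nabla F(x_t)w_t$ for the update direction and $d_{t,\rho}^* = \nabla F(x_t)w_{t,\rho}^*$ for its regularized target.

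First I would establish a per-objective descent bound. Since each $f_i$ is $\ell$-smooth, by \Cref{define:rlsmooth} it is $L$-smooth with $L = \ell(\|\nabla f_i(x_t)\|)$ inside the ball $B(x_t, r(\|\nabla f_i(x_t)\|))$. Provided $\alpha\|d_t\| \le r(\|\nabla f_i(x_t)\|)$, the iterate $x_{t+1} = x_t - \alpha d_t$ stays in this ball and the local descent lemma gives $f_i(x_{t+1}) \le f_i(x_t) - \alpha\langle \nabla f_i(x_t), d_t\rangle + \tfrac{L\alpha^2}{2}\|d_t\|^2$. The MGDA geometry then supplies the crucial lower bound: the simplex optimality of $w_{t,\rho}^*$ yields $\langle \nabla f_i(x_t), d_{t,\rho}^*\rangle \ge \|d_{t,\rho}^*\|^2 - O(\rho)$ for every $i$, and writing $d_t = d_{t,\rho}^* + \nabla F(x_t)(w_t - w_{t,\rho}^*)$ transfers this to $d_t$ up to an error proportional to $\|\nabla F(x_t)\|_{\mathrm{op}}^2\|w_t - w_{t,\rho}^*\|$. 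Choosing $\alpha$ so that $\alpha\ell(\cdot) \le 1$ absorbs the quadratic term and yields $f_i(x_{t+1}) \le f_i(x_t) - \tfrac{\alpha}{2}\|d_t\|^2 + \alpha E_t$ with $E_t = O\big(\rho + M^2\|w_t - w_{t,\rho}^*\|\big)$.

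The main obstacle is justifying, for all $t$, that the gradient norms stay bounded so that $L$ and $\|\nabla F(x_t)\|_{\mathrm{op}}$ are uniformly controlled and the step truly lands in the local smoothness ball. This is the heart of generalized-smoothness analysis and is more delicate here than in single-task settings, because the descent is along the combined direction $d_t$ rather than any single $\nabla f_i$. I would run an induction on $t$: assuming $\|\nabla f_i(x_s)\| \le M$ for all $s \le t$ and all $i$, the descent inequality shows each $f_i$ has decreased by a controlled amount, so $f_i(x_{t+1}) - f_i^* \le \Delta + c \le F$; feeding this into the monotone map $\varphi(a) = a^2/(2\ell(2a))$ together with $M = \sup\{z : \varphi(z) \le F\}$ forces $\|\nabla f_i(x_{t+1})\| \le M$, closing the induction. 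The interlocking with the weight error is the subtle point: $E_t$ must be small enough that the accumulated descent never exhausts the budget $F$, which is exactly where $\rho \sim O(\epsilon^2)$ and the small step sizes enter, since the coupling coefficient between $\|w_t - w_{t,\rho}^*\|$ and $\|d_t\|$ must stay below one.

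Finally I would control the averaged weight error and telescope. A single projected-gradient step on the $\rho$-strongly convex regularized objective contracts toward $w_{t,\rho}^*$ by a factor $1 - \Theta(\beta\rho)$, while the minimizer drifts by at most $L_w\alpha\|d_t\|$ because $w_\rho^*(\cdot)$ is Lipschitz (a consequence of the $\ell_2$ regularization, cf. \cite{fernando2022mitigating}); summing the resulting recursion bounds $\tfrac1T\sum_t \|w_t - w_{t,\rho}^*\|$ by terms of order $\tfrac{1}{\beta\rho T} + \tfrac{\alpha}{\beta\rho}\cdot(\text{avg }\|d_t\|)$. Telescoping the descent inequality over $t = 0,\dots,T-1$ and using $f_i(x_T) \ge f_i^*$ gives $\tfrac{\alpha}{2}\sum_t\|d_t\|^2 \le \Delta + \alpha\sum_t E_t$, hence $\tfrac1T\sum_t\|d_t\|^2 \le \tfrac{2\Delta}{\alpha T} + \tfrac2T\sum_t E_t$. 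With $T \ge \Theta(1/(\alpha\epsilon^2))$ the first term is $O(\epsilon^2)$; the second is $O(\rho) + O\big(M^2\cdot \text{avg weight error}\big)$, which the choices $\rho \sim O(\epsilon^2)$ and $T \ge \Theta(1/(\beta\epsilon^2))$ drive to $O(\epsilon^2)$ after resolving the mild self-bounding inequality from the $\sqrt{\text{avg}\|d_t\|^2}$ term, yielding $\tfrac1T\sum_{t=0}^{T-1}\|\nabla F(x_t)w_t\|^2 \le \epsilon^2$.
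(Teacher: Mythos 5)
Your first two ingredients match the paper exactly: the descent lemma obtained by setting $a=1$ in the $(r,\ell)$-smoothness characterization (so $f_i$ is locally $\ell(M+1)$-smooth once $\alpha\|\nabla F(x_t)w_t\|\le \alpha M\le 1/\ell(M+1)$), and the induction that keeps $f_i(x_t)-f_i^*\le F$, hence $\|\nabla f_i(x_t)\|\le M$ via the monotone map $\varphi$ and $M=\sup\{z:\varphi(z)\le F\}$. But your third ingredient — lower-bounding $\langle \nabla f_i(x_t), d_t\rangle$ through the optimality of $w_{t,\rho}^*$ and then controlling the weight error $\|w_t-w_{t,\rho}^*\|$ via a contraction-plus-drift recursion using the Lipschitzness of $w_\rho^*(\cdot)$ — is a genuinely different route, and it cannot deliver the theorem as stated. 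The quantitative obstruction is that the Lipschitz constant of $w_\rho^*$ scales as $L_w = \mathcal{O}(\rho^{-1})$ (\Cref{lemma:rho}), so the per-step drift of the minimizer is $L_w\alpha\|d_t\| = \mathcal{O}(\alpha\|d_t\|/\rho)$ and the steady-state of your recursion is $\mathcal{O}\bigl(\alpha\|d_t\|/(\beta\rho^2)\bigr)$; with the constant step sizes $\alpha,\beta$ and $\rho\sim\epsilon^2$ demanded by this theorem, the resulting error term $M^2\cdot\mathrm{avg}\|w_t-w_{t,\rho}^*\|$ carries a coefficient of order $\epsilon^{-4}$, and your self-bounding inequality $X\le A+B\sqrt{X}$ then yields a vacuous bound $X\lesssim B^2\sim\epsilon^{-8}$. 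A second, independent failure: Theorem \ref{thm:gddeterministic} has \emph{no warm start}, so $\|w_0-w_{0,\rho}^*\|$ is a constant, and the transient $\frac{1}{T}\sum_t(1-\beta\rho)^t$ alone is only $\mathcal{O}(1/(\beta\rho T))$, which with $\rho\sim\epsilon^2$ already forces $T\gtrsim\epsilon^{-4}$ just to reach $\mathcal{O}(\epsilon^2)$. Indeed, your machinery is essentially what the paper uses for the \emph{iteration-wise} CA-distance result (\Cref{thm:consistsmallcadistance}), and there it demonstrably costs $\alpha\sim\epsilon^9$, $\beta\sim\epsilon^4$, $T\sim\epsilon^{-11}$, plus the warm start — not the $\mathcal{O}(\epsilon^{-2})$ complexity claimed here.

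The missing idea is that the averaged guarantee never requires tracking $\|w_t-w_{t,\rho}^*\|$ at all. The paper instead runs a regret-style telescoping against an arbitrary fixed comparator $w\in\mathcal{W}$: writing $-\alpha\langle\nabla F(x_t)w, d_t\rangle = -\alpha\|d_t\|^2 + \alpha\langle w_t-w, \nabla F(x_t)^\top d_t\rangle$ in the descent inequality for $F(x_{t+1})w$, and bounding the cross term by the non-expansiveness of the projected step,
\begin{align*}
\langle w_t-w, \nabla F(x_t)^\top\nabla F(x_t)w_t\rangle \le \frac{1}{2\beta}\left(\|w_t-w\|^2-\|w_{t+1}-w\|^2\right) + 2\rho + \beta KM^2\|d_t\|^2 + \beta\rho^2.
\end{align*}
Summing, the $\|w_t-w\|^2$ differences telescope to $\frac{\alpha}{2\beta}\|w_0-w\|^2$, the $\beta KM^2\|d_t\|^2$ term is absorbed into $-\alpha\|d_t\|^2$ since $\beta KM^2\le 1/4$ and $\alpha\ell(M+1)\le 1/2$, and the leftover error is only $2T\alpha\rho + T\alpha\beta\rho^2$ — crucially containing no inverse powers of $\rho$. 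This single inequality simultaneously closes the induction (the budget $\frac{\alpha}{\beta}+2T\alpha\rho+T\alpha\beta\rho^2$ is kept below $F-\Delta$ by requiring $\alpha\le c_1\beta$, $\rho\le c_2/(2T\alpha)$, $\rho\le\sqrt{c_3/(T\alpha\beta)}$) and gives the final bound $\frac{1}{T}\sum_{t}\|d_t\|^2 \le \frac{2(F(x_0)w-F^*w)}{\alpha T}+\frac{2}{\beta T}+2\beta\rho^2+4\rho\le\epsilon^2$ with constant step sizes and $T\sim\epsilon^{-2}$. Your per-$i$ optimality bound $\langle\nabla f_i(x_t), d_{t,\rho}^*\rangle\ge\|d_{t,\rho}^*\|^2-\mathcal{O}(\rho)$ is correct in itself, but routing the analysis through it (and hence through $\|w_t - w_{t,\rho}^*\|$) is exactly what the comparator trick is designed to avoid.
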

The full version with detailed constants and detailed proof can be found in Appendix \ref{proof:gddeterministic-avg}. \Cref{thm:gddeterministic} provides the first convergence rate to obtain an $\epsilon$-accurate Pareto stationary point for MOO problems with generalized $\ell$-smooth objectives. Moreover, it achieves the optimal sample complexity in the order of $\mathcal O(\epsilon^{-2})$for GD with a single standard $L$-smooth objective \citep{carmon2020lower}. The MOO problems with generalized $\ell$-smooth objectives are challenging due to {\bf two reasons}: 1) $\|\nabla F(x)\|$ is potentially unbounded in our generalized $\ell$-smoothness setting, making all existing analysis in MOO \citep{liu2021conflict,fernando2022mitigating,navon2022multi,xiao2024direction,chen2024three,yang2024federated,ban2024fair} not applicable. 2) the update of $x$ includes all gradient information from each task, making the existing adaptive methods for single generalized smooth functions invalid. 

To solve the challenges in \Cref{thm:gddeterministic}, we find that a bounded function value implies a bounded gradient norm. Thus in our proof, we use induction to show that with parameters selected in \Cref{thm:gddeterministic}, for any $w\in \mathcal W$ and $t\le T$, we have that $F(x_t)w$ is upper bounded by $F$. Consequently, for any $i\in[K]$, we have that $\|\nabla f_i(x)\|\le M$, which solves the unbounded gradient norm problem in our generalized smoothness setting. Then we can show that $\|\nabla F(x_t)w_t\|$ converges.

\begin{corollary}\label{coro:gdaverageca}
Under the same setting in \Cref{thm:gddeterministic}, $\frac{1}{T}\sum_{t=0}^{T-1}\|\nabla F(x_t)w_t-\nabla F(x_t)w_t^*\|^2=\mathcal{O}(\epsilon^2)$.
\end{corollary}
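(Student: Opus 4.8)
The plan is to reduce the averaged CA distance directly to the Pareto-stationarity measure $\frac{1}{T}\sum_{t=0}^{T-1}\|\nabla F(x_t)w_t\|^2$ already controlled in \Cref{thm:gddeterministic}, exploiting the optimality that defines the CA direction rather than any tracking analysis. First I would recall that $w_t^*$ is by definition a minimizer of $w \mapsto \frac{1}{2}\|\nabla F(x_t)w\|^2$ over the simplex $\mathcal{W}$, whereas the iterate $w_t$ produced by Line 4 of \Cref{alg:1} stays in $\mathcal{W}$ for every $t$ because it is obtained through the projection $\Pi_{\mathcal{W}}$. Feeding the feasible point $w_t$ into the minimized objective therefore gives $\|\nabla F(x_t)w_t^*\| \le \|\nabla F(x_t)w_t\|$, i.e.\ the CA direction $d_t^* = \nabla F(x_t)w_t^*$ has norm no larger than that of the update direction $d_t = \nabla F(x_t)w_t$.

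Next I would apply the triangle inequality to the CA distance, $\|d_t - d_t^*\| \le \|d_t\| + \|d_t^*\| \le 2\|\nabla F(x_t)w_t\|$, square both sides to obtain $\|\nabla F(x_t)w_t - \nabla F(x_t)w_t^*\|^2 \le 4\|\nabla F(x_t)w_t\|^2$, and average over $t = 0,\dots,T-1$. Invoking the bound $\frac{1}{T}\sum_{t=0}^{T-1}\|\nabla F(x_t)w_t\|^2 \le \epsilon^2$ from \Cref{thm:gddeterministic} then yields $\frac{1}{T}\sum_{t=0}^{T-1}\|\nabla F(x_t)w_t - \nabla F(x_t)w_t^*\|^2 \le 4\epsilon^2 = \mathcal{O}(\epsilon^2)$, which is exactly the claim.

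The one point that deserves care --- and is really the crux of the argument --- is the inequality $\|d_t^*\| \le \|d_t\|$, which hinges on $w_t \in \mathcal{W}$; this is guaranteed by the projection step but should be stated explicitly, and I would note that it holds for any selection of the (possibly non-unique) minimizer $w_t^*$ since only the optimal value $\min_{w\in\mathcal{W}}\|\nabla F(x_t)w\|$ enters the bound. Beyond that, no smoothness, warm-start, or tracking machinery is needed here: unlike the iteration-wise CA distance, which forces one to control the tracking error $\|w_t - w_{t,\rho}^*\|$ together with the regularization gap $\|\nabla F(x_t)w_{t,\rho}^* - \nabla F(x_t)w_t^*\| = \mathcal{O}(\sqrt{\rho}) = \mathcal{O}(\epsilon)$, the averaged statement collapses onto the already-established descent guarantee. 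If a route avoiding the crude factor-of-two bound were preferred, I would instead split $d_t - d_t^* = \nabla F(x_t)(w_t - w_{t,\rho}^*) + \nabla F(x_t)(w_{t,\rho}^* - w_t^*)$, bound the first term by $M\|w_t - w_{t,\rho}^*\|$ using the gradient bound $\|\nabla F(x_t)\| \le M$ from the proof of \Cref{thm:gddeterministic} and the second by $\sqrt{\rho}$ via $\rho$-strong convexity of the regularized objective; but this is strictly more work, and I expect the direct argument above to be the intended one.
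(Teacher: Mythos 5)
Your proof is correct and takes essentially the same route as the paper's: both arguments reduce the per-iteration CA distance to a constant multiple of $\|\nabla F(x_t)w_t\|^2$ using the optimality of $w_t^*$ over $\mathcal{W}$ and then invoke \Cref{thm:gddeterministic}, with no smoothness or tracking machinery. The only difference is that the paper uses the first-order optimality (variational inequality) of $w_t^*$ to get the sharper bound $\|\nabla F(x_t)w_t-\nabla F(x_t)w_t^*\|^2\le\|\nabla F(x_t)w_t\|^2-\|\nabla F(x_t)w_t^*\|^2\le\|\nabla F(x_t)w_t\|^2$ with constant $1$, whereas your triangle-inequality argument yields constant $4$ --- immaterial for the $\mathcal{O}(\epsilon^2)$ conclusion.
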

The proof is available in Appendix \ref{proof:gdaverageca}.  Corollary \ref{coro:gdaverageca} shows that the average CA distance converges. 
\subsection{Stochastic setting}
In the stochastic setting, we assume that we have access to an unbiased stochastic gradient ${\nabla}f_i(x;s)$ instead of the true gradient $\nabla f_i(x)$, where $s$ is the collected samples. To prove convergence, we have the following assumption. 
\begin{assumption}\label{ass:boundedvariance}
There exists some $\sigma\geq 0$ such that $\mathbb{E}[\|{\nabla}f_i(x;s)-\nabla f_i(x)\|^2]\leq\sigma^2 $ for any $ i\in[K]$.
\end{assumption}
Assumption \ref{ass:boundedvariance} indicates bounded gradient variances, which is widely studied \citep{xiao2024direction,li2024convex, fernando2022mitigating}. {At time $t$ in the stochastic model, let $s_{t,i}=(s_{t,i,1},s_{t,i,2},...,s_{t,i,k})$ be the $i$-th collection of samples  and $F(x_t;s_{t,i})=(f_1(x_t;s_{t,i,1}),f_2(x_t;s_{t,i,2}),...,f_k(x_t;s_{t,i,k})).$ Note that $i\in [3]$ because we have 3 samples in \Cref{alg:2}. We choose 
$G_i(x_t)=F(x_t;s_{t,i})$. Define $\varepsilon_{t,i}=(\varepsilon_{t,i,1},\varepsilon_{t,i,2},...,\varepsilon_{t,i,k})=\nabla F(x_t)-\nabla G_i(x_t)$. }

Let $F, c>0$ and $0<\delta\le \frac{1}{2}$ be some constants such that  $ F\ge \frac{4(\Delta+c)}{\delta}$ and $M=\sup\{z\ge0|\varphi(z)\le F\}$. Define the following random variables $\tau_1=\min\{t | \exists i \in [K], f_i(x_{t+1})-f_i^*>F\} \wedge T, \tau_2=\min\{t | \exists i \in [K], j\in [3] ,\|\varepsilon_{t,j,i}\|>\frac{L_0}{\sqrt{\alpha\rho} }\}\wedge T, \tau_3=\min\{t | \exists i,j \in [K],\|\varepsilon_{t,2,i}\|\|\varepsilon_{t,3,j}\|>\frac{L_1}{\sqrt{\alpha\rho} }\}\wedge T$ and $\tau=\min\{\tau_1, \tau_2, \tau_3\}$,
%\begin{flalign}\label{eq:introstopping}
%    &\tau=\min\{\tau_1, \tau_2, \tau_3\},
%\end{flalign} 
where $L_0,L_1>0$ are some constants and $a\wedge b$ denotes $\min(a,b)$.
We have the following theorem for Algorithm \ref{alg:2}::
\begin{theorem}\label{thm:gdstochastic}
 Let Assumptions \ref{ass:diffandlowerbound}, \ref{ass:lsmooth}, \ref{ass:boundedvariance} hold. Set {\small $\rho= \mathcal O(\delta^2 \epsilon^2), \beta= \mathcal O(\min\{\frac{\delta\epsilon^2}{M^4},\frac{\rho}{M^2}\}), \alpha= \mathcal O(\min\{\beta, \frac{\rho}{\ell(M+1)^2},  \frac{\rho}{M^2}, \frac{\delta }{\rho T},\frac{1}{\beta TM^2}\})$ and $T=\Theta (\max\{\frac{1}{\delta \alpha \epsilon^2},\frac{M^2}{\delta^2 \epsilon^4}\})$.} We then have that with the probability at least $1-\delta$,
% \begin{flalign}
    {\small $\frac{1}{T}\sum_{t=0}^{T-1}\|\nabla F(x_t)w_t\|^2=\mathcal{O}(\epsilon^2)$}
% \end{flalign}
.
\end{theorem}
The full version and detailed proof can be found in Appendix \ref{proof:stochasticaverage}. When we set $\alpha, \beta, \rho = \mathcal O(\epsilon^2)$ and $T= \Theta(\epsilon^{-4})$, we can find an $\epsilon$-stationary point with the optimal sample complexity in the order of $\mathcal O(\epsilon^{-4})$ for SGD with a single $L$-smooth objective \citep{arjevani2023lower}. Note that in the proof of \Cref{thm:gddeterministic}, we show for each $t\le T$ and $w\in W$, we have that $ F(x_t)w$ is bounded by applying a small constant step size $\alpha$ and $\beta$. However, this condition does not necessarily hold for our stochastic setting due to the unbounded gradient noise. To solve this problem, we introduce stopping time $\tau$. The advantages are as follows: 1) for any $t\le\tau$, $w\in W$, we have that $F(x_t)w$ is bounded; 2) for any $t<\tau$, the norm of gradient noise is bounded; 3) due to the {optional} stopping theorem, for any $w\in W$ and $i \in[3]$, we have that $\mathbb E[\sum_{t=0}^\tau\varepsilon_{t,i}w]=0$. Thus, we can further get the following lemma:
\begin{lemma}\label{lemma:sgdlemmaresult}
    Using the parameters selected in \Cref{thm:gdstochastic}, we have that 
       $\mathbb E[F(x_{\tau})w]-F^*w\le \frac{\delta F}{8}- \frac{\alpha}{2}\mathbb E\left[\sum_{t=0}^{\tau-1}\|\nabla F(x_t)w_t\|^2\right].$
\end{lemma}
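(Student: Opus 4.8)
The plan is to establish a descent-type inequality for the scalarized objective $F(x_t)w$ along the trajectory, stopped at the random time $\tau$, and then sum it telescopically. The starting point is the observation emphasized in the text: for any $t \le \tau$ and any $w \in \mathcal{W}$, the event defining $\tau_1$ has not occurred, so $F(x_t)w$ stays bounded by $F$, which by Assumption \ref{ass:lsmooth} and the definition of $M$ forces $\|\nabla f_i(x_t)\| \le M$ for all $i$. This local boundedness lets me invoke the $(r,\ell)$-smoothness descent estimate on each objective $f_i$ between $x_t$ and $x_{t+1} = x_t - \alpha \nabla G_1(x_t) w_t$, provided the step is small enough that $x_{t+1}$ stays inside the ball $B(x_t, r(\|\nabla f_i(x_t)\|))$ where $f_i$ is genuinely $L$-smooth with $L = \ell(M+1)$. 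The smallness condition on $\alpha$ in \Cref{thm:gdstochastic} (namely $\alpha \le \mathcal{O}(\ell(M+1))$ and $\alpha \le \mathcal{O}(1/\sqrt T)$, together with the noise bound for $t < \tau_2$) is exactly what I would use to verify this ball membership.

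First I would write the per-step smoothness bound for the scalarized function: for each $t < \tau$,
\begin{align}
F(x_{t+1})w \le F(x_t)w - \alpha \langle \nabla F(x_t)w, \nabla G_1(x_t)w_t\rangle + \frac{\ell(M+1)\alpha^2}{2}\|\nabla G_1(x_t)w_t\|^2. \nonumber
\end{align}
Next I would decompose the stochastic gradient $\nabla G_1(x_t) = \nabla F(x_t) - \varepsilon_{t,1}$ so that the inner product splits into the deterministic term $-\alpha\|\nabla F(x_t)w_t\|^2$ (using $w = w_t$ at the right moment, or keeping a generic $w$ and handling the cross terms) plus a noise term involving $\varepsilon_{t,1}w$. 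The quadratic term I would bound using the noise control $\|\varepsilon_{t,1,i}\| \le L_0/\sqrt{\alpha\rho}$ that holds for $t < \tau_2$, so that $\alpha^2 \ell(M+1)\|\nabla G_1 w_t\|^2$ contributes terms of order $\alpha^2$ and $\alpha^2 \cdot L_0^2/(\alpha\rho) = \alpha L_0^2/\rho$, both of which the stated step-size constraints ($\alpha \le \mathcal{O}(\rho)$, $\alpha \le \mathcal{O}(\epsilon^2)$) keep harmless.

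Then I would sum over $t = 0, \dots, \tau-1$, telescoping the $F(x_{t+1})w - F(x_t)w$ terms to get $F(x_\tau)w - F(x_0)w$ on the left, and take expectations. Here the crucial tool is property (3) highlighted before the lemma: by the optional stopping theorem, $\mathbb{E}[\sum_{t=0}^{\tau-1}\varepsilon_{t,1}w] = 0$ for any fixed $w \in \mathcal{W}$, which annihilates the linear-in-noise term. I would rearrange to isolate $-\frac{\alpha}{2}\mathbb{E}[\sum_{t=0}^{\tau-1}\|\nabla F(x_t)w_t\|^2]$ (the factor $\tfrac12$ rather than $1$ coming from absorbing half of the descent into the error budget), and collect all residual error terms — the summed quadratic noise contributions of order $\alpha \tau \cdot (\text{stuff})$ — into a single bound. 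Using $\alpha \le \mathcal{O}(1/\sqrt T)$ and $T \ge \Theta(1/(\alpha\epsilon^2) + \epsilon^{-4})$, these accumulate to at most $\delta F/8$, matching the claimed right-hand side. Since $F(x_0)w \le \Delta + c$ and $F \ge 4(\Delta+c)/\delta$, the constant $F(x_0)w - F^*w$ is absorbed into the $\delta F/8$ slack as well.

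The main obstacle I anticipate is bookkeeping the interaction between the random stopping time $\tau$ and the expectation of the noise sum. The optional-stopping identity $\mathbb{E}[\sum_{t=0}^{\tau-1}\varepsilon_{t,1}w]=0$ requires $\tau$ to be a stopping time adapted to the right filtration and requires the summands to form a martingale difference sequence with respect to it; verifying that $\tau_1,\tau_2,\tau_3$ are genuinely stopping times (each threshold is measurable with respect to information up to time $t$) and that the double-sampling independence of $\nabla G_1, \nabla G_2, \nabla G_3$ makes $\mathbb{E}[\varepsilon_{t,1}w \mid \mathcal{F}_t] = 0$ even when $w_t$ is $\mathcal{F}_t$-measurable is the delicate point. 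A secondary subtlety is that the quadratic noise term $\|\varepsilon_{t,1}w_t\|^2$ is \emph{not} mean-zero, so I must bound it deterministically via the $\tau_2$ threshold rather than hope it vanishes in expectation; ensuring the resulting $\alpha L_0^2/\rho$-type terms are dominated by $\delta F/8$ under the stated parameter regime is where the precise choice of constants $F \ge 4(\Delta+c)/\delta$ and the thresholds $L_0/\sqrt{\alpha\rho}$ earns its keep.
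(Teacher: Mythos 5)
Your proposal correctly reproduces the outer scaffolding of the paper's argument (boundedness up to $\tau$ implies $\|\nabla f_i(x_t)\|\le M$, the $(r,\ell)$-smoothness descent step, telescoping to $\tau$, optional stopping for the linear noise), but it is missing the central idea of the paper's proof: the coupling with the $w$-update. The descent inequality \eqref{eq:sfwdescent} produces $-\alpha\langle\nabla F(x_t)w,\nabla F(x_t)w_t\rangle$ for a \emph{fixed} $w\in\mathcal W$, and $w$ must stay fixed both for the telescoping of $F(x_{t+1})w-F(x_t)w$ and for the lemma's conclusion about $\mathbb E[F(x_\tau)w]-F^*w$. Your parenthetical ``using $w=w_t$ at the right moment, or keeping a generic $w$ and handling the cross terms'' does not survive scrutiny: setting $w=w_t$ destroys the telescoping (you would get $F(x_{t+1})w_t-F(x_t)w_t$ with a drifting $w_t$), and the cross term $\langle w_t-w,\nabla F(x_t)^\top\nabla F(x_t)w_t\rangle$ has no reason to be small on its own. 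The paper controls it through the projected-gradient update of $w_t$: non-expansiveness of the projection yields \eqref{eq:swdecent}, converting the cross term into the telescoping quantity $\frac{\alpha}{2\beta}\bigl(\|w_t-w\|^2-\|w_{t+1}-w\|^2\bigr)$ plus error terms in $\rho$, $\beta$, and the double-sampling noises $\varepsilon_{t,2},\varepsilon_{t,3}$. This is precisely where the constraints $\frac{\alpha}{\beta}\le c_1$, $\alpha\rho T\le c_5$, $4\alpha\beta KM^4T\le c_4$ in \Cref{thm:gdstochastic} are consumed, and it is why three additional linear noise terms (hence the factor $3\sigma+\sigma^2$ in \eqref{eq:sfinaldescent3}) appear. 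Your plan never touches the $w$-update, the projection, or $\varepsilon_{t,2},\varepsilon_{t,3}$, so it cannot produce the stated bound.

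Two further steps in your plan would fail quantitatively. First, you propose bounding the quadratic noise $\|\varepsilon_{t,1}w_t\|^2$ deterministically via the $\tau_2$ threshold $L_0/\sqrt{\alpha\rho}$; this gives a per-step contribution of order $\alpha\ell(M+1)L_0^2/\rho$, which summed over up to $T$ steps is $\alpha T\ell(M+1)L_0^2/\rho$ --- with $\alpha\sim\epsilon^2$, $T\sim\epsilon^{-4}$, $\rho\sim\epsilon^2$ this is of order $\epsilon^{-4}$ and cannot be absorbed into $\delta F/8$. The paper instead bounds the \emph{cumulative} quadratic noise in expectation via Assumption \ref{ass:boundedvariance}, $\mathbb E[\sum_{t=0}^{\tau-1}\|\varepsilon_{t,1}w_t\|^2]\le T\sigma^2$, so that $\ell(M+1)\alpha^2 T\sigma^2\le c_3$; the $\tau_2$ threshold is reserved for the separate one-step overshoot argument ($f_i(x_{\tau+1})-f_i(x_\tau)\le F/2$) in the proof of \Cref{thm:gdstochastic}, not for this lemma. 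Second, your optional-stopping claim $\mathbb E[\sum_{t=0}^{\tau-1}\varepsilon_{t,1}w]=0$ is false: the martingale identity holds for the sum up to $\tau$ inclusive, and the increment at $t=\tau$ is correlated with the stopping event, so it does not vanish. The paper bounds this boundary term by $\sqrt{2}M\sigma\sqrt{T}$ via Cauchy--Schwarz in \eqref{eq:expectstop}, producing the $\alpha\sqrt{2T}M(3\sigma+\sigma^2)$ term that the constraint $\alpha\le c_2/(\sqrt{2T}M(3\sigma+\sigma^2))$ exists to absorb --- a term your accounting omits entirely.
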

The proof of Lemma \ref{lemma:sgdlemmaresult} is available in \ref{proof:lemmasgd}. Lemma \ref{lemma:sgdlemmaresult} indicates that $\frac{\alpha}{2}\mathbb E\left[\sum_{t=0}^{\tau-1}\|\nabla F(x_t)w_t\|^2\right]$ is bounded by some constant and if $\tau=T$ with high probability, we have that $\frac{1}{T}\mathbb E\left[\sum_{t=0}^{T-1}\|\nabla F(x_t)w_t\|^2\Big|\tau=T\right]= \mathcal O\left(\frac{1}{\alpha T}\right)$. Note that 
$    \{\tau<T\}= \{\tau_2<T\}\cup\{\tau_3<T\}\cup  \{\tau_1<T, \tau_2=T,\tau_3=T\}. $
The first two events are related to the gradient noise, where the probabilities can be bounded by Assumption \ref{ass:boundedvariance} and Chebyshev’s inequality. The last event indicates that for some $i\in [K]$, we have  
% \begin{flalign}
    $f_i(x_{\tau})-f_i^*\le \frac{F}{2}.$
% \end{flalign}
Based on Lemma \ref{lemma:sgdlemmaresult} and Markov inequality, we can show that $\mathbb P(\{\tau_1<T, \tau_2=T,\tau_3=T\})\le \frac{\delta}{4}$ and we can further show that $\mathbb P(\tau=T)\ge1 -\frac{\delta}{2}$. We then have that $\frac{1}{T}\sum_{t=0}^{T-1}\|\nabla F(x_t)w_t\|^2$ converges with high probability. Similar to Corollary \ref{coro:gdaverageca}, \Cref{thm:gdstochastic} also implies the average of CA distances converges with time with high probability. 

%In this paper, we study the case where the $\ell$ function is sub-quadratic, which implies function $\varphi$ is monotonically increasing.
\section{Convergence Analysis under Iteration-wise CA distance}
In \Cref{sec:averageca} we show that the average CA distance is bounded under generalized smooth conditions. The average CA distance is also studied in MoCo \citep{fernando2022mitigating} and MoDo \citep{chen2024three} with the bounded gradient assumption and these works only focus on guarantees of the average CA distance over iterations. However, a $\epsilon$-level average CA distance only implies the smallest CA distance to be $\epsilon$-level. Since we want to keep the update direction close enough to the CA direction, it is better to have a tighter bound of CA distances. In this section, we show the iterative CA distance is $\mathcal{O}(\epsilon)$   {\bf with a warm-start process} and convergence results for Algorithms \ref{alg:1}, \ref{alg:2}, and \ref{alg:4}.
\vspace{-0.2cm}

\subsection{Deterministic setting}
\vspace{-0.1cm}
{\bf Deterministic setting without fast approximation.} 
We first provide results about bounded iteration-wise CA distance for \Cref{alg:1} with a warm start. 
% {\color{red}KY: remove "informal"; change 'formal version' to 'full version with detailed constants'}
\begin{theorem}\label{thm:consistsmallcadistance}Let Assumptions \ref{ass:diffandlowerbound} and \ref{ass:lsmooth} hold.  Set $\beta^\prime\leq\frac{1}{M^2},\rho= \mathcal O(\epsilon^2), \beta= \mathcal O(\epsilon^4), \alpha= \mathcal O(\epsilon^9)$, $N= \Omega(\epsilon^{-2})$ as constants, and  $T = \Theta (\epsilon^{-11})$. All the parameters satisfy the requirements in the formal version  of \Cref{thm:gddeterministic} and we have
% \begin{align}
    $\|\nabla F(x_t)w_t-\nabla F(x_t)w_t^*\|=\mathcal{O}(\epsilon).$
% \end{align}
% The CA distance in every iteration takes the order of $\mathcal{O}(\epsilon)$.
\end{theorem}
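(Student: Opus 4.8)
The plan is to bound the iteration-wise CA distance $\|\nabla F(x_t)w_t-\nabla F(x_t)w_t^*\|$ by splitting it, via the triangle inequality, into a \emph{tracking error} and a \emph{regularization gap},
\begin{align*}
\|\nabla F(x_t)w_t-\nabla F(x_t)w_t^*\|\le \|\nabla F(x_t)\|\,\|w_t-w_{t,\rho}^*\|+\|\nabla F(x_t)w_{t,\rho}^*-\nabla F(x_t)w_t^*\|,
\end{align*}
where $w_{t,\rho}^*$ is the regularized minimizer from \cref{eq:weightrho} at $x_t$ and $w_t^*$ is the unregularized CA weight from \cref{eq:cadirection}. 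Since \Cref{thm:gddeterministic} already certifies by induction that $F(x_t)w\le F$ for all $t$ and $w\in\mathcal W$, and hence $\|\nabla f_i(x_t)\|\le M$ and $\|\nabla F(x_t)\|\le\sqrt K\,M$, the leading factor of the tracking term is a bounded constant and the problem reduces to controlling $\|w_t-w_{t,\rho}^*\|$ and the regularization gap separately.

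For the regularization gap I would argue purely from optimality, which sidesteps the fact (flagged after \cref{eq:weightrho}) that $w_t^*$ itself need not be Lipschitz. Because $w_t^*$ minimizes the convex $g(w)=\tfrac12\|\nabla F(x_t)w\|^2$ over the simplex, its first-order condition gives $\langle \nabla F(x_t)w_t^*,\ \nabla F(x_t)(w-w_t^*)\rangle\ge 0$ for every $w\in\mathcal W$; and because $w_{t,\rho}^*$ minimizes $g(w)+\tfrac{\rho}{2}\|w\|^2$, comparing its regularized objective value against that of $w_t^*$ yields $\|\nabla F(x_t)w_{t,\rho}^*\|^2-\|\nabla F(x_t)w_t^*\|^2\le \rho\|w_t^*\|^2\le\rho$. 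Substituting $w=w_{t,\rho}^*$ into the first-order condition and expanding the square then gives $\|\nabla F(x_t)w_{t,\rho}^*-\nabla F(x_t)w_t^*\|^2\le \|\nabla F(x_t)w_{t,\rho}^*\|^2-\|\nabla F(x_t)w_t^*\|^2\le\rho$, so the regularization gap is at most $\sqrt\rho=\mathcal O(\epsilon)$ under $\rho\sim\mathcal O(\epsilon^2)$.

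For the tracking error I would set up a one-step recursion on $e_t:=\|w_t-w_{t,\rho}^*\|$. Line 4 of \Cref{alg:1} is exactly projected gradient descent on the $\rho$-strongly convex, $(\sqrt K M^2+\rho)$-smooth objective $h_t(w)=\tfrac12\|\nabla F(x_t)w\|^2+\tfrac{\rho}{2}\|w\|^2$, which, for $\beta$ below the smoothness threshold, contracts toward $w_{t,\rho}^*$ by a factor $(1-\beta\rho)$; the moving-target drift is controlled by the Lipschitz continuity of $x\mapsto w_\rho^*(x)$ from \cite{fernando2022mitigating}, whose constant $L_w$ scales as $\mathcal O(1/\rho)$ because the strong convexity is only $\rho$ while the Jacobian of $\nabla F(x)^\top\nabla F(x)$ is bounded in terms of $M$ and $\ell(M)$ once gradients are bounded. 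Combining contraction with drift gives
\begin{align*}
e_{t+1}\le (1-\beta\rho)\,e_t+L_w\,\alpha\,\|\nabla F(x_t)w_t\|\le (1-\beta\rho)\,e_t+\mathcal O\!\left(\frac{\alpha}{\rho}\right),
\end{align*}
whose fixed point is $\mathcal O\!\big(\tfrac{\alpha}{\beta\rho^2}\big)$. The warm start of \Cref{alg:warmstart} is precisely what upgrades an average guarantee to an iteration-wise one: $N\sim\tilde{\mathcal O}(\epsilon^{-2})$ steps of the same contraction at the fixed point $x_0$ drive $e_0$ from $\mathcal O(1)$ down to $\mathcal O(\epsilon)$, after which a simple induction shows $e_t=\mathcal O(\epsilon)$ persists for every $t\le T$ whenever $e_0$ and the fixed point are both $\mathcal O(\epsilon)$. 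Plugging in the prescribed scalings $\rho\sim\epsilon^2$, $\beta\sim\epsilon^4$, $\alpha\sim\epsilon^9$ gives $\tfrac{\alpha}{\beta\rho^2}=\tfrac{\epsilon^9}{\epsilon^4\epsilon^4}=\epsilon$, so both the tracking error and the regularization gap are $\mathcal O(\epsilon)$ and the claim follows.

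The main obstacle I anticipate is tracking the $\rho$-dependence cleanly through the generalized-smoothness machinery. The Lipschitz constant $L_w$ of the regularized solution map and the smoothness of $h_t$ must both be expressed in terms of $M$ and $\ell(M)$, which are finite only because of the bounded-gradient induction inherited from \Cref{thm:gddeterministic}, and the $1/\rho$ blow-up of $L_w$ is exactly why $\alpha$ must be taken as small as $\epsilon^9$ relative to $\rho\sim\epsilon^2$ and $\beta\sim\epsilon^4$ to push the fixed point down to $\epsilon$. Verifying this exact power count, and checking that the warm-start contraction chains into the main-loop induction so that no $\mathcal O(1)$ transient survives, is the delicate part; the individual contraction, drift, and optimality estimates are otherwise routine.
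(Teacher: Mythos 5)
Your proposal is correct and follows essentially the same route as the paper's proof: the same triangle-inequality decomposition into $\sqrt{K}M\|w_t-w_{t,\rho}^*\|+\sqrt{\rho}$ (your optimality argument for the regularization gap is exactly Lemma \ref{lemma:sqrtrho}), the same contraction-plus-drift recursion for $\|w_t-w_{t,\rho}^*\|$ using the $L_w=\mathcal{O}(\rho^{-1})$ continuity of $w_\rho^*$ (Lemma \ref{lemma:rho}, which the paper proves directly under the bounded-gradient induction inherited from \Cref{thm:gddeterministic} rather than importing it from \cite{fernando2022mitigating}), and the same warm start with $N\sim\mathcal{O}(\epsilon^{-2})$ plus the dominant power count $\frac{\alpha}{\beta\rho^2}\sim\epsilon$. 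The only cosmetic difference is that you contract the norm $e_t$ directly against the fixed-point property of $w_{t,\rho}^*$, whereas the paper expands $\|w_{t+1}-w_{t+1,\rho}^*\|^2$ and controls the cross term via Young's inequality, picking up extra terms such as $\sqrt{\beta/\rho}$ that are likewise $\mathcal{O}(\epsilon)$ under the prescribed scalings.
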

The finite time error bound and the full proof can be found in Appendix \ref{proof:gddeterministic}. Since our parameters satisfy the requirements in the formal version of \Cref{thm:gddeterministic}, we can find an $\epsilon$-accurate Pareto stationary point with $\mathcal{O}(\epsilon^{-11})$ samples. In the analysis of CA distance, we show that the CA distance can be bounded by the term $\|w_t-w_{t,\rho}^*\|$ plus the strongly-convex constant $\rho$. Meanwhile, there is a decay relation between $\|w_{t+1}-w_{t+1,\rho}^*\|$ and $\|w_t-w_{t,\rho}^*\|$ with some error terms controlled by step sizes. Nevertheless, the error terms will accumulate since we do telescoping on this decay relation, which will be the dominating term. Thus, step sizes have to be much smaller than the choices in \Cref{thm:gddeterministic} to guarantee iteration-wise small CA distance.

% We then provide the convergence analysis. Let $c_1, c_2^\prime, c_3^\prime>0$ and $F$ be some constants such that
% \begin{align}
%     \Delta+c_1+c_2^\prime+c_3^\prime\leq F
% \end{align}
% \begin{theorem}\label{thm:gsmgrad} Suppose Assumptions \ref{ass:diffandlowerbound}-\ref{ass:lsmooth} are satisfied, and follow the same step-size selection that $\rho= \mathcal O(\epsilon^2), \beta= \mathcal O(\epsilon^4), \alpha= \mathcal O(\epsilon^9)$, and $T=\mathcal{O}(\epsilon^{-11})$. We have that
% \begin{align}
% \frac{1}{T}\sum_{t=0}^{T-1}\|\nabla F(x_t)w_t\|^2\leq\epsilon^2.
% \end{align}
% \end{theorem}
% The formal version and proof can be found in Appendix \ref{proof:gsmgrad}. Since our parameters satisfy all requirements in Theorem \ref{thm:consistsmallcadistance}, we can ensure CA distance keeps $\epsilon$-level in every iteration.

%\vspace{0.2cm}
\noindent{\bf Deterministic setting with fast approximation.}
 In this section, we show the convergence rate of \Cref{alg:gs-famo} and bounded iteration-wise CA distance.
\begin{theorem}\label{thm:gsmgrad-famoinformal}
Let Assumptions \ref{ass:diffandlowerbound} and \ref{ass:lsmooth} hold.  Set $N= \Omega(\epsilon^{-2})$, $\beta^\prime\leq\frac{1}{M^2},\rho= \mathcal O\big(\min\big\{ \epsilon^2, \frac{1}{\alpha T}\big\}\big), \beta= \mathcal O(\epsilon^2), \alpha= \mathcal O\big(\min\{ \beta, \epsilon^2, \frac{1}{\beta T}\})$ as constants, $T= \Theta\big(\max\{ \frac{1}{\alpha \epsilon^2}, \frac{1}{\beta \epsilon^2}\}\big)$. We have
$   \frac{1}{T}\sum_{t=0}^{T-1}\|\nabla F(x_t)w_t\|^2=\mathcal{O}(\epsilon^2).$
\end{theorem}
The full version and proof can be found in \Cref{proof:gsmgrad-famoformal}. We can easily extend the analysis in \Cref{proof:gddeterministic-avg} to the convergence analysis of \Cref{alg:gs-famo} under the average CA distance, because the only extra effort is dealing with the remainder term, which can be bounded by the smallest step size. As a result, the sample complexity remains the same $\mathcal{O}(\epsilon^{-11})$ to achieve a Pareto stationary point.
\begin{theorem}\label{thm:consistsmallcadistance-famo} Let Assumptions \ref{ass:diffandlowerbound} and \ref{ass:lsmooth} hold.  We choose $\beta^\prime\leq\frac{1}{M^2},\rho= \mathcal O(\epsilon^2), \beta= \mathcal O(\epsilon^4), \alpha= \mathcal O(\epsilon^9)$, $N= \Omega(\epsilon^{-2})$ as constants, and  $T = \Theta (\epsilon^{-11})$. We have that
% \begin{align}
    $\|\nabla F(x_t)w_t-\nabla F(x_t)w_t^*\|=\mathcal{O}(\epsilon).$
% \end{align}
% The CA distance in every iteration takes the order of $\mathcal{O}(\epsilon)$.
\end{theorem}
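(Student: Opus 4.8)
The plan is to treat \Cref{alg:gs-famo} as a perturbation of \Cref{alg:1} and reuse the machinery behind \Cref{thm:consistsmallcadistance} almost verbatim. The two algorithms differ only in the weight update: the exact term $\nabla F(x_t)^\top\nabla F(x_t)w_t$ is replaced by the finite difference $\frac{F(x_t)-F(x_{t+1})}{\alpha}$. By Taylor's theorem this equals $\nabla F(x_t)^\top\nabla F(x_t)w_t-\frac{R(x_t)}{\alpha}$, so the FA update is exactly the GSMGrad update with the additive vector $-\frac{R(x_t)}{\alpha}$ injected inside the projection bracket. First I would import the boundedness already established for \Cref{thm:gsmgrad-famoinformal}: the same induction shows $F(x_t)w\le F$ for every $w\in\mathcal W$ and $t\le T$, hence $\|\nabla f_i(x_t)\|\le M$ for all $i$, which caps the local smoothness constant at $\ell(M)$ and keeps every quantity below finite.

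The second step reduces the iteration-wise CA distance to the tracking error $a_t:=\|w_t-w_{t,\rho}^*\|$. Writing $w_t^*=\arg\min_{w\in\mathcal W}\|\nabla F(x_t)w\|^2$ and inserting $w_{t,\rho}^*$, the triangle inequality gives $\|\nabla F(x_t)w_t-\nabla F(x_t)w_t^*\|\le \|\nabla F(x_t)\|\,a_t+\|\nabla F(x_t)(w_{t,\rho}^*-w_t^*)\|$. The last term is the regularization gap: since $\nabla F(x_t)w_t^*$ is the projection of the origin onto the convex set $\{\nabla F(x_t)w:w\in\mathcal W\}$, optimality of $w_{t,\rho}^*$ for the $\rho$-regularized problem yields $\|\nabla F(x_t)(w_{t,\rho}^*-w_t^*)\|\le\sqrt\rho=\mathcal O(\epsilon)$ under $\rho\sim\mathcal O(\epsilon^2)$. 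It therefore suffices to prove $a_t=\mathcal O(\epsilon)$ for every $t\le T$.

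Next I would set up the one-step recursion for $a_t$. Because $g_t(w)=\tfrac12\|\nabla F(x_t)w\|^2+\tfrac\rho2\|w\|^2$ is $\rho$-strongly convex, one projected-gradient step contracts toward $w_{t,\rho}^*$ by $(1-\rho\beta)$; accounting for the motion of the minimizer as $x_t\to x_{t+1}$ (Lipschitz continuity of $w_\rho^*(\cdot)$ with constant $L_w=\mathcal O(\rho^{-1})$) and the FA perturbation gives
\begin{align}
a_{t+1}\le(1-\rho\beta)\,a_t+\underbrace{L_w\,\alpha\|\nabla F(x_t)w_t\|}_{\text{movement of }w_\rho^*}+\underbrace{\beta\,\tfrac{\|R(x_t)\|}{\alpha}}_{\text{FA remainder}}.\nonumber
\end{align}
The remainder term is the only genuinely new ingredient, and it is the easy one: local $\ell(M)$-smoothness gives $\|R(x_t)\|\le\tfrac{\sqrt K\,\ell(M)}{2}\|x_{t+1}-x_t\|^2=\tfrac{\sqrt K\,\ell(M)}{2}\alpha^2\|\nabla F(x_t)w_t\|^2=\mathcal O(\alpha^2)$, so $\beta\|R(x_t)\|/\alpha=\mathcal O(\alpha\beta)$ is dwarfed by the smallest step size and contributes nothing at order $\epsilon$.

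Finally I would unroll $a_{t+1}\le(1-\rho\beta)a_t+b$ to get $a_t\le(1-\rho\beta)^t a_0+\tfrac{b}{\rho\beta}$. The warm start (\Cref{alg:warmstart}), run at the fixed iterate $x_0$ so that no remainder enters this phase, contracts $\|w_0-w_{0,\rho}^*\|$ geometrically over $N\sim\tilde{\mathcal O}(\epsilon^{-2})$ steps down to $a_0=\mathcal O(\epsilon)$. With the dominant drift $b\sim L_w\alpha M=\mathcal O(\alpha/\rho)$, substituting $\alpha\sim\epsilon^9,\beta\sim\epsilon^4,\rho\sim\epsilon^2$ makes the steady-state term $b/(\rho\beta)=\mathcal O(\alpha/(\rho^2\beta))=\mathcal O(\epsilon)$ while $T\sim\Theta(\epsilon^{-11})$ lets the transient decay; combining with the reduction of the second step yields $\|\nabla F(x_t)w_t-\nabla F(x_t)w_t^*\|=\mathcal O(\epsilon)$ uniformly in $t$. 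The main obstacle is not the FA remainder but this inherited balancing: since $L_w$ blows up like $\rho^{-1}$ and the per-step drift is amplified by $1/(\rho\beta)$ in the steady state, forcing a uniform (rather than averaged) $\mathcal O(\epsilon)$ bound is what drives $\alpha$ down to $\mathcal O(\epsilon^9)$ and $T$ up to $\Theta(\epsilon^{-11})$, and getting all these exponents to close simultaneously is the delicate part.
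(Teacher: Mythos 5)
Your proposal is correct and takes essentially the same route as the paper's proof in Appendix \ref{proof:gddeterministic-famo}: the same decomposition through $w_{t,\rho}^*$ with the $\sqrt{\rho}$ regularization gap (Lemma \ref{lemma:sqrtrho}), the same contraction-plus-drift recursion driven by the Lipschitz continuity of $w_\rho^*$ with $L_w=\mathcal{O}(\rho^{-1})$ (Lemma \ref{lemma:rho}), the same $\mathcal{O}(\alpha\beta)$-level dismissal of the Taylor remainder (Lemma \ref{lemma:remainder}), the same warm-start bound, and the same dominant steady-state term $\mathcal{O}\big(\alpha/(\beta\rho^2)\big)$ that forces $\alpha\sim\epsilon^9$. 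The only cosmetic difference is that you run the recursion on $\|w_t-w_{t,\rho}^*\|$ directly via the standard PGD contraction (which implicitly requires $\beta\lesssim (KM^2+\rho)^{-1}$, satisfied by $\beta\sim\epsilon^4$), whereas the paper expands $\|w_{t+1}-w_{t+1,\rho}^*\|^2$ and handles the cross term with Young's inequality; both give identical exponents.
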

% The proof can be found in \Cref{proof:gddeterministic-famo}.
\vspace{-0.2cm}
\subsection{Stochastic setting}
\vspace{-0.1cm}
In this section, we show that  Algorithm \ref{alg:2} with a warm start and mini-batches achieves a bounded iteration-wise CA distance with high probability. In this section, we choose 
$
    G_i(x_t)=\frac{1}{n_s}\sum_{i=n_si-n_s+1}^{n_si}F(x_t;s_{t,i}),
$
where $n_s$ is the size of the mini-batch. 
\begin{theorem}\label{thm:sgdeachca}
Let Assumptions \ref{ass:diffandlowerbound}, \ref{ass:lsmooth} and \ref{ass:boundedvariance} hold. Set $\beta^\prime\leq\frac{1}{M^2}$, $\alpha = \mathcal O(\epsilon^9)$,  $\beta = \mathcal O(\epsilon^4)$, $\rho = \mathcal O(\epsilon^2)$, 
 $n_s = \Omega(\epsilon^{-6}), N=\Omega(\epsilon^{-2}),$ and  $T = \Theta(\epsilon^{-11})$, and all the parameters satisfy the requirements in \Cref{thm:gdstochastic}.  We then have 
% \begin{flalign}
    $\|\nabla F(x_t)w_t-\nabla F(x_t)w_t^*\|= \mathcal O(\epsilon),$
% \end{flalign}
with the probability at least $1-\delta$.
\end{theorem}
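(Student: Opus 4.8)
The plan is to bound the CA distance $\|\nabla F(x_t)w_t-\nabla F(x_t)w_t^*\|$ by reducing it to the tracking error $a_t:=\|w_t-w_{t,\rho}^*\|$ between the iterate $w_t$ and the \emph{regularized} CA weight $w_{t,\rho}^*$ from \cref{eq:weightrho}. First I would split $\nabla F(x_t)(w_t-w_t^*)=\nabla F(x_t)(w_t-w_{t,\rho}^*)+\nabla F(x_t)(w_{t,\rho}^*-w_t^*)$. The second term is the regularization gap: since $w_t^*$ minimizes $\tfrac12\|\nabla F(x_t)w\|^2$ over the simplex while $w_{t,\rho}^*$ minimizes the same objective plus $\tfrac{\rho}{2}\|w\|^2$, an optimality comparison (using $\|w\|^2\le1$ on $\mathcal W$ together with the first-order condition for $w_t^*$) yields $\|\nabla F(x_t)(w_{t,\rho}^*-w_t^*)\|\le\sqrt{\rho}=\mathcal O(\epsilon)$ for $\rho\sim\mathcal O(\epsilon^2)$. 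For the first term, invoking the bounded-gradient conclusion behind \Cref{thm:gdstochastic} (namely $\|\nabla f_i(x_t)\|\le M$ holds before the stopping time $\tau$), I obtain $\|\nabla F(x_t)(w_t-w_{t,\rho}^*)\|\le M\,a_t$. Hence it suffices to show $a_t=\mathcal O(\epsilon)$ up to $\tau$.

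Next I would establish a one-step contraction for $a_t$. Writing $H_t=\nabla F(x_t)^\top\nabla F(x_t)$ and $\hat H_t=\nabla G_2(x_t)^\top\nabla G_3(x_t)$, the stochastic update is $w_{t+1}=\Pi_{\mathcal W}\big(w_t-\beta[(\hat H_t+\rho I)w_t]\big)$, while $w_{t,\rho}^*$ is the fixed point of the deterministic map $\Pi_{\mathcal W}(\cdot-\beta[(H_t+\rho I)\cdot])$. Using non-expansiveness of $\Pi_{\mathcal W}$, the $\rho$-strong convexity and $(M^2+\rho)$-smoothness of the regularized objective (valid before $\tau$), and adding the drift from the moving minimizer, I get
\[
a_{t+1}\le(1-\beta\rho)\,a_t+\beta\|\xi_t\|+\|w_{t,\rho}^*-w_{t+1,\rho}^*\|,
\]
where $\xi_t=(\hat H_t-H_t)w_t$ is the double-sampling error. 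The drift is controlled through the Lipschitz continuity of $w_\rho^*(\cdot)$ (modulus of order $1/\rho$) by $\mathcal O(\tfrac1\rho)\,\alpha\|\nabla G_1(x_t)w_t\|$. Because $G_2$ and $G_3$ are independent, $\xi_t$ is conditionally mean-zero with $\mathbb E\|\xi_t\|^2=\mathcal O(M^2\sigma^2/n_s)$ by \Cref{ass:boundedvariance}, so $\mathbb E\|\xi_t\|=\mathcal O(\sigma/\sqrt{n_s})$.

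I would then take expectations of the recursion restricted to $\{t<\tau\}$ and unroll the geometric contraction. Three contributions appear: the warm-start term $(1-\beta\rho)^t a_0$, where \Cref{alg:warmstart} (run on the fixed full gradient $\nabla F(x_0)$ for $N\sim\epsilon^{-2}$ steps) drives $a_0\le(1-\beta'\rho)^N\,\mathrm{diam}(\mathcal W)=\mathcal O(\epsilon)$; the accumulated drift $\tfrac{1}{\beta\rho}\,\mathcal O(\alpha M/\rho)=\mathcal O(\alpha M/(\beta\rho^2))=\mathcal O(\epsilon)$ under $\alpha\sim\epsilon^9,\beta\sim\epsilon^4,\rho\sim\epsilon^2$; and the accumulated noise $\tfrac{1}{\beta\rho}\cdot\beta\,\mathbb E\|\xi_t\|=\tfrac{1}{\rho}\,\mathcal O(\sigma/\sqrt{n_s})=\mathcal O(\epsilon)$, which is precisely what forces $n_s\sim\epsilon^{-6}$ so that $\sigma/\sqrt{n_s}\sim\epsilon\rho=\mathcal O(\epsilon^3)$. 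This gives $\mathbb E[a_t\,\mathbf 1_{t<\tau}]=\mathcal O(\epsilon)$ for every $t\le T$. Combining $\mathbb P(\tau=T)\ge1-\delta/2$ from the stopping-time analysis of \Cref{thm:gdstochastic} with a Markov inequality on $a_t$ upgrades this to the iteration-wise bound $\|\nabla F(x_t)w_t-\nabla F(x_t)w_t^*\|\le M a_t+\sqrt\rho=\mathcal O(\epsilon)$ with probability at least $1-\delta$.

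The hard part will be the stochastic bookkeeping around the stopping time. I must run the recursion \emph{up to} $\tau$ so that the gradient bound $\|\nabla f_i\|\le M$, and hence the contraction factor $(1-\beta\rho)$, are legitimate, yet take expectations using the true small variance $\sigma^2/n_s$ rather than the loose pathwise thresholds $L_0/\sqrt{\alpha\rho}$ that define $\tau_2,\tau_3$ in \Cref{thm:gdstochastic} — the latter are far too large ($\sim\epsilon^{-1.5}$ per increment) to control either the drift or the noise directly. Separating these two regimes, and verifying that the conditional mean-zero property of $\xi_t$ survives both the projection and the truncation to $\{t<\tau\}$, is the delicate step; everything else reduces to the geometric-sum estimates above with the prescribed step sizes.
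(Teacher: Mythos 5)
Your proposal is correct and follows essentially the same route as the paper's proof: the same decomposition via $\sqrt{K}M\|w_t-w_{t,\rho}^*\|+\sqrt{\rho}$ (Lemma \ref{lemma:sqrtrho}), the same $(1-\beta\rho)$-contraction with drift controlled by the Lipschitz constant $L_w\sim\rho^{-1}$ of $w_\rho^*$ (Lemma \ref{lemma:rho}), the warm start for the base case, mini-batch noise of order $\sigma/\sqrt{n_s}$ forcing $n_s\sim\epsilon^{-6}$, and finally Markov's inequality combined with $\mathbb P(\tau=T)\ge 1-\delta/2$ from \Cref{thm:gdstochastic}. Your worry about the conditional bias of $\xi_t$ under truncation is resolved exactly as in the paper — no martingale property is needed, since conditioning on $\{\tau=T\}$ inflates moments by at most $1/\mathbb P(\tau=T)\le 2$ and the first/second absolute-moment bounds (rather than mean-zero cancellation) carry the argument, your norm-recursion unrolling being a cosmetic variant of the paper's squared-norm induction with Young parameters.
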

The full version with detailed constants and proof can be found in Appendix \ref{proof:sgdeachca}. Since our parameters satisfy all requirements in Theorem \ref{thm:gdstochastic}, we can find an $\epsilon$-accurate Pareto stationary point with high probability.
Compared with \Cref{thm:gdstochastic}, to guarantee an iteration-wise CA distance, despite our warm start process, a mini-batch method is required in our analysis. This is because given $\tau=T$, the gradient is not unbiased. In \Cref{thm:gdstochastic}, the {optional} stopping theorem is applied which indicates that the expectation of the cumulative gradient is zero. However, for each iteration, this {optional} stopping theorem does not hold and the estimated error is controlled by the size of the mini-batch. Then, the sample complexity to get a Pareto stationary point becomes $\mathcal{O}(\epsilon^{-17})$ due to necessary mini-batch $n_s$.

\vspace{-0.3cm}
\section{Experiments}
\vspace{-0.2cm}

In this experiment, we evaluate the performance of the Cityscapes\citep{cordts2016cityscapes} and NYU-v2 \citep{silberman2012indoor} datasets. The former involves 2 pixel-wise tasks: 7-class semantic segmentation (Task 1) and depth estimation (Task 2) while the latter involves 3 pixel-wise tasks: 13-class
semantic segmentation, depth estimation and surface normal estimation. Following the same experiment setup of \citep{xiao2024direction}, we build a SegNet \citep{badrinarayanan2017segnet} as the model. We compare the performance of MGDA {\bf with a warm start}, which is used to ensure a small CA distance per iteration and mitigate gradient conflicts, against 
{\bf popular MGDA-type methods} including MGDA \citep{desideri2012multiple}, PCGrad \citep{yu2020gradient}, GradDrop \citep{chen2020just}, CAGrad \citep{liu2021conflict}, MoCo \citep{fernando2022mitigating}, MoDo \citep{chen2024three}, Nash-MTL \citep{navon2022multi}, and FAMO \citep{liu2024famo}. {\em Since SDMGrad \citep{xiao2024direction} incorporates a preference-based regularization within the MGDA framework, we exclude it from the tables to ensure a fair comparison.}  We utilize the metric $\mathbf{\Delta m\%}$ to reflect the overall performance, which considers the average per-task performance drop versus the single-task (STL) baseline to assess methods. It can be observed in \Cref{tab:nyuv2} and \Cref{tab:cityscapes} that MGDA-warm start has a much more balanced performance. Meanwhile, the proposed MGDA-FA is much faster as shown in \Cref{tab:mgdaandmgda-fa} in the Appendix.

We also illustrate the relationship between the gradient norm and the local smoothness for each task of the Cityscapes dataset. To do this, we compute them according to the method provided in Section H.3 in \cite{zhang2019gradient}. We scatter the local smoothness constant against gradient norms in \Cref{fig:firsttaskgs} for the semantic segmentation task and depth estimation task in \Cref{fig:task2} (in the appendix), respectively. Both results demonstrate a positive correlation between them, which further substantiates the necessity of our analysis. More experimental details can be found in \Cref{app:exp}.

\begin{table}[t]
  \vspace{-0.5cm}
  \centering
  \begin{adjustbox}{width=\textwidth}
  \begin{tabular}{lccccccccccc}
    \toprule
    \multirow{3}*{Method} & \multicolumn{2}{c}{Segmentation} & \multicolumn{2}{c}{Depth} & \multicolumn{5}{c}{Surface Normal} & \multirow{3}*{MR $\downarrow$} & \multirow{3}*{$\Delta m\%\downarrow$} \\
    \cmidrule(lr){2-3}\cmidrule(lr){4-5}\cmidrule(lr){6-10}
    & \multirow{2}*{mIoU $\uparrow$} & \multirow{2}*{Pix Acc $\uparrow$} & \multirow{2}*{Abs Err $\downarrow$} & \multirow{2}*{Rel Err $\downarrow$} & \multicolumn{2}{c}{Angle Distance $\downarrow$} & \multicolumn{3}{c}{Within $t^\circ$ $\uparrow$} & \\
    \cmidrule(lr){6-7}\cmidrule(lr){8-10}
    & & & & & Mean & Median & 11.25 & 22.5 & 30 & \\
    \midrule
    STL & 38.30 & 63.76 & 0.6754 & 0.2780 & 25.01 & 19.21 & 30.14 & 57.20 & 69.15 & \\
    \midrule
    LS & 39.29 & 65.33 & 0.5493 & 0.2263 & 28.15 & 23.96 & 22.09 & 47.50 & 61.08 & 7.89 & 5.59  \\
    SI & 38.45 & 64.27 & 0.5354 & 0.2201 & 27.60 & 23.37 & 22.53 & 48.57 & 62.32 & 7.33 & 4.39 \\
    RLW \citep{lin2021reasonable} & 37.17 & 63.77 & 0.5759 & 0.2410 & 28.27 & 24.18 & 22.26 & 47.05 & 60.62 & 9.89 & 7.78 \\
    DWA \citep{liu2019end}& 39.11 & 65.31 & 0.5510 & 0.2285 & 27.61 & 23.18 & 24.17 & 50.18 & 62.39 & 7.11 & 3.57 \\
    UW \citep{kendall2018multi}& 36.87 & 63.17 & 0.5446 & 0.2260 & 27.04 & 22.61 & 23.54 & 49.05 & 63.65 &  7.11 & 4.05 \\
    MGDA \citep{desideri2012multiple} & 30.47 & 59.90 & 0.6070 & 0.2555 & \textbf{24.88} & \textbf{19.45} & 29.18 & \textbf{56.88} & \textbf{69.36} & 5.56 & 1.38 \\
    MoCo \citep{fernando2022mitigating}& 40.30 & 66.07 & 0.5575 & \textbf{0.2135} & 26.67 & 21.83 & 25.61 & 51.78 & 64.85 & 5.00 & 0.16 \\
    MoDo \citep{chen2024three}& 35.28 & 62.62 & 0.5821 & 0.2405 & 25.65 & 20.33 & 28.04 & 54.86 & 67.37 & 7.55 & 0.49 \\
    Nash-MTL \cite{navon2022multi}& 40.13 & 65.93 & 0.5261 & 0.2171 & 25.26 & 20.08 & 28.40 & 55.47 & 68.15 & 3.33 & -4.04 \\
    FAMO \citep{liu2024famo} & 38.88 & 64.90 & 0.5474 & 0.2194 & 25.06 & 19.57 & \textbf{29.21} & 56.61 & 68.98 & 3.22 & -4.10 \\
    % SDMGrad & 40.47 & 65.90 & \textbf{0.5225} & \textbf{0.2084} & 25.07 & 19.99 & 28.54 & 55.74 & 68.53 & 2.89 & \textbf{-4.84} \\
    \midrule
    MGDA-warm start & \textbf{40.57} & \textbf{67.17} & \textbf{0.5240} & 0.2281 & 25.21 & 19.74 & 28.74 & 55.79 & 68.21 & \textbf{2.78} & \textbf{-4.42} \\
    \bottomrule
  \end{tabular}
  \end{adjustbox}
  \vspace{-0.1cm}
  \caption{Multi-task supervised learning on NYU-v2 dataset for different MOO methods comparison.}\label{tab:nyuv2}
  \vspace{-0.6cm}
\end{table}

\begin{figure}[ht]
\centering
    \vspace{-6pt} % align at the top
    \centering
    \begin{adjustbox}{max width=0.7\textwidth}
        \begin{tabular}{llllll}
            \toprule
            \multirow{2}*{Method} & \multicolumn{2}{c}{Segmentation} & \multicolumn{2}{c}{Depth} &
            \multirow{2}*{$\Delta m\%\downarrow$} \\
            \cmidrule(lr){2-3}\cmidrule(lr){4-5}
            & mIoU $\uparrow$ & Pix Acc $\uparrow$ & Abs Err $\downarrow$ & Rel Err $\downarrow$ & \\
            \midrule
            STL & 74.01 & 93.16 & 0.0125 & 27.77\\
            \midrule
            MGDA~\citep{desideri2012multiple} & 68.84 & 91.54 & 0.0309 & 33.50 &  44.14  \\
            PCGrad~\citep{yu2020gradient} & 75.13 & 93.48 & 0.0154 & 42.07 &  18.29  \\
            GradDrop~\citep{chen2020just} & 75.27 & 93.53 & 0.0157 & 47.54 &  23.73  \\
            CAGrad~\citep{liu2021conflict} & 75.16 & 93.48 & 0.0141 & 37.60 & 11.64  \\
            MoCo~\citep{fernando2022mitigating} & \textbf{75.42} & 93.55 & 0.0149 & 34.19 & 9.90 \\
            MoDo~\citep{chen2024three} & 74.55 & 93.32 & 0.0159 & 41.51 & 18.89 \\
            Nash-MTL~\citep{navon2022multi} & 75.41 & \textbf{93.66} & \textbf{0.0129} & 35.02 & 6.82 \\
            % SDMGrad \citep{xiao2024direction} & 74.53 & 93.52 & 0.0137 & 34.01 & 7.79 \\
            FAMO~\citep{liu2024famo} & 74.54 & 93.29 & 0.0145 & 32.59 & 8.13 \\
            \midrule
            MGDA-warm start & 75.41 & 93.46 & 0.0133 & \textbf{31.07} &  {\textbf{3.93$\pm$1.19}}\\
            % MGDA-FA & 74.38 & 93.24 & 0.0160 & 41.78 & 19.44 \\
            \bottomrule
        \end{tabular}
    \end{adjustbox}
    \captionsetup{type=table}
    \vspace{-0.1cm}
    \caption{Multi-task learning on Cityscapes dataset.}
    \label{tab:cityscapes}

% \hspace{2pt}
\vspace{-0.6cm}
\end{figure}

\section{Conclusion}
\vspace{-0.2cm}

% In this paper, we investigate the multi-objective problem with a more challenging, relaxed and realistic $\ell$-smooth assumption. We propose the first efficient MGDA with warm start and its stochastic variant for this problem. We provide the convergence guarantee for both algorithms to find an $\epsilon$-accurate Pareto stationary point with $\epsilon$-level average/iteration-wise CA distance. Extensive experiments are conducted to validate our theoretical results. 
% \newpage
Building upon our observations of MGDA convergence under the $(L_0, L_1)$ smoothness condition, this paper provides a rigorous convergence analysis for both the fundamental MGDA and its stochastic variant under a more challenging, relaxed, and practical generalized 
$\ell$-smoothness assumption.  Furthermore, we introduce a warm start progress to provide a more precise control over the iteration-wise CA distance. Our analysis also shows that an efficient variant named MGDA-FA, which uses only $\mathcal O(1)$ time and space, achieves the same performance guarantee as MGDA. We anticipate that the convergence analysis developed in this work will provide valuable insights for analyzing other MOO algorithms such as CAGrad, PCGrad, FairGrad and FAMO under the generalized smoothness condition. The warm-start strategy may be of independent interest to other single-loop MOO algorithms to achieve a sufficiently small iteration-wise CA distance.  

\section*{Acknowledgments}
The work of Q. Zhang and S. Zou was partially supported by NSF under Grant CCF-2438429. P. Xiao and K. Ji were partially supported by NSF grants CCF-2311274 and ECCS-2326592.

% different assumptions. 
% Lastly, the proposed algorithms are expected to be applicable to various applications, including reinforcement learning and autonomous driving.

\bibliography{ICLR_2025/iclr2025_conference}
\bibliographystyle{iclr2025_conference}

%%%%%%%%%%%%%%%%%%%%%%%%%%%%%%%%%%%%%%%%%%%%%%%%%%%%%%%%%%%%

\newpage
\appendix
\section{Notation summary}
We have summarized the notations used in this paper in the following table.
\begin{table}[ht]
\caption{{Notations and their descriptions.}}
  \label{tab:notations}
  \small
  \centering
  \begin{tabular}{l|l l }
  \toprule
  Notations & Descriptions   \\
  % & Symbol  & Domain \\
  \midrule
  $x \in \mathbb{R}^{m}$ &Model parameter, or decision variable &   \\
  $s_{t,i,k}$ & i-th collection of samples at time t in the stochastic model for training or testing &   \\
  % $S\in \mathcal{Z}^n$ &Dataset such that $S = \{z_1,\dots, z_n\}$&   \\
  % \makecell[l]{$f_{z,m}(x)$, $f_{S,m}(x)$ \\
  % } &
  % \makecell[l]{A scalar-valued objective function evaluated on data point $z$,\\ with $f_{z,m}: \mathbb{R}^d \mapsto \mathbb{R}$, 
  % or on dataset $S$, $f_{S,m}$, 
  % with $f_{S,m} := \frac{1}{|S|} \sum_{z\in S} f_{z,m}(x) $} \\
  $f_{k}(x)$ &A scalar-valued population objective function \\
  $\nabla f_{k}(x)$ &Gradient of $f_{k}(x)$, with 
  $\nabla f_{k}(x): \mathbb{R}^m \mapsto \mathbb{R}^m$ \\
  % \makecell[l]{$F_{z}(x)$, $F_{S}(x)$ \\
  % } &
  % \makecell[l]{A vector-valued objective function evaluated on data point $z$,\\ with $F_{z}: \mathbb{R}^d \mapsto \mathbb{R}^M$,
  % or on dataset $S$, with
  % $F_{S} := \frac{1}{|S|} \sum_{z\in S} F_{z}(x) $ 
  % } \\
  $F(x)$ & A vector-valued population objective function \\
  $\nabla F(x)$ &Gradient of $F(x)$, with 
  $\nabla F(x): \mathbb{R}^d \mapsto \mathbb{R}^{d \times K}$ \\
  $\ell(a)$ &  A continuous non-decreasing function: $[0, +\infty) \rightarrow (0, +\infty)$ \\
   $\varphi(a)$ &  A monotonically increasing for any $a\ge 0$: $\varphi(a)=\frac{a^2}{2\ell(2a)}$ \\
  $w \in \mathcal{W} $ &Weighting parameter in a probability simplex over $[K]$,  &\\
  $w^*_\rho \in \mathcal{W} $ & optimal solution to~\eqref{eq:weightrho}, when $\rho=0$, it is  simplified as $w^*$  \\
  % $\mathbf{1} \in \mathbb{R}^{M} $ &All-one vector with dimension $M$  &\\
  \hline 
  $\alpha$ &Step size to update model parameter $x$\\
  $\beta$ & Step size to update weight in the main loop\\
  $\beta^\prime$ &Step size to update weight in the warm start\\
  $\rho$ &Regularization parameter in~\eqref{eq:weightrho} \\
  \bottomrule
  \end{tabular}
  \vspace{0.2cm}
\end{table}
\section{Experimental details}\label{app:exp}
\subsection{Relation between gradient norms and the local smoothness}
We show the relation between local smoothness and gradient norms of each task in this part. Both results demonstrate a positive correlation between them, which further substantiates the necessity of our analysis.
\begin{figure}[H]
\centering
\begin{minipage}[t]{0.49\textwidth}
    \centering
    \includegraphics[width=\textwidth]{scatter_plot_GS_task1.png}
    % \caption{}
    \label{fig:first-figure}
\end{minipage}
% \vspace{-0.5cm}
% \hfill
\begin{minipage}[t]{0.49\textwidth}
    \centering
    \includegraphics[width=\textwidth]{scatter_plot_GS_task2.png}
    % \caption{Caption for the second figure}
\end{minipage}
\caption{Local smoothness constant vs. Gradient norm on training SegNet on CityScapes dataset of each task. Task 1 on the left and Task 2 on the right.} \label{fig:task2}
\end{figure}
\subsection{Results and running time comparison between MGDA-warm start and MGDA-FA}
We compare the results and average running time of the proposed algorithms, MGDA-warm start and MGDA-FA of the Cityscapes \citep{cordts2016cityscapes}. The time in \Cref{tab:mgdaandmgda-fa} is an average of the total running time over epochs (in minutes). The result solidifies the advantage of the fast approximation. {For its performance, the fast approximation may introduce substantial errors in practice, potentially leading to inaccuracies in the weight update process. Meanwhile, the convergence analysis of the stochastic variant is not revealed. However, to enhance its effectiveness, we could apply a logarithmic technique in FAMO\citep{liu2024famo}}.
\begin{figure}[ht]
\centering
    \vspace{0pt} % align at the top
    \centering
    \begin{adjustbox}{max width=\textwidth}
        \begin{tabular}{lcccccc}
            \toprule
            \multirow{2}*{Method} & \multicolumn{2}{c}{Segmentation} & \multicolumn{2}{c}{Depth} &
            \multirow{2}*{$\Delta m\%\downarrow$} &
            \multirow{2}*{Average running time} \\
            \cmidrule(lr){2-3}\cmidrule(lr){4-5}
            & mIoU $\uparrow$ & Pix Acc $\uparrow$ & Abs Err $\downarrow$ & Rel Err $\downarrow$ & \\
            \midrule
            STL & 74.01 & 93.16 & 0.0125 & 27.77\\
            \midrule
            MGDA-warm start & 75.41 & 93.46 & 0.0133 & 31.07 &  3.93 & 2.93\\
            MGDA-FA & 74.38 & 93.24 & 0.0160 & 41.78 & 19.44 & 1.93\\
            \bottomrule
        \end{tabular}
    \end{adjustbox}
    \captionsetup{type=table}
    \caption{Multi-task learning on Cityscapes dataset.}
    \label{tab:mgdaandmgda-fa}

% \hspace{2pt}
% \vspace{-0.3cm}
\end{figure}
% \begin{table}[ht]
%   \centering
%   \begin{tabular}{cc}
%     \toprule
%     Method & {Average running time} \\
%     % \cmidrule(lr){2-3}
%     \midrule
%     MGDA-warm start & 2.93 \\
%     MGDA-FA & 1.93 \\
%     \bottomrule
%   \end{tabular}
%   \vspace{0.3cm}
%   \caption{Average running time comparison between MGDA-warm start and MGDA-FA.}
%   \label{tab:time}
%   \vspace{-0.5cm}
% \end{table}
\subsection{Results of other MOO methods with warm start}
{The warm start strategy can be applied to other multi-objective optimization algorithms. For MGDA-based methods, both MoCo \citep{fernando2022mitigating} and MoDo \citep{chen2024three} can incorporate the warm start as an add-on. However, the MoDo-warm start will exactly recover to our algorithm. Therefore, we evaluate the performance of the MoCo-warm start on the Cityscapes \citep{cordts2016cityscapes} and NYU-v2 \citep{silberman2012indoor} datasets. As shown in the following tables, the warm start strategy improves performance.}
\begin{table}[ht]
  \centering
  % \small
    \begin{adjustbox}{max width=0.7\textwidth}
  \begin{tabular}{llllll}
    \toprule
    \multirow{2}*{Method} & \multicolumn{2}{c}{Segmentation} & \multicolumn{2}{c}{Depth} & 
    \multirow{2}*{$\Delta m\%\downarrow$} \\
    \cmidrule(lr){2-3}\cmidrule(lr){4-5}
    % & \multicolumn{2}{c}{(Higher Better)} & \multicolumn{2}{c}{(Lower Better)} &  \\
    & mIoU $\uparrow$ & Pix Acc $\uparrow$ & Abs Err $\downarrow$ & Rel Err $\downarrow$ & \\
    \midrule
    MoCo & 75.42 & 93.55 & 0.0149 & 34.19 & 9.90\\
    MoCo-warm start & 75.48 & 93.54 & 0.0148 & 31.43 & 7.32 \\
    \bottomrule
  \end{tabular}
  \end{adjustbox}
  % \vspace{0.1cm}
  \caption{Multi-task supervised learning with MoCo-warm start on Cityscapes dataset.}\label{tab:cityscapes-addon}
  % \vspace{-0.4cm}
\end{table}

\begin{table}[ht]
  \centering
  \begin{adjustbox}{max width=\textwidth}
  \begin{tabular}{lllllllllll}
    \toprule
    \multirow{3}*{Method} & \multicolumn{2}{c}{Segmentation} & \multicolumn{2}{c}{Depth} & \multicolumn{5}{c}{Surface Normal}  & \multirow{3}*{$\Delta m\%\downarrow$} \\
    \cmidrule(lr){2-3}\cmidrule(lr){4-5}\cmidrule(lr){6-10}
    & \multirow{2}*{mIoU $\uparrow$} & \multirow{2}*{Pix Acc $\uparrow$} & \multirow{2}*{Abs Err $\downarrow$} & \multirow{2}*{Rel Err $\downarrow$} & \multicolumn{2}{c}{Angle Distance $\downarrow$} & \multicolumn{3}{c}{Within $t^\circ$ $\uparrow$} & \\
    \cmidrule(lr){6-7}\cmidrule(lr){8-10}
    & & & & & Mean & Median & 11.25 & 22.5 & 30 & \\
    \midrule
    MoCo & 40.30 & 66.07 & 0.5575 & 0.2135 & 26.67 & 21.83 & 25.61 & 51.78 & 64.85 &  0.16 \\
    \midrule
    MoCo-warm start & 38.40 & 64.40 & 0.5377 & 0.2315 & 26.04 & 20.57 & 27.11 & 54.02 & 66.63 &  -0.88  \\
    \bottomrule
  \end{tabular}
  \end{adjustbox}
  % \vspace{0.1cm}
  \caption{Multi-task supervised learning with MoCo-warm start on NYU-v2 dataset.}\label{tab:nyuv2-addon}
  % \vspace{-0.5cm}
\end{table}

\subsection{Implementation details}
\textbf{Multi-task learning on Cityscapes dataset.} Following the experiment setup in \cite{xiao2024direction}, we train
our method for 200 epochs, using SGD optimizers for both model parameters and weights, and the batch size for Cityscapes is 8. We
compute the averaged test performance over the last 10 epochs as the final performance measure. We fix the $\beta=0.5$ and do a grid search on hyperparameters including $N\in[10, 20, 40, 50], \alpha\in[0.0001, 0.0002, 0.0005, 0.001]$, and $\rho\in[0.01, 0.05, 0.1, 0.2, 0.5, 0.6,0.7, 0.8,0.9, 1]$ and choose the best result from them. It turns out our best performance is based on the choice that $N=40, \alpha=0.0005,\beta=0.5$, and $\rho=0.5$. The choice of hyperparameters for MGDA-FA turns out to be the same as that for MGDA-warm start. All experiments are run on NVIDIA RTX A6000.

\textbf{Multi-task learning on NYU-v2 dataset.} Following the experiment setup in \cite{xiao2024direction}, we train
our method for 200 epochs, using SGD optimizers for both model parameters and weights, and the batch size for NYU-v2 is 2. We
compute the averaged test performance over the last 10 epochs as the final performance measure. We fix the $\beta=0.5$ and do a grid search on hyperparameters including $N\in[10, 20, 40, 50], \alpha\in[0.0001, 0.0002, 0.0005, 0.001]$, and $\rho\in[0.01, 0.05, 0.1, 0.2, 0.5, 0.6,0.7, 0.8,0.9, 1]$ and choose the best result from them. It turns out our best performance is based on the choice that $N=40, \alpha=0.0001,\beta=0.5$, and $\rho=0.5$. All experiments are run on NVIDIA RTX A6000.

$\Delta m\%$ reflects the average per-task performance drop versus the single-task (STL) baseline $b$ to assess method $m$. We calculate it by the following equation
$$\Delta m\%=\frac{1}{K}\sum_{k=1}^K (-1)^{l_k} (M_{m,k}-M_{b,k})\slash M_{b,k}\times 100,$$ where $K$ is the number of metrics, $M_{b,k}$ is the value of metric $M_k$ obtained by baseline $b$, and $M_{m,k}$ obtained by the compared method $m$. $l_k=1$ if the evaluation metric $M_k$ on task $k$ prefers a higher value and $0$ otherwise.

\textbf{Generalized smoothness illustration.} {To illustrate the relation between gradient norms and local smoothness, we follow the method in Section H.3 in \cite{zhang2019gradient}and run SGD on each task separately without the warm start process. The code is available in \href{https://github.com/JingzhaoZhang/why-clipping-accelerates}{https://github.com/JingzhaoZhang/why-clipping-accelerates.}} Since there is no weight update process, we only need to choose $\alpha=0.0005$ for both tasks. 
% \section{Notations}
% In this part, we first summarize all the notations that we used in this paper in order to help readers understand. First, in multi-objective optimization, we have $K\geq2$ different objectives and each of them has the objective function $f_i(\theta)$. Let $\nabla f_i(x)$ denote the gradient of the objective $i$. $w=(w_1, ..., w_K)^T\in\mathbb{R}^K\;\; and\;\; \mathcal{W}$ denotes the probability simplex. Other useful notations are listed below:
% \xiao{list notations we will use in the following proof}
\section{Algorithm}\label{sec:algorithm4}
% We show our MGDA with Fast Approximation (MGDA-FA): 
\begin{algorithm}[htbp]
   \caption{ MGDA with Fast Approximation (MGDA-FA)}
   \label{alg:gs-famo}
\begin{algorithmic}[1]\label{alg:4}
    \STATE {\bfseries Initialize:}  model parameters $x_0$, weights $w_0$ and a constant $\rho$
    % \STATE{$x_1=x_0-\alpha_0\nabla F(x_0)w_0$}
    \STATE{$w_0$=\textbf{Warm-start}($w_0$, $x_0$, $\rho$)}
    \FOR{$t=0, 1,...,T-1$}
    \STATE{$x_{t+1}=x_t-\alpha \nabla F(x_t)w_t$}
    \STATE{Update $w_t$ according to \cref{eq:updaterule}}
    
\ENDFOR
\end{algorithmic}
\end{algorithm}

\section{Detailed Proofs for Average CA Distance}
\subsection{Formal version and proof of \Cref{thm:gddeterministic}}\label{proof:gddeterministic-avg}
%Let $c_1>0, c_2>0$, $c_3\ge 0$ and $F>0$ be some constants such that 
Let $F>0$ be some constants such that
%\begin{flalign}\label{eq:selectf}
%    \Delta+c_1+{ c_2}+ c_3\le F.
%\end{flalign}
\begin{flalign}\label{eq:selectf}
    \Delta+{3}\le F.
\end{flalign}
Define $M=\sup\{z\ge0|\varphi(z)\le F\}$. We then have the following convergence rate for  Algorithm \ref{alg:1} without warm start:
\begin{theorem}
Suppose Assumptions \ref{ass:diffandlowerbound} and \ref{ass:lsmooth} are satisfied, and we choose constant step sizes that $\beta\le \frac{1}{4KM^2}, \alpha\le \min\Big\{{\beta},\frac{1}{2\ell(M+1)},\frac{1}{M\ell(M+1)}\Big\}, T\ge\max\Big\{(\frac{10\Delta}{\alpha \epsilon^2},\frac{10}{\epsilon^2\beta}\Big\})= \Theta(\epsilon^{-2})$, and $\rho\le\min \left(\frac{\epsilon^2}{20}, \sqrt{\frac{\epsilon^2}{10\beta}},  {\frac{1}{2T\alpha}, \sqrt{\frac{1}{T\alpha \beta}}}\right)= \mathcal O(\epsilon^2)$. We have that
\begin{flalign}
    \frac{1}{T}\sum_{t=0}^{T-1}\|\nabla F(x_t)w_t\|^2\le \epsilon^2.
\end{flalign}
\end{theorem}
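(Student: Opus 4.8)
The plan is to run a descent-type analysis on the scalarized function $F(x_t)w$ for an \emph{arbitrary} fixed $w\in\mathcal{W}$, and to simultaneously control the weight iterate via the projected gradient step on the strongly convex objective in \cref{eq:weightrho}. The central difficulty, already flagged in the paper, is that under $\ell$-smoothness the gradient norm $\|\nabla F(x_t)\|$ is \emph{a priori} unbounded, so before any smoothness inequality can be applied I must first establish an a priori bound. The mechanism for this is the quantity $\varphi(a)=a^2/(2\ell(2a))$ and its ``inverse'' $M=\sup\{z\ge0:\varphi(z)\le F\}$: I would prove by induction on $t$ that $f_i(x_t)-f_i^*\le F$ for every $i\in[K]$ (equivalently $F(x_t)w\le F^*w+F$ for all $w\in\mathcal{W}$), and that this function-value bound forces $\|\nabla f_i(x_t)\|\le M$. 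The latter implication is the key structural fact: since $f_i$ is $\ell$-smooth and lower bounded, a standard generalized-smoothness descent lemma (of the type in \cite{li2024convex}) gives $\varphi(\|\nabla f_i(x)\|)\le f_i(x)-f_i^*$, so $f_i(x_t)-f_i^*\le F$ together with the definition of $M$ yields $\|\nabla f_i(x_t)\|\le M$. Consequently $\|\nabla F(x_t)\|\le \sqrt{K}\,M$ and, because $w\in\mathcal{W}$, the update direction satisfies $\|d_t\|=\|\nabla F(x_t)w_t\|\le M$, so locally the function behaves as if it were $L$-smooth with $L=\ell(M+1)$.

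The inductive step is where the step-size conditions enter. Assuming the bound holds up to iteration $t$, I would write the one-step change of $F(x_{t+1})w$ using the local Lipschitz/descent inequality valid on the ball $B(x_t,r(\|\nabla F(x_t)\|))$; the step $\|x_{t+1}-x_t\|=\alpha\|d_t\|\le\alpha M$ must stay inside this ball, which is exactly what the condition $\alpha\le 1/(M\ell(M+1))$ (and the companion $\alpha\le 1/(2\ell(M+1))$) is arranged to guarantee. This produces, for each $w$, a descent inequality of the schematic form
\begin{align}
F(x_{t+1})w \le F(x_t)w - \alpha\langle \nabla F(x_t)w,\, \nabla F(x_t)w_t\rangle + \frac{\alpha^2 \ell(M+1)}{2}\|\nabla F(x_t)w_t\|^2. \nonumber
\end{align}
Specializing to $w=w_t$ makes the inner product equal $\|\nabla F(x_t)w_t\|^2$, and with $\alpha\ell(M+1)\le 1/2$ the quadratic term is absorbed, giving the clean per-step decrease
\begin{align}
F(x_{t+1})w_t \le F(x_t)w_t - \frac{\alpha}{2}\|\nabla F(x_t)w_t\|^2. \nonumber
\end{align}
Summing and telescoping over $t=0,\dots,T-1$ then bounds $\frac{1}{T}\sum_t\|\nabla F(x_t)w_t\|^2$ by $\frac{2\Delta}{\alpha T}$ up to lower-order terms, which is $\le\epsilon^2$ once $T\gtrsim \Delta/(\alpha\epsilon^2)$. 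The subtlety is that $w_t$ changes with $t$, so the telescoping is not over a single scalar function; I would handle this by adding and subtracting $F(x_t)w_{t-1}$ terms and controlling the discrepancy $\|w_t-w_{t-1}\|$ through the weight update, which is where the $\rho$- and $\beta$-dependent error terms (the budgets $c_1,c_2,c_3$ in \cref{eq:selectf}) come from.

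The remaining ingredient is the weight dynamics: the projected gradient step on $\frac12\|\nabla F(x_t)w\|^2+\frac{\rho}{2}\|w\|^2$ is a contraction toward $w_{t,\rho}^*$ because of $\rho$-strong convexity, with step size $\beta\le 1/(4KM^2)$ ensuring the relevant operator has spectral radius below one (here $\|\nabla F(x_t)^\top\nabla F(x_t)\|\le KM^2$ is used). The $\rho$-regularization introduces only an $O(\rho)$ bias between $w_{t,\rho}^*$ and the true optimizer $w_t^*$, which is why $\rho\sim\mathcal{O}(\epsilon^2)$ suffices; the budget constraints $\rho\le\sqrt{\epsilon^2/(10\beta)}$, $\rho\le c_2/(2T\alpha)$, and $\rho\le\sqrt{c_3/(T\alpha\beta)}$ precisely ensure the accumulated regularization and drift errors, once telescoped, stay within the constant slack $c_1+c_2+c_3$ that keeps the induction hypothesis $\Delta+c_1+c_2+c_3\le F$ intact. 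I expect the main obstacle to be making the induction airtight: the bound $\|\nabla f_i(x_t)\|\le M$ is needed to justify the very descent step that is used to prove $f_i(x_{t+1})-f_i^*\le F$, so the argument must be set up so that the function-value bound at time $t$ and the step-size conditions together close the loop at time $t+1$ without circularity, and so that the cumulative error terms never exhaust the budget $F$ over all $T\sim\Theta(\epsilon^{-2})$ iterations.
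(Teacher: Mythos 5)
Your setup is exactly right and matches the paper's proof up to the descent inequality: the induction hypothesis $f_i(x_t)-f_i^*\le F$, the implication $\|\nabla f_i(x_t)\|\le M$ via Lemma \ref{lemma:varphi}, the step-size conditions $\alpha\le\frac{1}{M\ell(M+1)}$ keeping $x_{t+1}$ inside the $(r,\ell)$-ball, and the resulting inequality $F(x_{t+1})w\le F(x_t)w-\alpha\langle\nabla F(x_t)w,\nabla F(x_t)w_t\rangle+\frac{\ell(M+1)}{2}\alpha^2\|\nabla F(x_t)w_t\|^2$ are all identical to \Cref{proof:gddeterministic-avg}. The gap is in what you do next. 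Specializing to $w=w_t$ and then telescoping by "adding and subtracting $F(x_t)w_{t-1}$ and controlling $\|w_t-w_{t-1}\|$" does not close at these parameters: the per-step drift correction $F(x_{t+1})(w_{t+1}-w_t)$ is generically of size $\Theta(\beta)$ (since $\|w_{t+1}-w_t\|\le\beta(\sqrt{K}M^2+\rho)$ and there is no sign cancellation with the function values), so the accumulated error over $T$ iterations is $\Theta(T\beta)$. But the theorem requires $T\ge\frac{10}{\epsilon^2\beta}$, i.e.\ $T\beta\gtrsim\epsilon^{-2}$, so this error blows up as $\epsilon\to0$, exhausting the induction budget $c_1+c_2+c_3$ and destroying the final bound. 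A secondary problem: the induction must certify $f_i(x_{t+1})-f_i^*\le F$ for \emph{each task separately} (to invoke $\varphi$ coordinate-wise), which requires the descent accounting at $w=e_i$; with $w=e_i$ the cross term $-\alpha\langle\nabla f_i(x_t),\nabla F(x_t)w_t\rangle$ has no sign, so your $w=w_t$ specialization cannot deliver the per-task bounds either.

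The paper's mechanism, which your proposal is missing, is to keep $w\in\mathcal{W}$ fixed and arbitrary throughout, decompose $\langle\nabla F(x_t)w,\nabla F(x_t)w_t\rangle=\|\nabla F(x_t)w_t\|^2-\langle w_t-w,\nabla F(x_t)^\top\nabla F(x_t)w_t\rangle$, and bound the second term by the standard projected-gradient three-point inequality: by non-expansiveness of $\Pi_{\mathcal W}$,
\begin{flalign*}
\langle w_t-w,\nabla F(x_t)^\top\nabla F(x_t)w_t\rangle\le\frac{1}{2\beta}\left(\|w_t-w\|^2-\|w_{t+1}-w\|^2\right)+2\rho+\beta KM^2\|\nabla F(x_t)w_t\|^2+\beta\rho^2,
\end{flalign*}
as in \eqref{eq:wdecsent}. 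Summing over $t$, the $\|w_t-w\|^2$ differences telescope, so the entire cost of the moving weights is $\frac{\alpha}{2\beta}\|w_0-w\|^2\le\frac{2\alpha}{\beta}\le 2c_1$ — a constant independent of $T$ — rather than your $\Theta(T\beta)$; the terms $\beta KM^2\|\nabla F(x_t)w_t\|^2$ (absorbed by $\beta\le\frac{1}{4KM^2}$, which is the actual role of that condition, not a spectral-radius requirement) and $2T\alpha\rho+T\alpha\beta\rho^2\le c_2+c_3$ account for the remaining budgets. Taking $w=e_i$ in the summed inequality closes the induction per task, and $w$ arbitrary at $k=T-1$ gives the stated rate. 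Your proposal correctly anticipated where the budgets $c_1,c_2,c_3$ enter, but without this telescoping identity the argument as sketched would fail.
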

%\begin{theorem}
%Suppose Assumptions \ref{ass:diffandlowerbound} and \ref{ass:lsmooth} are satisfied, and we choose constant step sizes that $\beta\le \frac{1}{4KM^2}, \alpha\le \min\left(c_1\beta,\frac{1}{2\ell(M+1)},\frac{1}{M\ell(M+1)}\right), T\ge\max\left(\frac{10\Delta}{\alpha \epsilon^2},\frac{10}{\epsilon^2\beta}\right)= \Theta(\epsilon^{-2})$, and $\rho\le\min \left(\frac{\epsilon^2}{20}, \sqrt{\frac{\epsilon^2}{10\beta}},  \frac{c_2}{2T\alpha}, \sqrt{\frac{c_3}{T\alpha \beta}}\right)= \mathcal O(\epsilon^2)$. We have that
%\begin{flalign}
%    \frac{1}{T}\sum_{t=0}^{T-1}\|\nabla F(x_t)w_t\|^2\le \epsilon^2.
%\end{flalign}
%\end{theorem}
\begin{proof}
Compared with the standard $L$-smoothness, the generalized smoothness is more challenging to address due to the unbounded Lipschitz constant. Lemma \ref{lemma:varphi} demonstrates that a bounded function value implies a bounded gradient norm, which further implies a bounded Lipschitz constant. In the following, we solve the unbounded Lipschitz constant problem by showing that the function value is bounded with the parameters selected in \Cref{thm:gddeterministic}. We prove that for any $i\in K$ and $t\le T$ we have that $f_i(x_t)-f_i^*\le F$ by induction.

\textbf{Base Case:} since $ M $ is non-negative, according to \eqref{eq:selectf} we have that $f_i(x_0)-f_i^*\le \Delta\le F$ holds for any $i\in[K]$.

\textbf{Induction step:} assume that for any $i\in[K]$ and $t\le k<T$, we have that $f_i(x_t)-f_i^*\le F$ holds. We then prove that $f_i(x_{k+1})-f_i^*\le F$ holds for any $i\in[K]$.

For $f_i(x_t)-f_i^*\le F$, based on the monotonicity shown in Lemma \ref{lemma:varphi}, we have that $\|\nabla f_i(x_t)\|\le M$. From assumption \ref{ass:lsmooth}, we have that $f_i(x)$ is $\left(\frac{1}{\ell(\|\nabla f_i(x))\|+1)},\ell(\|\nabla f_i(x)\|+1)\right)$-smooth by setting $a=1$. For any $t\le k$, we have that 
\begin{flalign}
    \|x_{t+1}-x_t\|=\alpha \|\nabla F(x_t)w_t\|\le \alpha M\le \frac{1}{\ell(M+1)}\le \frac{1}{\ell(\|\nabla f_i(x_t)\|+1)},\nonumber
\end{flalign}
where the second inequality is due to $\alpha\le \frac{1}{M\ell(M+1)}$ and the last inequality is due to $\|\nabla f_i(x_t)\|\le M$. Based on Assumption \ref{ass:lsmooth}, Definition \ref{define:rlsmooth} and Lemma 3.3 in \citep{li2024convex}, we have the following descent lemma:
\begin{flalign*}
    f_i(x_{t+1})&\le f_i(x_t)-\alpha\langle\nabla f_i(x_t),\nabla F(x_t)w_t\rangle+\frac{\ell(\|\nabla f_i(x_t)\|+1)}{2}\alpha^2\|\nabla F(x_t)w_t\|^2\nonumber\\
    &\le f_i(x_t)-\alpha\langle\nabla f_i(x_t),\nabla F(x_t)w_t\rangle+\frac{\ell(M+1)}{2}\alpha^2\|\nabla F(x_t)w_t\|^2.
\end{flalign*}
As a result, for any $w \in \mathcal W$, we have that
\begin{flalign}\label{eq:fwdescent}
    F(x_{t+1})w\le F(x_{t})w-\alpha\langle\nabla F(x_t)w,\nabla F(x_t)w_t\rangle+\frac{\ell(M+1)}{2}\alpha^2\|\nabla F(x_t)w_t\|^2.
\end{flalign}
Based on the update process of $w$, we have that 
\begin{flalign*}
   w_{t+1}
   &=\Pi_\mathcal{W}\Big(w_t-\beta\Big(\nabla F(x_t)^\top\nabla F(x_t)w_t+\rho w_t\Big)\Big).  
\end{flalign*}
It then follows that
\begin{flalign*}
    &\|w_{t+1}-w\|^2\nonumber\\
&=\Big\|\Pi_\mathcal{W}\Big(w_t-\beta\Big(\nabla F(x_t)^\top\nabla F(x_t)w_t+\rho w_t\Big)\Big)-w\Big\|^2\nonumber\\
    &\le \Big\|\Big(w_t-\beta\Big(\nabla F(x_t)^\top\nabla F(x_t)w_t+\rho w_t\Big)\Big)-w\Big\|^2\nonumber\\
    &=\|w_t-w\|^2-2\beta\left\langle w_t-w, (\nabla F(x_t)^\top\nabla F(x_t)+\rho I)w_t\right\rangle\nonumber\\
    &+\beta^2\left\|(\nabla F(x_t)^\top\nabla F(x_t)+\rho I)w_t\right\|^2,
\end{flalign*}
where the inequality is due to the non-expansiveness of projection.
By rearranging the above inequality, we have that
\begin{flalign}\label{eq:wdecsent}
    &\langle w_t-w, \nabla F(x_t)^\top\nabla F(x_t)w_t\rangle\nonumber\\
    &\le \frac{1}{2\beta}\left(\|w_{t}-w\|^2-\|w_{t+1}-w\|^2\right)+2\rho+\beta KM^2\|\nabla F(x_t)w_t\|^2+\beta \rho^2.
\end{flalign}
Plug \eqref{eq:wdecsent} into \eqref{eq:fwdescent}, and we can show that 
\begin{flalign}\label{eq:fwdescent2}
    &F(x_{t+1})w- F(x_{t})w\nonumber\\
    &\le -\alpha\|\nabla F(x_t)w_t\|^2+\frac{\ell(M+1)}{2}\alpha^2\|\nabla F(x_t)w_t\|^2\nonumber\\
    &+\frac{\alpha}{2\beta}\left(\|w_{t}-w\|^2-\|w_{t+1}-w\|^2\right)+\alpha\beta KM^2\|\nabla F(x_t)w_t\|^2+\alpha\beta \rho^2+2\alpha\rho.
\end{flalign}
Taking sums of \eqref{eq:fwdescent2} from $t=0$ to $k$, for any $w\in\mathcal W$ we have that
\begin{flalign}\label{eq:gddetermi}
    &F(x_{k+1})w- F(x_{0})w\nonumber\\
    &\le -\sum_{t=0}^k\alpha\|\nabla F(x_t)w_t\|^2+\sum_{t=0}^k\left(\frac{\ell(M+1)}{2}\alpha^2+\alpha\beta KM^2\right)\|\nabla F(x_t)w_t\|^2\nonumber\\
    &+\frac{\alpha}{2\beta}\|w_{0}-w\|^2+T\alpha\beta\rho^2+2T\alpha\rho\nonumber\\
    &\le \frac{\alpha}{2\beta}\|w_{0}-w\|^2+T\alpha\beta\rho^2+2T\alpha\rho,
\end{flalign}
where the first inequality is due to $k<T$ and the last inequality is due to that $\alpha\le \frac{1}{2\ell(M+1)}$ and $\beta KM^2 \le{\frac{1}{4}}$.
Thus for any $i\in [K]$ it can be shown that
\begin{flalign*}
    f_i(x_{t+1})-f_i^*\le f_i(x_0)-f_i^*+\frac{\alpha}{\beta}+T\alpha\beta\rho^2+2T\alpha\rho\le F,
\end{flalign*}
since we have that {$\frac{\alpha}{\beta}\le 1, T\alpha \beta \rho^2\le 1$ and $2T\alpha \rho \le 1$}. Now we finish the induction step and can show that $f_i(x_{k+1})-f_i^*\le F$ and \eqref{eq:gddetermi} hold for all $k<T$ and $i\in [K]$.\\

Specifically, for $\alpha<\frac{1}{2\ell(M+1)},\beta\le\frac{1}{4KM^2}$, according to \eqref{eq:gddetermi}, for $k=T-1$
we have that
\begin{flalign*}
    \frac{1}{T}\sum_{t=0}^{T-1}\|\nabla F(x_t)w_t\|^2\le \frac{2(F(x_0)w-F^*w)}{\alpha T}+\frac{2}{\beta T}+2\beta\rho^2+4\rho=\mathcal{O}(\epsilon^2),
\end{flalign*}
which completes our proof.
\end{proof}
\begin{lemma}\label{lemma:varphi}
    (Lemma 3.5 in \cite{li2024convex}) If a function $f$ is $\ell$-smooth, we have that
    \begin{flalign}\label{eq:lemma1}
        \varphi(\|\nabla f(x)\|)=\frac{\|\nabla f(x)\|^2}{2\ell(2\|\nabla f(x)\|)}\le f(x)-f^*.
    \end{flalign}
\end{lemma}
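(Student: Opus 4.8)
The plan is to produce a single point with strictly smaller function value than $x$ by taking one negative-gradient step of carefully chosen length, and then to invoke $f^* = \inf f$ as a lower bound on that value. Write $g = \|\nabla f(x)\|$; if $g = 0$ the inequality is immediate since its left-hand side vanishes while $f(x) - f^* \ge 0$, so I assume $g > 0$. I would consider the straight-line trajectory $x(s) = x - s\,\nabla f(x)$ for $s \ge 0$ and track the gradient norm $u(s) = \|\nabla f(x(s))\|$ along it, noting $u(0) = g$.

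The central difficulty, and precisely the reason the denominator is $\ell(2g)$ rather than $\ell(g)$, is that the relevant local smoothness constant at $x(s)$ is controlled by $u(s)$, which itself can grow as we move, so I must first bound this growth. Differentiating along the path and applying $\ell$-smoothness gives $u'(s) \le \|\nabla^2 f(x(s))\|\,\|\nabla f(x)\| \le \ell(u(s))\,g$ almost everywhere (legitimate because the a.e.\ Hessian bound makes $\nabla f$ locally Lipschitz, hence $u$ absolutely continuous). I would then run a bootstrap argument: set $s^* = \sup\{s : u(\sigma) \le 2g \text{ for all } \sigma \le s\}$. For $s < s^*$ monotonicity of $\ell$ yields $\ell(u(s)) \le \ell(2g)$, hence $u'(s) \le \ell(2g)\,g$ and, upon integrating, $u(s) \le g + \ell(2g)\,g\,s$. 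If $s^* < 1/\ell(2g)$ were possible, continuity would force $u(s^*) = 2g$ while the integrated bound gives $u(s^*) < 2g$, a contradiction; therefore $u(s) \le 2g$, and consequently $\|\nabla^2 f(x(s))\| \le \ell(2g)$, throughout $s \in [0, 1/\ell(2g)]$.

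On this interval the problem is effectively $\ell(2g)$-smooth along the line, so I would close the argument with a second-order Taylor expansion. Using $\tfrac{d}{ds} f(x(s))\big|_{s=0} = -g^2$ together with $\tfrac{d^2}{ds^2} f(x(s)) = \langle \nabla f(x), \nabla^2 f(x(s))\,\nabla f(x)\rangle \le \ell(2g)\,g^2$, the integral form of the remainder gives $f(x(\eta)) \le f(x) - \eta g^2 + \tfrac{1}{2}\ell(2g)\,g^2\,\eta^2$ for every $\eta \le 1/\ell(2g)$. Taking the endpoint $\eta = 1/\ell(2g)$ collapses the right-hand side to $f(x) - \tfrac{g^2}{2\ell(2g)}$. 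Since $f(x(\eta)) \ge f^*$, rearranging yields $f(x) - f^* \ge \tfrac{g^2}{2\ell(2g)} = \varphi(g)$, which is exactly the claim.

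I expect the gradient-growth control, i.e.\ the bootstrap bound $u(s) \le 2g$, to be the main obstacle: it is what pins the admissible step length at $1/\ell(2g)$ and is the only place where the a.e.\ Hessian bound and the monotonicity of $\ell$ must be combined with care. Everything downstream is a routine descent-lemma computation with the now-constant smoothness parameter $\ell(2g)$.
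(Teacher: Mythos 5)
Your proof is correct, and it matches the standard argument: the paper does not prove this lemma itself but imports it as Lemma 3.5 of \cite{li2024convex}, whose proof likewise takes one gradient step of size $1/\ell(2\|\nabla f(x)\|)$ after establishing (via the $(r,\ell)$-smoothness equivalence, which internally contains exactly your bootstrap showing the gradient norm stays below $2\|\nabla f(x)\|$ on a ball of radius $\|\nabla f(x)\|/\ell(2\|\nabla f(x)\|)$) that the local smoothness constant along the step is $\ell(2\|\nabla f(x)\|)$, and then bounds $f(x(\eta))$ below by $f^*$. Your version is a self-contained specialization of that argument to the line segment; the only implicit assumption, that $x(s)$ remains in the domain, is automatic here since the paper works with objectives on all of $\mathbb{R}^m$.
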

\subsection{Proof of \Cref{coro:gdaverageca}}\label{proof:gdaverageca}
\begin{proof}
{Recall that 
% \begin{align*}
% \|\nabla F(x_t)w_t-\nabla F(x_t)w_t^*\|^2\leq\|\nabla F(x_t)w_t\|^2-\|\nabla F(x_t)w_t^*\|^2\leq\|\nabla F(x_t)w_t\|^2,
% \end{align*}
\begin{align*}
&\|\nabla F(x_t)w_t-\nabla F(x_t)w_t^*\|^2\\
=&\|\nabla F(x_t)w_t\|^2+\|\nabla F(x_t)w_t^*\|^2-2\langle\nabla F(x_t)w_t,\nabla F(x
_t)w_t^*\rangle\\
\leq&\|\nabla F(x_t)w_t\|^2+\|\nabla F(x_t)w_t^*\|^2-2\|\nabla F(x_t)w_t^*\|^2\\
=&\|\nabla F(x_t)w_t\|^2-\|\nabla F(x_t)w_t^*\|^2\\
\leq&\|\nabla F(x_t)w_t\|^2,
\end{align*}
where the first inequality holds because of the optimality condition of $w_t^*=\arg\min_{w\in\mathcal{W}} \frac{1}{2}\|\nabla F(x_t)w\|^2$ that for any $w\in\mathcal{W}$, $$\langle w, \nabla F(x_t)^\top\nabla F(x_t)w_t^*\rangle\geq\langle w_t^*, \nabla F(x_t)^\top\nabla F(x_t)w_t^*\rangle=\|\nabla F(x_t)w_t^*\|^2.$$} Then we have
\begin{align*}
\frac{1}{T}\sum_{t=0}^{T-1}\|\nabla F(x_t)w_t-\nabla F(x_t)w_t^*\|^2\leq\frac{1}{T}\sum_{t=0}^{T-1}\|\nabla F(x_t)w_t\|^2=\mathcal{O}(\epsilon^2),
\end{align*}
where we follow the same setting in \Cref{thm:gddeterministic}. The proof is complete.
\end{proof}

\subsection{Formal Version and Its Proof of \Cref{thm:gdstochastic}}\label{proof:stochasticaverage}
Let $c_1>0, c_2>0, c_3>0, c_4> 0, c_5> 0, c_6>0$ be some constants.
Let $F>0$ and $0<\delta\le \frac{1}{2}$ be some constants such that 
\begin{flalign*}
    F\ge \frac{8(\Delta+c_1+c_2+c_3+c_4+c_5+c_6)}{\delta},
\end{flalign*}
where 
$\Delta=\max_{i\in[d]} \{f_i(x_0)-f^*\}$. Let $L_0>0, L_1>0, b_1>0, b_2>0, b_3>0$ be some constants,
{for $B_1=\min\left\{ \frac{1}{4L_0^2\ell(M+1)^2}, \frac{b_1^2}{(3M\sqrt{K}L_0+M\sqrt{K}L_1)^2}, \frac{b_2}{\ell(M+1)L_0^2 }\right\}$.\\
$\rho\le \min\{\frac{\delta\epsilon^2}{48}, \sqrt{\frac{\delta\epsilon^2}{48}},\frac{L_0^2\delta^2\epsilon^2}{1152K\sigma^2(\Delta+c_1) },\frac{L_1^2\delta^2\epsilon^2}{384K\sigma^2(\Delta+c_1) } \}=\mathcal O(\delta^2 \epsilon^2)$,\\
$\beta\le \min\{  \frac{\delta \epsilon^2}{192 K(M^2+\sigma^2)^2}, \frac{b_3\rho}{8KM^2L_0^2+4KL_1^2 },1\}=\mathcal O(\min\{\frac{\delta\epsilon^2}{M^4},\frac{\rho}{M^2}\})$,\\
$\alpha \le \min\{\frac{\ell(M+1)}{2\max(1,M)},c_1\beta,\rho B_1, \frac{c_2}{\sqrt{2T}M(3\sigma+\sigma^2),},\frac{\sqrt{c_3}}{\sqrt{\ell(M+1)\sigma^2T}},\frac{\delta\epsilon^2}{48\sigma^2\ell(M+1)},\\ \frac{1}{\rho T}\min\{c_5,\frac{\delta L_0^2}{24K\sigma^2},\frac{\delta L_1^2}{8K\sigma^4} \},\frac{c_4}{4K\beta T(M^2+\sigma^2)^2},\frac{c_6}{\rho \beta T}\}=\mathcal O(\min\{\beta, \frac{\rho}{\ell(M+1)^2},  \frac{\rho}{M^2}, \frac{\delta }{\rho T},\frac{1}{\beta TM^2}\})$\\
and  $T\ge \max\{\frac{48\Delta+48c_1}{\delta\alpha\epsilon^2},\frac{4608M^2(3\sigma+\sigma^2)^2}{\delta^2\epsilon^4} \}=\mathcal O(\max\{\frac{1}{\delta \alpha \epsilon^2},\frac{M^2}{\delta^2 \epsilon^4}\})$
}
%\noindent For 
%$\alpha \le \min\{\frac{\ell(M+1)}{2\max(1,M)},c_1\beta,\rho\min\left\{ \frac{1}{4L_0^2\ell(M+1)^2}, \frac{b_1^2}{(3M\sqrt{K}L_0+M\sqrt{K}L_1)^2}, \frac{b_2}{\ell(M+1)L_0^2 }\right\}, \frac{c_2}{\sqrt{2T}M(3\sigma+\sigma^2),}, \\\frac{\sqrt{c_3}}{\sqrt{\ell(M+1)\sigma^2T}},\frac{\delta\epsilon^2}{48\sigma^2\ell(M+1)}\}$,\\
%$\beta\le \min\{\frac{c_4}{4K\alpha T(M^2+\sigma^2)^2},  \frac{\delta \epsilon^2}{192 K(M^2+\sigma^2)^2}, \frac{b_3\rho}{8KM^2L_0^2+4KL_1^2 }\},\\
%\rho\le \min\{\frac{c_5}{\alpha T},{\frac{\sqrt{c_6}}{\sqrt{\alpha\beta T}}}, \frac{\delta\epsilon^2}{48}, \sqrt{\frac{\delta\epsilon^2}{48\beta}}, \frac{\delta L_0^2}{24K\sigma^2\alpha T}, \frac{\delta L_1^2}{8K^2\sigma^4\alpha T}\}$,\\ and  $T\ge \max\{\frac{48\Delta+48c_1}{\delta\alpha\epsilon^2},\frac{4608M^2(3\sigma+\sigma^2)^2}{\delta^2\epsilon^4}\}$ such that
such that
\begin{flalign}\label{eq:bcondition}
&b_1+b_2+b_3+c_1+\alpha\rho(1+\beta\rho)+4\alpha\beta  KM^4\le \frac{F}{2}.
\end{flalign}
Define the following random variables
\begin{flalign*}
    &\tau_1=\min\{t | \exists i \in [K], f_i(x_{t+1})-f_i^*>F\} \wedge T,\nonumber\\
    &\tau_2=\min\{t | \exists i \in [K], j\in [3] ,\|\varepsilon_{t,j,i}\|>\frac{L_0}{\sqrt{\alpha\rho} }\}\wedge T,\nonumber\\
    &\tau_3=\min\{t | \exists i,j \in [K],\|\varepsilon_{t,2,i}\|\|\varepsilon_{t,3,j}\|>\frac{L_1}{\sqrt{\alpha\rho} }\}\wedge T,\nonumber\\
    &\tau=\min\{\tau_1, \tau_2, \tau_3\}.
\end{flalign*}  
We then have the following theorem:
\begin{theorem}
 If Assumptions \ref{ass:diffandlowerbound}, \ref{ass:lsmooth} and \ref{ass:boundedvariance} hold, with the parameters selected above, we have that
\begin{flalign*}
    \frac{1}{T}\sum_{t=0}^{T-1}\|\nabla F(x_t)w_t\|^2=\mathcal{O}( \epsilon^2),
\end{flalign*}
with the probability at least $1-\delta$.
\end{theorem}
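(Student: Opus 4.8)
\textbf{Proof Plan for the Formal Version of Theorem \ref{thm:gdstochastic}.}

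The plan is to combine a stopping-time argument with the per-iteration descent inequality, using the stopping time $\tau$ to convert the unbounded stochastic gradient noise into controllable bounded quantities. First I would invoke Lemma \ref{lemma:sgdlemmaresult}, which already packages the main descent estimate under the stopping time: it gives
$\mathbb{E}[F(x_\tau)w] - F^*w \le \frac{\delta F}{8} - \frac{\alpha}{2}\mathbb{E}\big[\sum_{t=0}^{\tau-1}\|\nabla F(x_t)w_t\|^2\big]$.
Since the left side is lower bounded by $-\frac{\delta F}{8}$ (because $F(x_\tau)w - F^*w \ge 0$ by Assumption \ref{ass:diffandlowerbound}, so the expectation is nonnegative and can only help), rearranging yields the key bound $\frac{\alpha}{2}\mathbb{E}\big[\sum_{t=0}^{\tau-1}\|\nabla F(x_t)w_t\|^2\big] \le \frac{\delta F}{8}$. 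Plugging in the lower bound $F \ge \frac{8(\Delta + \sum_i c_i)}{\delta}$ and the choice $T \ge \Theta(\alpha^{-1}\epsilon^{-2} + \epsilon^{-4})$ will show that, conditioned on $\tau = T$, the averaged squared-norm $\frac{1}{T}\mathbb{E}\big[\sum_{t=0}^{T-1}\|\nabla F(x_t)w_t\|^2 \mid \tau=T\big]$ is of order $\mathcal{O}(\frac{1}{\alpha T})$, hence at most $\epsilon^2$ for the prescribed $T$.

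Second, I would bound $\mathbb{P}(\tau < T)$ by at most $\delta$, using the decomposition stated in the excerpt:
$\{\tau < T\} = \{\tau_2 < T\} \cup \{\tau_3 < T\} \cup \{\tau_1 < T, \tau_2 = T, \tau_3 = T\}$.
For the first two events, I would apply Assumption \ref{ass:boundedvariance} together with Chebyshev's inequality: since $\mathbb{E}\|\varepsilon_{t,j,i}\|^2 \le \sigma^2$, a union bound over the $\mathcal{O}(KT)$ events of the form $\|\varepsilon_{t,j,i}\| > L_0/\sqrt{\alpha\rho}$ (and analogously the products for $\tau_3$) gives probabilities of order $\frac{K T \sigma^2 \alpha \rho}{L_0^2}$ and $\frac{K^2 T \sigma^4 \alpha \rho}{L_1^2}$; the stated upper bounds on $\rho$ (namely $\rho \le \frac{\delta L_0^2}{24 K \sigma^2 \alpha T}$ and $\rho \le \frac{\delta L_1^2}{8 K^2 \sigma^4 \alpha T}$) are exactly calibrated to force each of these below $\frac{\delta}{4}$. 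For the third event, I would argue that on $\{\tau_1 < T, \tau_2 = \tau_3 = T\}$ we must have $f_i(x_\tau) - f_i^* > F$ for some $i$ while simultaneously the noise bounds hold, so the earlier descent control forces $f_i(x_\tau) - f_i^* \le F/2$ in expectation; a Markov inequality (via Lemma \ref{lemma:sgdlemmaresult}) then bounds its probability by $\frac{\delta}{4}$. Summing gives $\mathbb{P}(\tau = T) \ge 1 - \frac{\delta}{2}$.

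Finally, I would stitch the two pieces together: conditioning on $\tau = T$ (which occurs with probability at least $1 - \delta/2$), the averaged gradient-norm bound holds in expectation, and one more application of Markov's inequality upgrades it to a high-probability statement with total failure probability at most $\delta$. The main obstacle I anticipate is establishing Lemma \ref{lemma:sgdlemmaresult} itself and verifying the unbiasedness claim $\mathbb{E}[\sum_{t=0}^\tau \varepsilon_{t,i}w] = 0$ via the optional stopping theorem — this requires checking that $\tau$ is a valid bounded stopping time with respect to the natural filtration and that the summands are a martingale difference sequence, which is delicate because the weight $w_t$ and the stopping event both depend on the accumulated noise. The careful bookkeeping of how the many step-size constraints on $\alpha, \beta, \rho$ interact to simultaneously satisfy \eqref{eq:bcondition}, keep the descent terms negative, and drive every tail probability below its allotted fraction of $\delta$ will be the most technically demanding part, but each individual constraint is routine once the stopping-time framework is in place.
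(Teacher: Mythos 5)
Your proposal is correct and follows the paper's own proof essentially step for step: the same decomposition of $\{\tau<T\}$, Chebyshev plus union bounds for the noise events $\{\tau_2<T\}$ and $\{\tau_3<T\}$, the one-step function-increase bound of $F/2$ combined with Lemma \ref{lemma:sgdlemmaresult} and Markov's inequality for the event $\{\tau_1<T,\tau_2=T,\tau_3=T\}$, and a final Markov inequality on the conditional expectation of the averaged gradient norm, stitched together by a union bound. The only cosmetic differences are your allocation of $\delta$ across the tail events ($\delta/4$ each versus the paper's $\delta/8+\delta/8+\delta/4$) and your direct rearrangement of Lemma \ref{lemma:sgdlemmaresult} via nonnegativity of $\mathbb{E}[F(x_\tau)w]-F^*w$, where the paper instead reuses its intermediate inequality with explicit constants to land on the conditional bound $\tfrac{\delta}{2}\epsilon^2$ (the $\delta$-factor your final Markov step implicitly requires and which the prescribed parameter choices indeed supply).
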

\begin{proof}

\textbf{Small probability of the event $\{\tau<T\}$.} 

We first show that the probability of the event $\{\tau<T\}$ is small: $\mathbb P(\tau<T)\le \delta$. Note that 
\begin{flalign*}
    \{\tau<T\}= \{\tau_2<T\}\cup\{\tau_3<T\}\cup  \{\tau_1<T, \tau_2=T,\tau_3=T\}.
\end{flalign*}
For any $i \in [K], j\in [3]$, we have that
\begin{flalign*}
    \mathbb P(\|\varepsilon_{t,j,i}\|>\frac{L_0}{\sqrt{\alpha\rho} })= \mathbb P(\|\varepsilon_{t,j,i}\|^2>\frac{L_0^2}{{\alpha\rho} })\le \frac{\sigma^2{\alpha\rho}}{L_0^2},
\end{flalign*}
where the last inequality is due to Chebyshev’s inequality. Based on the union bound, we have that 
\begin{flalign}\label{eq:prob1}
    \mathbb P(\{\tau_2<T\})\le \sum_{t=0}^{T-1}\sum_{j=1}^K\sum_{i=1}^3 \mathbb P(\|\varepsilon_{t,j,i}\|>\frac{L_0}{\sqrt{\alpha\rho} })\le \frac{3K\sigma^2{\alpha\rho}T}{L_0^2}\le \frac{\delta}{8}
\end{flalign}
since $\rho\le \frac{\delta L_0^2}{24K\sigma^2\alpha T}$.
Similarly, we have that 
\begin{flalign*}
    \mathbb P(\|\varepsilon_{t,2,i}\|\|\varepsilon_{t,3,i}\|>\frac{L_1}{\sqrt{\alpha\rho} })= \mathbb P(\|\varepsilon_{t,2,i}\|^2\|\varepsilon_{t,3,i}\|^2>\frac{L_1^2}{{\alpha\rho} })\le \frac{\sigma^4{\alpha\rho}}{L_1^2}.
\end{flalign*}
It follows that
\begin{flalign}\label{eq:prob2}
    \mathbb P(\{\tau_3<T\})\le \sum_{t=0}^{T-1}\sum_{i=1}^K\sum_{j=1}^K \mathbb P(\|\varepsilon_{t,2,i}\|\|\varepsilon_{t,3,j}\|>\frac{L_1}{\sqrt{\alpha\rho} })\le \frac{K^2\sigma^4{\alpha\rho}T}{L_1^2}\le \frac{\delta}{8}.
\end{flalign}
We then bound the probability of the event $\{\tau_1<T, \tau_2=T,\tau_3=T\}$. Since $\tau=\tau_1<T,$ we have that for some $i\in [K]$, $f_i(x_{\tau+1})-f_i^*>F$.

According to \eqref{eq:sfinaldescent} shown in Lemma \ref{lemma:sgdlemmaresult}, for any $i\in[K]$ and $t=\tau$ we have that
\begin{flalign*}
    f_i(x_{\tau+1})-f_i(x_{\tau})&\le\alpha \|\nabla F(x_\tau)w\|\|\varepsilon_{t,1}w_t\|+{\ell(M+1)}\alpha^2\|\varepsilon_{t,1}w_t\|^2+\frac{\alpha}{\beta}+\alpha\rho+\alpha\beta\rho^2\nonumber\\
    &+\alpha M\|\varepsilon_{t,2}\|+\alpha M\|\varepsilon_{t,3}\| +\alpha \|\varepsilon_{t,2}^\top\varepsilon_{t,3}w_t\|
    \nonumber\\
       &+4\alpha\beta KM^4+4\alpha\beta  M^2\|\varepsilon_{t,2}\|^2+4\alpha\beta M^2\|\varepsilon_{t,3}\|^2+4\alpha\beta \|\varepsilon_{t,2}^\top \varepsilon_{t,3}w_t\|^2\nonumber\\
    &\le \alpha M\frac{L_0}{\sqrt{\alpha\rho}}+{\ell(M+1)}\alpha\frac{L_0^2}{{\rho}}+\frac{\alpha}{\beta}+\alpha\rho+\alpha\beta\rho^2\nonumber\\
    &+\alpha M\frac{\sqrt{K}L_0}{\sqrt{\alpha\rho}}+\alpha M\frac{\sqrt{K}L_0}{\sqrt{\alpha\rho}}+\alpha M\frac{\sqrt{K}L_1}{\sqrt{\alpha\rho}}\nonumber\\
    &+4\alpha\beta KM^4+4\alpha\beta  M^2\frac{KL_0^2}{{\alpha\rho}}+4\alpha\beta M^2\frac{KL_0^2}{{\alpha\rho}}+4\alpha\beta \frac{KL_1^2}{{\alpha\rho}}\nonumber\\
    &\le b_1+b_2+b_3+c_1+\alpha\rho(1+\beta\rho)+4\alpha\beta  KM^4\nonumber\\
    &\le \frac{F}{2},
\end{flalign*}
where the first inequality is due to that $\tau_2=\tau_3=T$, and the second one is due to  $\frac{\beta}{\rho}\le \frac{b_3}{8KM^2L_0^2+4KL_1^2 }$ and $\frac{\alpha}{\rho}\le\min\left\{\frac{b_1^2}{(3M\sqrt{K}L_0+M\sqrt{K}L_1)^2}, \frac{b_2}{\ell(M+1)L_0^2 }  \right\}$.

However, for some $i\in [K]$, $f_i(x_{\tau+1})-f_i^*>F$. Thus for this task, we have that
\begin{flalign*}
    f_i(x_{\tau})-f_i^*> \frac{F}{2}.
\end{flalign*}
According to Lemma \ref{lemma:sgdlemmaresult}, we have that
\begin{flalign*}
    \mathbb E[f_i(x_{\tau})-f_i^*]\le \frac{\delta}{8}F.
\end{flalign*}
Based on Markov inequality, it follows that
\begin{flalign}\label{eq:prob3}
    \mathbb P\left(f_i(x_{\tau})-f_i^*\le \frac{F}{2}\right)\le \frac{\mathbb E[f_i(x_{\tau})-f_i^*]}{F/2}\le \frac{\delta}{4},
\end{flalign}
which indicates that $\mathbb  P(\tau_1<T, \tau_2=T, \tau_3=T)\le \frac{\delta}{4}$. It follows that $\mathbb P(\tau<T)\le \frac{\delta}{2}$. 

\textbf{Convergence of $\frac{1}{T}\mathbb E\left[\sum_{t=0}^{T-1}\|\nabla F(x_t)w_t\|^2\Big|\tau=T\right]$.} 
Based on \eqref{eq:sfinaldescent3} in Lemma \ref{lemma:sgdlemmaresult}, we have that
\begin{flalign*}
    &\frac{1}{T}\mathbb E\left[\sum_{t=0}^{T-1}\|\nabla F(x_t)w_t\|^2\Big|\tau=T\right] \nonumber\\
    &\le \frac{1}{T}\frac{1}{\mathbb P(\tau=T)}\mathbb E\left[\sum_{t=0}^{\tau-1}\|\nabla F(x_t)w_t\|^2\right] \nonumber\\
    &\le \frac{4F(x_0)w-4F^*w+\frac{4\alpha}{\beta}}{\alpha T}+\frac{4\sqrt{2}M(3\sigma+\sigma^2)}{\sqrt{T}}+4\alpha\sigma^2\ell(M+1)+4\rho\nonumber\\
    &+4\beta\rho^2+16\beta KM^4+32\beta KM^2\sigma^2+16\beta K\sigma^4\nonumber\\
    &\le \frac{\delta}{2} \epsilon^2,
\end{flalign*}
where the second inequality is due to $\delta<\frac{1}{2}$ and the last inequality is due to our selection of parameters. 
As a result, we have that
\begin{flalign}\label{eq:prob4}
     &\mathbb P\left(\frac{1}{T}\sum_{t=0}^{T-1}\|\nabla F(x_t)w_t\|^2> \epsilon^2\Big|\tau=T\right)\le \frac{\mathbb E\left[\frac{1}{T}\sum_{t=0}^{T-1}\|\nabla F(x_t)w_t\|^2\ge \epsilon^2\Big|\tau=T\right]}{\epsilon^2}\le \frac{\delta}{2},
\end{flalign}
where the first probability is due to Markov inequality.
Thus we have that
\begin{flalign*}
    &\mathbb P\left(\frac{1}{T}\sum_{t=0}^{T-1}\|\nabla F(x_t)w_t\|^2\le \epsilon^2\right)\nonumber\\
    &\ge 1- \mathbb P\left(\tau<T\right)-\mathbb P\left(\frac{1}{T}\sum_{t=0}^{T-1}\|\nabla F(x_t)w_t\|^2> \epsilon^2\Big|\tau=T\right)\mathbb P\left(\tau=T\right)\nonumber\\
    &\ge 1-\delta,
\end{flalign*}
where the last inequality is due to \eqref{eq:prob1}, \eqref{eq:prob2}, \eqref{eq:prob3}, and \eqref{eq:prob4}.
This completes the proof.
\end{proof}
\subsection{Proof of Lemma \ref{lemma:sgdlemmaresult}}\label{proof:lemmasgd}
\begin{proof}
    For all $i\in [K], t\le \tau$, we have $f_i(x_{t})-f_i^*\le F$ which further implies that $\|\nabla f_i(x_{t})\|\le M$. Moreover, we have that for any $t\le \tau$ and $i\in [K]$, 
\begin{flalign*}
    \|x_{t+1}-x_t\|\le \alpha\|\nabla G_1(x_t)w_t\|\le \alpha(\|\nabla F(x_t)w_t\|+\|\varepsilon_{t,1}w_t\|)\le \alpha\left(M+\frac{L_0}{\sqrt{\alpha\rho}}\right)\le \frac{1}{\ell(M+1)}.
\end{flalign*}
Since $f_i(x)$ is $\left(\frac{1}{\ell(\|\nabla f_i(x))\|+1)},\ell(\|\nabla f_i(x)\|+1)\right)$-smooth, it follows that
\begin{flalign*}
    f_i(x_{t+1})-f_i(x_{t})\le -\alpha\langle\nabla f_i(x_t),\nabla G_1(x_t)w_t\rangle+\frac{\ell(\|\nabla f_i(x_t)\|+1)}{2}\alpha^2\|\nabla G_1(x_t)w_t\|^2.\nonumber
\end{flalign*}
As a result, for any $w \in \mathcal W$, we have that
\begin{flalign}\label{eq:sfwdescent}
    F(x_{t+1})w&\le F(x_{t})w-\alpha\langle\nabla F(x_t)w,\nabla G_1(x_t)w_t\rangle+\frac{\ell(M+1)}{2}\alpha^2\|\nabla G_1(x_t)w_t\|^2\nonumber\\
    &\le F(x_{t})w-\alpha\langle\nabla F(x_t)w,\nabla F(x_t)w_t\rangle+ \alpha\langle\nabla F(x_t)w,\varepsilon_{t,1}w_t\rangle  \nonumber\\
    &+ {\ell(M+1)}\alpha^2\|\nabla F(x_t)w_t\|^2+ {\ell(M+1)}\alpha^2\|\varepsilon_{t,1}w_t\|^2
\end{flalign}
Based on the update process of $w$, we have that 
\begin{flalign*}
    &\|w_{t+1}-w\|^2\nonumber\\
    &=\|\Pi_\mathcal{W}\big(w_t-\beta[{\nabla G_2(x_t)^\top\nabla G_3(x_t)w_t}+\rho w_t]\big)-w\|^2\nonumber\\
    &\le \|\big(w_t-\beta[{\nabla G_2(x_t)^\top\nabla G_3(x_t)w_t}+\rho w_t]\big)-w\|^2\nonumber\\
    &=\|w_t-w\|^2-2\beta\langle w_t-w, (\nabla G_2(x_t)^\top\nabla G_3(x_t)+\rho)w_t\rangle\nonumber\\
    &+\beta^2\|(\nabla G_2(x_t)^\top\nabla G_3(x_t)+\rho)w_t\|^2,
\end{flalign*}
where the inequality follows from the non-expansiveness of projection. 
It follows that
\begin{flalign}\label{eq:swdecent}
    &2\beta\langle w_t-w, \nabla F(x_t)^\top\nabla F(x_t)w_t\rangle\nonumber\\
    &\le \left(\|w_{t}-w\|^2-\|w_{t+1}-w\|^2\right)+2\beta\rho +2\beta^2 \rho^2\nonumber\\
    &+2\beta \left\langle w_t-w, \varepsilon_{t,2}^\top \nabla F(x_t)w_t+  \nabla F(x_t)^\top\varepsilon_{t,3}w_t-\varepsilon_{t,2}^\top\varepsilon_{t,3}w_t\right\rangle\nonumber\\
   &+2\beta^2 \|\nabla F(x_t)^\top\nabla F(x_t)w_t-\varepsilon_{t,2}^\top\nabla F(x_t)w_t-\nabla F(x_t)^\top\varepsilon_{t,3}w_t+\varepsilon_{t,2}^\top\varepsilon_{t,3}w_t\|^2\nonumber\\
     &\le \left(\|w_{t}-w\|^2-\|w_{t+1}-w\|^2\right)+2\beta\rho +2\beta^2 \rho^2\nonumber\\
     &+2\beta \left\langle w_t-w, \varepsilon_{t,2}^\top \nabla F(x_t)w_t+  \nabla F(x_t)^\top\varepsilon_{t,3}w_t-\varepsilon_{t,2}^\top\varepsilon_{t,3}w_t\right\rangle\nonumber\\
       &+8\beta^2 KM^4+8\beta^2 M^2\|\varepsilon_{t,2}\|^2+8\beta^2 M^2\|\varepsilon_{t,3}\|^2+8\beta^2 \|\varepsilon_{t,2}^\top \varepsilon_{t,3}w_t\|^2.
\end{flalign} 
Combine \eqref{eq:sfwdescent} and \eqref{eq:swdecent}, and we can get that
\begin{flalign}\label{eq:sfinaldescent}
      F(x_{t+1})w-F(x_{t})w
    &\le -\alpha\|\nabla F(x_t)w_t\|^2+ \alpha\langle\nabla F(x_t)w,\varepsilon_{t,1}w_t\rangle  \nonumber\\
     &+ {\ell(M+1)}\alpha^2\|\nabla F(x_t)w_t\|^2+ {\ell(M+1)}\alpha^2\|\varepsilon_{t,1}w_t\|^2\nonumber\\
          &+ \frac{\alpha}{2\beta}\left(\|w_{t}-w\|^2-\|w_{t+1}-w\|^2\right)+\alpha\rho +\alpha\beta \rho^2\nonumber\\
    &+\alpha \left\langle w_t-w, \varepsilon_{t,2}^\top \nabla F(x_t)w_t+\nabla F(x_t)w_t^\top\varepsilon_{t,3} -\varepsilon_{t,2}^\top \varepsilon_{t,3}w_t \right\rangle\nonumber\\
    &+4\alpha\beta KM^4+4\alpha\beta  M^2\|\varepsilon_{t,2}\|^2+4\alpha\beta  M^2\|\varepsilon_{t,3}\|^2+4\alpha\beta \|\varepsilon_{t,2}^\top \varepsilon_{t,3}w_t\|^2.
\end{flalign}

Taking expectation and sum up \eqref{eq:sfinaldescent} from $t=0$ to $\tau-1$, we have that
\begin{flalign}\label{eq:sfinaldescent2}
      \mathbb E[F(x_{\tau})w]-F(x_{0})w
    &\le -\frac{\alpha}{2}\mathbb E\left[\sum_{t=0}^{\tau-1}\|\nabla F(x_t)w_t\|^2\right]+ \alpha\mathbb E\left[\sum_{t=0}^{\tau-1}\langle\nabla F(x_t)w,\varepsilon_{t,1}w_t\rangle\right]\nonumber\\  
    &+\alpha \mathbb E\left[\sum_{t=0}^{\tau-1}\left\langle w_t-w, \varepsilon_{t,2}^\top \nabla F(x_t)w_t+\nabla F(x_t)w_t^\top\varepsilon_{t,3} -\varepsilon_{t,2}^\top \varepsilon_{t,3}w_t \right\rangle\right]\nonumber\\
    &+ {\ell(M+1)}\alpha^2\mathbb E\left[\sum_{t=0}^{\tau-1}\|\varepsilon_{t,1}w_t\|^2\right]+ \frac{\alpha}{2\beta}\|w_{0}-w\|^2+\alpha\rho T +\alpha\beta \rho^2T\nonumber\\
    &+4\alpha\beta KM^4T+4\alpha\beta  M^2\mathbb E\left[\sum_{t=0}^{\tau-1}\|\varepsilon_{t,2}\|^2\right]\nonumber\\
    &+4\alpha\beta  M^2\mathbb E\left[\sum_{t=0}^{\tau-1}\|\varepsilon_{t,3}\|^2\right]+4\alpha\beta \mathbb E\left[\sum_{t=0}^{\tau-1}\|\varepsilon_{t,2}^\top \varepsilon_{t,3}w_t\|^2\right]\nonumber\\
        &\le -\frac{\alpha}{2}\mathbb E\left[\sum_{t=0}^{\tau-1}\|\nabla F(x_t)w_t\|^2\right]+ \alpha\mathbb E\left[\sum_{t=0}^{\tau-1}\langle\nabla F(x_t)w,\varepsilon_{t,1}w_t\rangle\right]\nonumber\\  
        &+\alpha \mathbb E\left[\sum_{t=0}^{\tau-1}\left\langle w_t-w, \varepsilon_{t,2}^\top \nabla F(x_t)w_t+\nabla F(x_t)w_t^\top\varepsilon_{t,3} -\varepsilon_{t,2}^\top \varepsilon_{t,3}w_t \right\rangle\right]\nonumber\\
        &+ {\ell(M+1)}\alpha^2T\sigma^2+ \frac{\alpha}{\beta}+\alpha\rho T +\alpha\beta \rho^2T\nonumber\\
    &+4\alpha\beta KM^4T+4\alpha\beta K M^2T\sigma^2\nonumber\\
    &+4\alpha\beta K M^2T\sigma^2+4\alpha\beta TK\sigma^4,
\end{flalign}
where the last inequality is due to that  $\tau\le T$ and for any $i\in[K], j\in[3]$, $\mathbb E\left[\sum_{t=0}^{\tau-1}\|\varepsilon_{t,j,i}\|^2\right]\le\mathbb E\left[\sum_{t=0}^{T-1}\|\varepsilon_{t,j,i}\|^2\right]\le TK \sigma^2$. By the optional stopping
theorem, we have that
\begin{flalign*}
    \mathbb E\left[\sum_{t=0}^{\tau}\langle\nabla F(x_t)w,\varepsilon_{t,1}w_t\rangle\right]=0,
\end{flalign*}
which further implies that
\begin{flalign}\label{eq:expectstop}
    \mathbb E\left[\sum_{t=0}^{\tau-1}\langle\nabla F(x_t)w,\varepsilon_{t,1}w_t\rangle\right]&=-\mathbb E[\langle\nabla F(x_\tau)w,\varepsilon_{\tau,1}w_\tau\rangle]\nonumber\\
    &\le\mathbb E[M\|\varepsilon_{\tau,1}w_\tau\|]
    \le M \sqrt{\mathbb E[\|\varepsilon_{\tau,1}w_\tau\|^2]}\nonumber\\
    &\le M \sqrt{\mathbb E\left[\sum_{t=0}^T\|\varepsilon_{t,1}w_t\|^2\right]}\le M\sigma\sqrt{T+1}\nonumber\\
    &\le \sqrt{2}M\sigma\sqrt{T}.
\end{flalign}
Similarly, we have  $ \mathbb E\left[\sum_{t=0}^{\tau-1}\langle\nabla F(x_t)w,\varepsilon_{t,2}w_t\rangle\right]\le \sqrt{2}M\sigma\sqrt{T}$, $ \mathbb E\left[\sum_{t=0}^{\tau-1}\langle\nabla F(x_t)w,\varepsilon_{t,3}w_t\rangle\right]\le \sqrt{2}M\sigma\sqrt{T}$
and $ \mathbb E\left[\sum_{t=0}^{\tau-1}\langle\nabla F(x_t)w,\varepsilon_{t,2}^\top \varepsilon_{t,3}w_t\rangle\right]\le \sqrt{2K}M\sigma^2\sqrt{T}$.

Based on \eqref{eq:sfinaldescent2} and \eqref{eq:expectstop}, we have that
\begin{flalign}\label{eq:sfinaldescent3}
\mathbb E[F(x_{\tau})w]-F^*w
        &\le F(x_{0})w-F^*w-\mathbb E\left[\sum_{t=0}^{\tau-1}\frac{\alpha}{2}\|\nabla F(x_t)w_t\|^2\right]+ \alpha\sqrt{2T}M(3\sigma+\sigma^2)\nonumber\\  
        &+ {\ell(M+1)}\alpha^2T\sigma^2+ \frac{\alpha}{\beta}+\alpha\rho T +\alpha\beta \rho^2T\nonumber\\
    &+4\alpha\beta KM^4T+4\alpha\beta K M^2T\sigma^2\nonumber\\
    &+4\alpha\beta K M^2T\sigma^2+4\alpha\beta TK\sigma^4\nonumber\\
    &\le \frac{\delta F}{8}-\mathbb E\left[\sum_{t=0}^{\tau-1}\frac{\alpha}{2}\|\nabla F(x_t)w_t\|^2\right],
\end{flalign}
which completes the proof.
\end{proof}

\section{Detailed Proofs for Iteration-wise CA Distance}
We first provide some useful lemmas, which will be used in our main theorems.
\begin{lemma}[Continuity of $w_{t,\rho}^*$]\label{lemma:rho}
Suppose Assumptions \ref{ass:diffandlowerbound} and \ref{ass:lsmooth} are satisfied. If for any $i\in[K], \|\nabla f_i(x_t)\|\le M$ and $\|x_t-x_{t+1}\|\le \frac{1}{\ell(M+1)}$, we have,
\begin{align*}
\|w_\rho^*(x_t)-w_\rho^*(x_{t+1})\|\leq L_w\|x_t-x_{t+1}\|,
\end{align*}
{where $L_w=2\rho^{-1}KM\ell(M+1)$.}
\begin{proof}
We first define that $w_{Q,\rho}(x_t)\in\mathcal{W}$ is the $Q$-th iterate of a function $J(w)=\frac{1}{2}\|\nabla F(x_t)w\|^2+\frac{\rho}{2}\|w\|^2$ using projected gradient descent (PGD) with a constant step size $\beta$. The update rule is $w_{Q+1,\rho}(x_t)=\Pi_\mathcal{W}\Big(\big((1-\beta\rho)I-\beta\nabla F(x_t)^\top\nabla F(x_t)\big)w_{Q,\rho}(x_t)\Big)$. By the non-expansiveness of projection, we have
\begin{align*}
\|w_{Q+1,\rho}&(x_t)-w_{Q+1,\rho}(x_{t+1})\|\nonumber\\
\leq&\|\big((1-\beta\rho)I-\beta\nabla F(x_t)^\top\nabla F(x_t)\big)w_{Q,\rho}(x_t)\nonumber\\
&-\big((1-\beta\rho)I-\beta\nabla F^\top(x_{t+1})\nabla F(x_{t+1})\big)w_{Q,\rho}(x_{t+1})\|\nonumber\\
\leq&\|(1-\beta\rho)I-\beta\nabla F(x_t)^\top\nabla F(x_t)\|\|w_{Q,\rho}(x_t)-w_{Q,\rho}(x_{t+1})\|\nonumber\\
&+\beta\|\big(\nabla F(x_t)^\top\nabla F(x_t)-\nabla F^\top(x_{t+1})\nabla F(x_{t+1})\big)w_{Q,\rho}(x_{t+1})\|\nonumber\\
\leq&(1-\beta\rho)\|w_{Q,\rho}(x_t)-w_{Q,\rho}(x_{t+1})\|\nonumber\\
&+\beta\|\big(\nabla F(x_t)^\top\nabla F(x_t)-\nabla F^\top(x_{t+1})\nabla F(x_{t+1})\big)w_{Q,\rho}(x_{t+1})\|.
\end{align*}
Since we set $w_{0,\rho}(x_t)=w_{0,\rho}(x_{t+1})$ and $\|w_{Q,\rho}(x_{t+1})\|\leq 1$, telescoping the above inequality over $Q$ gives,
\begin{align}\label{eq:oneoverrho}
\|w_{Q,\rho}&(x_t)-w_{Q,\rho}(x_{t+1})\|\leq\rho^{-1}\|\nabla F(x_t)^\top\nabla F(x_t)-\nabla F^\top(x_{t+1})\nabla F(x_{t+1})\|.
\end{align}
Then according to the Cauchy-Schwartz inequality, it follows that
\begin{align}\label{eq:stargap}
\|w_\rho^*(x_t)-w_\rho^*(x_{t+1})\|\leq&\lim_{Q\rightarrow\infty}\big(\|w_\rho^*(x_t)-w_{Q,\rho}(x_t)\|+\|w_\rho^*(x_{t+1})-w_{Q,\rho}(x_{t+1})\|\nonumber\\
&+\|w_{Q,\rho}(x_t)-w_{Q,\rho}(x_{t+1})\|\big)\nonumber\\
\overset{(i)}{\leq}&\lim_{Q\rightarrow\infty}\big(\|w_\rho^*(x_t)-w_{Q,\rho}(x_t)\|+\|w_\rho^*(x_{t+1})-w_{Q,\rho}(x_{t+1})\|\big)\nonumber\\
&+\rho^{-1}\|\nabla F(x_t)^\top\nabla F(x_t)-\nabla F^\top(x_{t+1})\nabla F(x_{t+1})\|\nonumber\\
\overset{(ii)}{\leq}&\lim_{Q\rightarrow\infty}2\sqrt{\frac{4}{\rho\beta Q}}+\rho^{-1}\|\nabla F(x_t)^\top\nabla F(x_t)-\nabla F^\top(x_{t+1})\nabla F(x_{t+1})\|,
\end{align}
where $(i)$ follows from \cref{eq:oneoverrho} and $(ii)$ follows from the convergence of PGD (Theorem 1.1, \citep{beck2009gradient}) on $\rho$-strongly convex objectives that
\begin{align*}
\|w_\rho^*(x_t)-w_{Q,\rho}(x_t)\|^2\leq\frac{2}{\rho}\big(J(w_\rho^*(x_t))-J(w_{Q,\rho}(x_t))\big)\leq\frac{2}{\rho}\frac{\|w_{0,\rho}(x_t)-w_\rho^*(x_t)\|^2}{2\beta Q}\leq\frac{4}{\rho\beta Q}.
\end{align*}
Then \cref{eq:stargap} can be bounded by 
\begin{align*}
\|w_\rho^*(x_t)-w_\rho^*(x_{t+1})\|\leq&\rho^{-1}\|\nabla F(x_t)^\top\nabla F(x_t)-\nabla F^\top(x_{t+1})\nabla F(x_{t+1})\|\nonumber\\
\leq&\rho^{-1}\|\nabla F(x_t)+\nabla F(x_{t+1})\|\|\nabla F(x_t)-\nabla F(x_{t+1})\|\nonumber\\
\leq&2\rho^{-1}KM\ell(M+1)\|x_t-x_{t+1}\|,
\end{align*}
where the last inequality follows from $\|\nabla f_i(x_t)\|\leq M$ and $f_i(x)$ is $\Big(\frac{1}{\ell(\|\nabla f_i(x)\|+1)},\ell(\|\nabla f_i(x)\|+1)\Big)$-smooth by setting $a=1$. The proof is complete.
\end{proof}
\end{lemma}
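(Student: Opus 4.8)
The plan is to exploit the fact that $w_\rho^*(x)$ is the unique minimizer of the $\rho$-strongly convex quadratic $J_x(w)=\tfrac12 w^\top\bigl(\nabla F(x)^\top\nabla F(x)+\rho I\bigr)w$ over the simplex $\mathcal W$, and to bound the displacement of this minimizer as $x$ moves from $x_t$ to $x_{t+1}$ through a first-order (variational-inequality) argument, rather than the projected-gradient telescoping used to prove the statement. Write $A_t:=\nabla F(x_t)^\top\nabla F(x_t)$ and $w_t^\star:=w_\rho^*(x_t)$, and similarly at $t+1$. The first-order optimality condition reads $\langle (A_t+\rho I)w_t^\star,\,w-w_t^\star\rangle\ge 0$ for all $w\in\mathcal W$, and likewise at $t+1$.

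First I would test the optimality condition at $t$ with $w=w_{t+1}^\star$ and the one at $t+1$ with $w=w_t^\star$, then add the two inequalities. Writing $\Delta:=w_{t+1}^\star-w_t^\star$ and decomposing $(A_t+\rho I)w_t^\star-(A_{t+1}+\rho I)w_{t+1}^\star=-(A_t+\rho I)\Delta+(A_t-A_{t+1})w_{t+1}^\star$, the summed inequality rearranges to $\langle (A_t+\rho I)\Delta,\Delta\rangle\le\langle (A_t-A_{t+1})w_{t+1}^\star,\Delta\rangle$. Since $A_t\succeq 0$ the left side is at least $\rho\|\Delta\|^2$, while Cauchy--Schwarz together with $\|w_{t+1}^\star\|\le 1$ (it lies in the simplex) bounds the right side by $\|A_t-A_{t+1}\|\,\|\Delta\|$. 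Dividing gives the clean estimate $\|\Delta\|\le\rho^{-1}\|A_t-A_{t+1}\|$, which already exposes the $\rho^{-1}$ factor with no limiting argument.

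It then remains to control the matrix perturbation $\|A_t-A_{t+1}\|$ under generalized smoothness. Here I would use the factorization $P^\top P-Q^\top Q=\tfrac12\bigl[(P+Q)^\top(P-Q)+(P-Q)^\top(P+Q)\bigr]$ to obtain $\|A_t-A_{t+1}\|\le\|\nabla F(x_t)+\nabla F(x_{t+1})\|\,\|\nabla F(x_t)-\nabla F(x_{t+1})\|$, then bound each factor columnwise over the $K$ tasks. The hypotheses do the heavy lifting: $\|\nabla f_i(x_t)\|\le M$ controls the ``sum'' factor, and because $\|x_t-x_{t+1}\|\le 1/\ell(M+1)\le r(\|\nabla f_i(x_t)\|)$ (taking $a=1$ in the $(r,\ell)$-smoothness of Definition \ref{define:rlsmooth}, so that $r(u)=1/\ell(u+1)$ is non-increasing), the local Lipschitz property yields $\|\nabla f_i(x_t)-\nabla f_i(x_{t+1})\|\le\ell(M+1)\|x_t-x_{t+1}\|$ uniformly in $i$. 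Combining the two factors gives $\|A_t-A_{t+1}\|\le 2KM\ell(M+1)\|x_t-x_{t+1}\|$, and hence the claimed constant $L_w=2\rho^{-1}KM\ell(M+1)$.

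The main obstacle is precisely this last step: under generalized $\ell$-smoothness the gradient map is Lipschitz only on a neighborhood whose radius shrinks as the gradient norm grows, so no global Lipschitz constant is available. The two hypotheses of the lemma are exactly what make the perturbation bound legitimate---the gradient-norm bound $M$ pins down the uniform local Lipschitz constant $\ell(M+1)$, and the step-size bound $\|x_t-x_{t+1}\|\le 1/\ell(M+1)$ guarantees that both iterates lie inside the common ball on which that constant is valid. An alternative, followed in the statement's own proof, is to realize $w_\rho^*$ as the fixed point of a projected-gradient map and telescope the resulting $(1-\beta\rho)$-contraction; this recovers the same $\rho^{-1}$ factor through the geometric series $\beta\sum_{k}(1-\beta\rho)^k=\rho^{-1}$, at the cost of an extra $Q\to\infty$ passage to the limit that the variational-inequality route avoids.
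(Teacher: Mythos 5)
Your proposal is correct, and its core mechanism differs genuinely from the paper's. The paper obtains the key intermediate bound $\|w_\rho^*(x_t)-w_\rho^*(x_{t+1})\|\le\rho^{-1}\|\nabla F(x_t)^\top\nabla F(x_t)-\nabla F(x_{t+1})^\top\nabla F(x_{t+1})\|$ \emph{algorithmically}: it runs projected gradient descent on the regularized objective at both $x_t$ and $x_{t+1}$ from a common initialization, shows the paired iterates satisfy a $(1-\beta\rho)$-contraction with a per-step perturbation term, telescopes the geometric series $\beta\sum_k(1-\beta\rho)^k\le\rho^{-1}$, and then passes to the limit $Q\to\infty$ by invoking the PGD convergence rate for $\rho$-strongly convex objectives. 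You reach the same bound \emph{variationally}: testing the first-order optimality condition at $x_t$ with $w_{t+1}^\star$ and at $x_{t+1}$ with $w_t^\star$, adding, and using $A_t=\nabla F(x_t)^\top\nabla F(x_t)\succeq 0$ together with $\|w_{t+1}^\star\|_2\le\|w_{t+1}^\star\|_1=1$ on the simplex gives $\rho\|\Delta\|^2\le\|A_t-A_{t+1}\|\,\|\Delta\|$ directly. Your route buys several things: it needs no auxiliary step size, no limiting argument, and no external convergence theorem; it also sidesteps a tacit requirement in the paper's middle inequality, namely that $\|(1-\beta\rho)I-\beta A_t\|\le 1-\beta\rho$, which holds only when $\beta$ is small enough that $(1-\beta\rho)I-\beta A_t\succeq 0$ --- a condition the paper never states. (The paper's telescoping is also indexed over $t=0,\ldots,T-1$ where it should run over the PGD counter $Q$; your argument has no analogous bookkeeping to get wrong.) What the paper's construction buys in exchange is reuse: the same PGD map and contraction constants appear in the warm-start analysis, so the fixed-point view keeps the whole appendix in one idiom. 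The second half of your argument --- the symmetric factorization $P^\top P-Q^\top Q=\tfrac12[(P+Q)^\top(P-Q)+(P-Q)^\top(P+Q)]$ (which in fact makes rigorous an inequality the paper asserts without justification), followed by the columnwise bounds using $\|\nabla f_i(x_t)\|\le M$ and the $(r,\ell)$-smoothness with $a=1$ so that the radius hypothesis $\|x_t-x_{t+1}\|\le 1/\ell(M+1)\le r(\|\nabla f_i(x_t)\|)$ licenses the uniform local Lipschitz constant $\ell(M+1)$ --- coincides with the paper's and yields the identical constant $L_w=2\rho^{-1}KM\ell(M+1)$; you inherit the paper's mild bookkeeping conventions there (e.g., treating $\|\nabla f_i(x_{t+1})\|$ as order $M$), so no new gap is introduced.
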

\begin{lemma}\label{lemma:sqrtrho}
Given $w_t^*=\arg\min_{w\in\mathcal{W}}\frac{1}{2}\|\nabla F(x_t)w\|^2$ and $w_{t,\rho}^*=\arg\min_{w\in\mathcal{W}}\frac{1}{2}\|\nabla F(x_t)w\|^2+\frac{\rho}{2}\|w\|^2$, we have
\begin{align*}
\|\nabla F(x_t)w_t^*-\nabla F(x_t)w_{t,\rho}^*\|\leq\sqrt{\rho}.
\end{align*}
\end{lemma}
\begin{proof}
Recall that $w_{t,\rho}^*=\arg\min_{w\in\mathcal{W}}\frac{1}{2}\|\nabla F(x_t)w\|^2+\frac{\rho}{2}\|w\|^2$, then we have
\begin{align*}
\frac{1}{2}\|\nabla F(x_t)w_t^*\|^2+\frac{\rho}{2}\|w_t^*\|^2-\frac{1}{2}\|\nabla F(x_t)w_{t,\rho}^*\|^2-\frac{\rho}{2}\|w_{t,\rho}^*\|^2\geq 0.
\end{align*}
By rearranging the above inequality, we have
\begin{align*}
\|\nabla F(x_t)w_{t,\rho}^*\|^2-\|\nabla F(x_t)w_t^*\|^2\leq\rho(\|w_{t,\rho}^*\|^2-\|w_t^*\|^2)\leq\rho.
\end{align*}
Then recall that $w_t^*=\arg\min_{w\in\mathcal{W}}\frac{1}{2}\|\nabla F(x_t)w\|^2$, we have
\begin{align*}
\|\nabla F(x_t)w_t^*-\nabla F(x_t)w_{t,\rho}^*\|^2=&\|\nabla F(x_t)w_t^*\|^2+\|\nabla F(x_t)w_{t,\rho}^*\|^2-2\langle\nabla F(x_t)w_t^*,\nabla F(x_t)w_{t,\rho}^*\rangle\nonumber\\
\leq&\|\nabla F(x_t)w_{t,\rho}^*\|^2-\|\nabla F(x_t)w_{t}^*\|^2\nonumber\\
\leq&\rho,
\end{align*}
where the first inequlity follows from the optimality that $$-2\langle w_{t,\rho}^*,\nabla F(x_t)^\top\nabla F(x_t)w_t^*\rangle\leq-2\|\nabla F(x_t)w_{t}^*\|^2.$$ The proof is complete.
\end{proof}
\begin{lemma}\label{lemma:remainder}
Suppose Assumptions \ref{ass:diffandlowerbound} and \ref{ass:lsmooth} are satisfied.  If for any $i\in[K], \|\nabla f_i(x_t)\|\le M$ and $\|x_t-x_{t+1}\|\le \frac{1}{\ell(M+1)}$, we have
\begin{align*}
\|R(x_t)\|\leq\frac{\alpha^2K\ell(M+1)M^2}{2}.
    \end{align*}
\begin{proof}
According to the Talyor Theorem, we have the following result for any objective function $f_i(x_t), i\in[K]$.
\begin{align*}
f_i(x_{t+1})=f_i(x_t)+\nabla f_i^\top(x_t)(x_{t+1}-x_{t})+R_i(x_t),\nonumber\\
\end{align*}
where $R_i(x_t)$ is the remainder term. Then according to the descent lemma of each objective function $f_i(x)$, we have
\begin{align*}
f_i(x_{t+1})\leq&f_i(x_{t})+\nabla f_i^\top(x_t)(x_{t+1}-x_t)+\alpha^2\frac{\ell(\|\nabla f_i(x_t)\|+1)}{2}\|\nabla F(x_t)w_t\|^2\nonumber\\
\leq&f_i(x_{t})+\nabla f_i^\top(x_t)(x_{t+1}-x_t)+\alpha^2\frac{\ell(M+1)}{2}\|\nabla F(x_t)w_t\|^2.
\end{align*}
Then we can obtain
\begin{align*}
    R_i(x_t)\leq\alpha^2\frac{\ell(M+1)}{2}\|\nabla F(x_t)w_t\|^2.
\end{align*}
Thus, according to the Cauchy-Schwartz inequality, we have
\begin{align*}
\|R(x_t)\|\leq \alpha^2\frac{\ell(M+1)}{2}\|\nabla F(x_t)w_t\|^2\leq\frac{\alpha^2K\ell(M+1)M^2}{2}. 
\end{align*}
The proof is complete.
\end{proof}
\end{lemma}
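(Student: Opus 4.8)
The plan is to bound the Taylor remainder one objective at a time and then aggregate across the $K$ tasks. Recall that $R(x_t)=(R_1(x_t),\dots,R_K(x_t))$ collects the first-order remainders defined through $f_i(x_{t+1})=f_i(x_t)+\nabla f_i(x_t)^\top(x_{t+1}-x_t)+R_i(x_t)$, and that the update rule gives $x_{t+1}-x_t=-\alpha\nabla F(x_t)w_t$, so $\|x_{t+1}-x_t\|=\alpha\|\nabla F(x_t)w_t\|$. Thus it suffices to produce a uniform per-objective bound on $|R_i(x_t)|$ of order $\alpha^2$ and then combine the components.

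The heart of the argument, and the step I expect to be the main obstacle, is turning the generalized smoothness hypothesis into a genuine local descent inequality. Since $\|\nabla f_i(x_t)\|\le M$ and $\ell$ is non-decreasing, the local Lipschitz constant of $\nabla f_i$ near $x_t$ is at most $\ell(M+1)$. Concretely, I would invoke the equivalence in Definition \ref{define:rlsmooth} with $a=1$: each $f_i$ is $\big(\tfrac{1}{\ell(\|\nabla f_i(x_t)\|+1)},\,\ell(\|\nabla f_i(x_t)\|+1)\big)$-smooth, so $\nabla f_i$ is $\ell(M+1)$-Lipschitz on the Euclidean ball of radius $r(\|\nabla f_i(x_t)\|)=1/\ell(\|\nabla f_i(x_t)\|+1)\ge 1/\ell(M+1)$ around $x_t$. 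The crux is to verify that the hypothesis $\|x_t-x_{t+1}\|\le 1/\ell(M+1)$ is exactly what places $x_{t+1}$ inside this smoothness ball; only then does the descent estimate (Lemma 3.3 of \cite{li2024convex}) apply and yield $|R_i(x_t)|\le \tfrac{\ell(M+1)}{2}\|x_{t+1}-x_t\|^2=\tfrac{\alpha^2\ell(M+1)}{2}\|\nabla F(x_t)w_t\|^2$. A small care point here is that I need the two-sided form of the $L$-smooth remainder bound, since $\|R(x_t)\|$ requires controlling $|R_i|$ rather than only $R_i$; this is automatic because a function with $\ell(M+1)$-Lipschitz gradient satisfies $|f_i(x_{t+1})-f_i(x_t)-\nabla f_i(x_t)^\top(x_{t+1}-x_t)|\le \tfrac{\ell(M+1)}{2}\|x_{t+1}-x_t\|^2$.

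Finally I would control the update direction and collect the bounds. Because $w_t\in\mathcal W$ lies in the simplex and each $\|\nabla f_i(x_t)\|\le M$, a Cauchy-Schwarz estimate bounds $\|\nabla F(x_t)w_t\|=\|\sum_i w_{t,i}\nabla f_i(x_t)\|$ in terms of $M$, so $\|\nabla F(x_t)w_t\|^2$ is controlled by a constant multiple of $M^2$. Substituting this into the per-objective remainder bound and summing the resulting estimate over the $K$ objectives is where the factor $K$ in the statement is absorbed, giving $\|R(x_t)\|\le \tfrac{\alpha^2 K\ell(M+1)M^2}{2}$. Once the radius-matching in the previous paragraph is confirmed, everything here is routine algebra, so the entire difficulty is concentrated in legitimately invoking the local descent lemma on the one-step displacement.
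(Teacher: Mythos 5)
Your proposal is correct and takes essentially the same route as the paper's proof: you define the per-objective Taylor remainder $R_i(x_t)$, invoke the $(r,\ell)$-smoothness equivalence with $a=1$ so that the hypothesis $\|x_t-x_{t+1}\|\le 1/\ell(M+1)$ places $x_{t+1}$ inside the ball of radius $1/\ell(\|\nabla f_i(x_t)\|+1)\ge 1/\ell(M+1)$ on which $\nabla f_i$ is $\ell(M+1)$-Lipschitz, and then control $\|\nabla F(x_t)w_t\|$ by $M$ via Cauchy--Schwarz and aggregate over the $K$ tasks. If anything, your explicit appeal to the two-sided remainder bound $|R_i(x_t)|\le \frac{\ell(M+1)}{2}\|x_{t+1}-x_t\|^2$ is slightly more careful than the paper's argument, which writes only the one-sided descent inequality before passing to $\|R(x_t)\|$.
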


% \begin{lemma}
% Suppose \xiao{?????}
% \begin{align}
% \|w_N-w_0^*\|^2=\mathcal{O}((1-2\beta\rho)^N+\frac{\alpha}{\rho}
% +\frac{\beta}{\rho}).
% \end{align}
% \begin{proof}
% By the non-expansiveness of projection, we have
% \begin{align}
% \|w_{n+1}-w_0^*\|^2\leq&\|w_n-\beta\Big[\frac{F(x_0)-F(x_{1})}{\alpha}+\rho w_n\Big]-w_0^*\|^2\nonumber\\
% =&\|w_n-\beta\Big[\nabla F^\top(x_0)\nabla
% F(x_0)w_n+\rho w_n+\frac{R(x_0)}{\alpha}\Big]-w_0^*\|^2\nonumber\\
% =&\|w_n-w_0^*\|^2-2\beta\langle w_n-w_0^*,\nabla F^\top(x_0)\nabla F(x_0)w_n+\rho w_n\rangle-2\beta\langle w_n-w_0^*,\frac{R(x_0)}{\alpha}\rangle\nonumber\\
% &+\beta^2\Big\|\nabla F^\top(x_0)\nabla
% F(x_0)w_n+\rho w_n+\frac{R(x_0)}{\alpha}\Big\|^2\nonumber\\
% \overset{(i)}{\leq}&(1-2\beta\rho)\|w_n-w_0^*\|^2+\frac{2\beta}{\alpha}\|R(x_t)\|+\beta^2(\sqrt{K}M^2+\rho+\frac{\|R(x_t)\|}{\alpha})^2\nonumber\\
% \overset{(ii)}{\leq}&(1-2\beta\rho)\|w_n-w_0^*\|^2+\frac{\alpha\beta K\ell(M+1)M}{2}+\beta^2(\sqrt{K}M^2+\rho+\frac{\alpha K\ell(M+1)M}{2})^2,
% \end{align}
% where $(i)$ follows from the strong convexity and Cauchy-Schwartz inequality, and $(ii)$ follows from \Cref{lemma:remainder}. Then telescope the above inequality over $n=0,1,...,N-1$,
% \begin{align}
% \|w_N-w_0^*\|^2\leq&(1-2\beta\rho)^N\|w_0-w_0^*\|^2+\frac{\alpha K\ell(M+1)M}{2\rho}+\frac{\beta}{\rho}(\sqrt{K}M^2+\rho+\frac{\alpha K\ell(M+1)M}{2})^2\nonumber\\
% =&\mathcal{O}((1-2\beta\rho)^N+\frac{\alpha}{\rho}
% +\frac{\beta}{\rho}).
% \end{align}
% The proof is complete.
% \end{proof}
% \end{lemma}
\subsection{Proof of \Cref{thm:consistsmallcadistance}}\label{proof:gddeterministic}
\begin{theorem}
Suppose Assumptions \ref{ass:diffandlowerbound} and \ref{ass:lsmooth} are satisfied.  We choose $\beta^\prime\leq\frac{1}{M^2}, N=\mathcal{O}(\epsilon^{-2}), {C_1\ge\sqrt{K}M^2+\rho}, \beta\leq\min\left(\frac{\epsilon^2\rho}{C_1^2},\epsilon^2,\frac{1}{4KM^2}\right), \alpha\le \min\left(\beta,\frac{1}{\ell(M+1)},\frac{\beta\rho\epsilon}{2L_w\sqrt{M}},\frac{\rho\epsilon^2}{2L_wMC_1}\right), T\ge\max\left(\frac{10\Delta}{\alpha \epsilon^2},\frac{10}{\epsilon^2\beta}\right)= \Theta(\epsilon^{-11})$, and $\rho\le\min \left(\frac{\epsilon^2}{20}, \frac{1}{2T\alpha},\sqrt{\frac{\epsilon^2}{10\beta}},\sqrt{\frac{1}{T\alpha\beta}}\right)= \mathcal O(\epsilon^2)$. The CA distance in every iteration takes the order of $\mathcal{O}(\epsilon)$.
\end{theorem}
\begin{proof}
Since our parameters satisfy all requirements in \Cref{thm:gddeterministic}, we have that $\|\nabla f_i(x_t)\|\leq M$. According to the definition of CA distance, we have
\begin{align}\label{eq:ca}
\|\nabla F&(x_t)w_t-\nabla F(x_t)w_t^*\|\nonumber\\
=&\|\nabla F(x_t)w_t-\nabla F(x_t)w_{t,\rho}^*+\nabla F(x_t)w_{t,\rho}^*-\nabla F(x_t)w_t^*\|\nonumber\\
\overset{(i)}{\leq}&\|\nabla F(x_t)w_t-\nabla F(x_t)w_{t,\rho}^*\|+\|\nabla F(x_t)w_{t,\rho}^*-\nabla F(x_t)w_t^*\|\nonumber\\
\overset{(ii)}{\leq}&\sqrt{K}M\|w_t-w_{t,\rho}^*\|+\sqrt{\rho},
\end{align}
where $(i)$ follows from Cauchy-Schwartz inequality and $(ii)$ follows from $\|\nabla f_i(x_t)\|\leq M$ for any $i$ and \Cref{lemma:sqrtrho}. Then for the first term in the above inequality on the right-hand side (RHS), we have
\begin{align}\label{eq:t+1term}
\|w_{t+1}-w_{t+1,\rho}^*\|^2=\|w_{t+1}-w_{t,\rho}^*\|^2+\|w_{t+1,\rho}^*-w_{t,\rho}^*\|^2-2\langle w_{t+1}-w_{t,\rho}^*,w_{t+1,\rho}^*-w_{t,\rho}^*\rangle.
\end{align}
For the first term on the RHS in the above inequality, we have
\begin{align}\label{eq:t+1term-1}
\|w_{t+1}-w_{t,\rho}^*\|^2\overset{(i)}{\leq}&\|w_t-\beta[\nabla F(x_t)^\top\nabla F(x_t)w_t+\rho w_t]-w_{t,\rho}^*\|^2\nonumber\\
=&\|w_t-w_{t,\rho}^*\|^2-2\beta\langle\nabla F(x_t)^\top\nabla F(x_t)w_t+\rho w_t,w_t-w_{t,\rho}^*\rangle\nonumber\\
&+\beta^2\|\nabla F(x_t)^\top\nabla F(x_t)w_t+\rho w_t\|^2\nonumber\\
\overset{(ii)}{\leq}&(1-2\beta\rho)\|w_t-w_{t,\rho}^*\|^2+\beta^2(\rho+\sqrt{K}M^2)^2,
\end{align}
where $(i)$ follows from the non-expansiveness of projection and $(ii)$ follows from properties of strong convexity and Cauchy-Schwartz inequality. Then for the second term on the RHS in \cref{eq:t+1term}, we have
\begin{align}\label{eq:t+1term-2}
\|w_{t+1,\rho}^*-w_{t,\rho}^*\|^2\leq L_w^2\|x_t-x_{t+1}\|^2=L_w^2\alpha^2\|\nabla F
(x_t)w_t\|^2\leq \alpha^2L_w^2M^2.
\end{align}
Then for the last term on the RHS in \cref{eq:t+1term}, we have
\begin{align}\label{eq:t+1term-3}
-2&\langle w_{t+1}-w_{t,\rho}^*,w_{t+1,\rho}^*-w_{t,\rho}^*\rangle\nonumber\\
\leq&2\|w_{t+1}-w_{t,\rho}^*\|\|w_{t+1,\rho}^*-w_{t,\rho}^*\|\nonumber\\
\leq&2(\|w_{t+1}-w_t\|+\|w_t-w_{t,\rho}^*\|)\|w_{t+1,\rho}^*-w_{t,\rho}^*\|\nonumber\\
\overset{(i)}{\leq}&2\alpha\beta L_w\|\nabla F(x
_t)^\top\nabla F(x_t)w_t+\rho w_t\|\|\nabla F(x_t)w_t\|+\beta\rho\|w_t-w_{t,\rho}^*\|^2+\frac{4}{\beta\rho}\|w_{t+1,\rho}^*-w_{t,\rho}^*\|^2\nonumber\\
\leq&2\alpha\beta L_wM(\sqrt{K}M^2+\rho)+\beta\rho\|w_t-w_{t,\rho}^*\|^2+\frac{4\alpha^2L_w^2M^2}{\beta\rho},
\end{align}
where $(i)$ follows from the update rule in \Cref{alg:1}, \Cref{lemma:rho}, and Young's inequality. Then substituting \cref{eq:t+1term-1}, \cref{eq:t+1term-2} and \cref{eq:t+1term-3} into \cref{eq:t+1term}, we have
\begin{align*}
\|w_{t+1}-w_{t+1,\rho}^*\|^2\leq&(1-\beta\rho)\|w_t-w_{t,\rho}^*\|^2+\beta^2(\rho+\sqrt{K}M^2)^2+\alpha^2L_w^2M^2\nonumber\\
&+2\alpha\beta L_wM(\sqrt{K}M^2+\rho)+\frac{4\alpha^2L_w^2M^2}{\beta\rho}\nonumber\\
\leq&(1-\beta\rho)\|w_t-w_{t,\rho}^*\|^2+\beta^2C_1^2+\alpha^2L_w^2M^2+2\alpha\beta L_wMC_1+\frac{4\alpha^2L_w^2M^2}{\beta\rho},
\end{align*}
where the last inequality follows from \Cref{lemma:remainder} and $C_1\ge\sqrt{K}M^2+\rho$. Then we do telescoping over $t=0,1,..., T-1$
\begin{align*}
\|w_T-w_{T,\rho}^*\|^2\leq&(1-\beta\rho)^T\|w_0-w_{0,\rho}^*\|^2+\frac{\beta}{\rho}C_1^2+\frac{\alpha^2}{\beta\rho}L_w^2M^2+\frac{2\alpha L_wM}{\rho}C_1+\frac{4\alpha^2L_w^2M^2}{\beta^2\rho^2}.
\end{align*}
Then recalling that $L_w=\mathcal{O}(\frac{1}{\rho})$ and substituting the above inequality into \cref{eq:ca}, we have
\begin{align*}
\|\nabla F(x_t)w_t&-\nabla F(x_t)w_t^*\|\nonumber\\
\leq&\sqrt{K}M\Big[(1-\beta\rho)^t\|w_0-w_{0,\rho}^*\|^2+\frac{\beta}{\rho}C_1^2+\frac{\alpha^2}{\beta\rho}L_w^2M+\nonumber\\
&\frac{2\alpha L_wM}{\rho}C_1+\frac{4\alpha^2L_w^2M^2}{\beta^2\rho^2}\Big]^{\frac{1}{2}}+\sqrt{\rho}\nonumber\\
=&\mathcal{O}\Big((1-\beta\rho)^{\frac{t}{2}}\|w_0-w_{0,\rho}^*\|+\sqrt{\frac{\beta}{\rho}}+\frac{\alpha}{\beta\rho^2}+\sqrt{\rho}\Big).
\end{align*}
Since we run projected gradient descent for the strongly convex function $J(w_n)=\frac{1}{2}\|\nabla F(x_0)w_n\|^2+\frac{\rho}{2}\|w_n\|^2$ in the N-loop in \Cref{alg:1}, according to Theorem 10.5 \citep{garrigos2023handbook}, we have by choosing $\beta'\in(0,\frac{1}{M^2}]$ $$\|w_0-w_{0,\rho}^*\|^2=\|w_N-w_{0,\rho}^*\|^2\leq2\Big(1-\frac{\rho}{M^2}\Big)^N.$$
Thus, $\|w_0-w_{0,\rho}^*\|=\mathcal{O}(\epsilon)$ as $N= \mathcal O(\rho^{-1})$.
CA distance takes the order of $\epsilon$ in every iteration by choosing $\rho= \mathcal O(\epsilon^2), \beta= \mathcal O(\epsilon^4)$, $\alpha= \mathcal O(\epsilon^9)$, and $N= \mathcal O(\epsilon^{-2})$. The proof is complete.
\end{proof}

\subsection{Formal Version and Its Proof of \Cref{thm:gsmgrad-famoinformal}}\label{proof:gsmgrad-famoformal}
Let $c_1>0, c_2^\prime>0$, $c_3^\prime, c_4^\prime\ge 0$, and $F>0$ be some constants such that 
\begin{flalign*}
    \Delta+c_1+{ c_2^\prime}+ c_3^\prime+c_4^\prime\le F
\end{flalign*}
{and $C_1'\ge\sqrt{K}M^2+\rho+\frac{\alpha K\ell(M+1)M^2}{2}$.}
We then have the following convergence rate for  Algorithm \ref{alg:gs-famo}.
\begin{theorem}
Suppose Assumptions \ref{ass:diffandlowerbound} and \ref{ass:lsmooth} are satisfied, and we choose constant step sizes that $\beta\leq\frac{\epsilon^2}{C_1'^2}, \alpha\leq\min\left(c_1\beta,\sqrt{\frac{2c_3^\prime}{K\ell(M+1)M^2T}}, \frac{2c_4^\prime}{\beta C_1'^2T}, \frac{\epsilon^2}{K\ell(M+1)M^2}\right), \rho\leq\min\left(\frac{\epsilon^2}{2},\frac{c_2^\prime}{\alpha T}\right)$, and $T\ge\max\left(\frac{10\Delta}{\alpha \epsilon^2},\frac{10}{\epsilon^2\beta}\right)$. We have
\begin{align*}
\frac{1}{T}\sum_{t=0}^{T-1}\|\nabla F(x_t)w_t\|^2=\mathcal{O}(\epsilon^2).
\end{align*}
\end{theorem}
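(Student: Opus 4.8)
The plan is to reduce this to the induction argument already used for Theorem \ref{thm:gddeterministic}, treating the Taylor remainder $R(x_t)$ as a controllable $O(\alpha)$ perturbation of the weight update. The key observation is that the model-parameter step $x_{t+1}=x_t-\alpha\nabla F(x_t)w_t$ is identical to Line 5 of Algorithm \ref{alg:1}, so the descent inequality \eqref{eq:fwdescent} for $F(x_{t+1})w$ and the gradient-norm control it provides carry over verbatim once $\|\nabla f_i(x_t)\|\le M$ is known. The only change is the weight update \eqref{eq:updaterule}: substituting $\frac{F(x_t)-F(x_{t+1})}{\alpha}=\nabla F(x_t)^\top\nabla F(x_t)w_t-\frac{R(x_t)}{\alpha}$ shows it equals the update of Algorithm \ref{alg:1} with the extra drift $-\frac{R(x_t)}{\alpha}$ inside the projection $\Pi_\mathcal{W}$.

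First I would re-run the induction $f_i(x_t)-f_i^*\le F$ for all $i\in[K]$ and $t\le T$. Inside the induction step, where $\|\nabla f_i(x_t)\|\le M$ holds for $t\le k$ by Lemma \ref{lemma:varphi} and $\|x_t-x_{t+1}\|\le\alpha M\le\frac{1}{\ell(M+1)}$ by the choice of $\alpha$, the hypotheses of Lemma \ref{lemma:remainder} are met, giving $\|R(x_t)\|\le\frac{\alpha^2K\ell(M+1)M^2}{2}$ and hence $\|\frac{R(x_t)}{\alpha}\|\le\frac{\alpha K\ell(M+1)M^2}{2}=O(\alpha)$. I would then repeat the non-expansive expansion of $\|w_{t+1}-w\|^2$ leading to \eqref{eq:wdecsent}. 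Relative to the clean case this produces exactly two new contributions: a linear cross term $\langle w_t-w,\frac{R(x_t)}{\alpha}\rangle$, which after the multiplication by $\alpha$ that occurs when plugging into \eqref{eq:fwdescent} becomes $\langle w_t-w,R(x_t)\rangle\le\sqrt{2}\,\|R(x_t)\|=O(\alpha^2)$ (using the simplex diameter $\sqrt2$ and Cauchy--Schwarz); and an enlarged squared term, which I would bound directly by $\frac{\beta}{2}C_1'^2$ with $C_1'=\sqrt{K}M^2+\rho+\frac{\alpha K\ell(M+1)M^2}{2}$ an upper bound on the full update direction $\nabla F(x_t)^\top\nabla F(x_t)w_t+\rho w_t-\frac{R(x_t)}{\alpha}$.

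Telescoping the resulting analogue of \eqref{eq:fwdescent2} from $t=0$ to $k$ yields the counterpart of \eqref{eq:gddetermi} with two additional accumulated sums, $\sum_t\langle w_t-w,R(x_t)\rangle\le\sqrt2\,T\frac{\alpha^2K\ell(M+1)M^2}{2}$ and $\frac{\alpha\beta T}{2}C_1'^2$. The step sizes are calibrated to absorb both: $\alpha\le\sqrt{2c_3'/(K\ell(M+1)M^2T)}$ forces the remainder sum below $c_3'$, while $\alpha\le 2c_4'/(\beta C_1'^2 T)$ forces the squared-term sum below $c_4'$; together with $\alpha\le c_1\beta$ (controlling $\frac{\alpha}{2\beta}\|w_0-w\|^2\le c_1$) and $\rho\le c_2'/(\alpha T)$ this gives $f_i(x_{k+1})-f_i^*\le\Delta+c_1+c_2'+c_3'+c_4'\le F$, closing the induction. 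With boundedness in hand I would set $k=T-1$, use $\alpha\le\frac{1}{\ell(M+1)}$ so that the net coefficient $-\alpha+\frac{\ell(M+1)}{2}\alpha^2\le-\frac{\alpha}{2}$ keeps the gradient terms dissipative, divide by $\alpha T$, and invoke $T\ge\max(\frac{10\Delta}{\alpha\epsilon^2},\frac{10}{\epsilon^2\beta})$, $\beta\le\epsilon^2/C_1'^2$, and $\rho\le\epsilon^2/2$ to conclude $\frac{1}{T}\sum_{t=0}^{T-1}\|\nabla F(x_t)w_t\|^2\le\epsilon^2$.

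The main obstacle I anticipate is the apparent circularity in bounding $R(x_t)$: Lemma \ref{lemma:remainder} needs the gradient bound $M$ and the displacement constraint $\|x_t-x_{t+1}\|\le 1/\ell(M+1)$, yet $\|\nabla f_i\|\le M$ is itself the statement the induction is establishing. The resolution is to invoke Lemma \ref{lemma:remainder} strictly \emph{within} the induction step, where those bounds are already available for all $t\le k$, so that the estimate proving $f_i(x_{k+1})-f_i^*\le F$ never relies on an unverified gradient bound. Everything beyond this is the routine bookkeeping of propagating the extra $O(\alpha)$ drift through the telescoped inequality, which the choice of step sizes renders harmless.
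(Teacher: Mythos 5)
Your proposal follows essentially the same route as the paper's proof in Appendix \ref{proof:gsmgrad-famoformal}: the same induction on $f_i(x_t)-f_i^*\le F$, the same reduction of the update \eqref{eq:updaterule} to the Algorithm \ref{alg:1} update plus the drift $R(x_t)/\alpha$ bounded via Lemma \ref{lemma:remainder} inside the induction step, and the same telescoping with the constants $c_1, c_2^\prime, c_3^\prime, c_4^\prime$ absorbing the extra terms before the final dissipativity argument. Your version is correct (indeed, your $C_1^\prime$ with the first-power $\alpha$ in the remainder contribution is the more careful bookkeeping, and your explicit use of the simplex diameter in the cross term only rescales the constant budget), so nothing further is needed.
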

\begin{proof}[Proof.] Following similar steps in \Cref{proof:gddeterministic-avg}, we also prove that for any $i\in K$ and $t\leq T$, we have that $f_i(x_t)-f_i^*\leq F$ by induction.

\textbf{Base case:} since all constants $c_1, c_2^\prime, c_3^\prime, c_4^\prime$ are non-negative, we have that $f_i(x_0)-f_i^*\leq\Delta\leq F$ holds for any $i\in[K]$.

\textbf{Induction step:} assume that for any $i\in[K]$ and $t\leq k<T$, $f_i(x_t)-f_i^*\leq F$ holds. We then prove $f_i(x_{k+1})-f_i^*\leq F$ holds for any $i\in[K]$. Following similar steps in \Cref{proof:gddeterministic-avg}, we have
\begin{align}\label{eq:descentlemma-1}
F(x_{t+1})w\leq F(x_t)w-\alpha\langle \nabla F(x_t)w,\nabla F(x_t)w_t\rangle+\frac{\alpha^2\ell(M+1)}{2}\|\nabla F(x_t)w_t\|^2.
\end{align}
Based on the update rule of $w$ and non-expansiveness of projection, we have
\begin{align*}
\|w_{t+1}-w\|^2\leq&\Big\|w_t-\beta\Big(\frac{F(x_t)-F(x_{t+1})}{\alpha}+\rho w_t\Big)-w\Big\|^2\nonumber\\
=&\Big\|w_t-\beta\Big(\nabla F(x_t)^\top\nabla F(x_t)w_t+\rho w_t+\frac{R(x_t)}{\alpha}\Big)-w\Big\|^2\nonumber\\
\overset{(i)}{\leq}&\|w_t-w\|^2-2\beta\langle w_t-w,(\nabla F(x_t)^\top\nabla F(x_t)+\rho I)w_t\rangle+2\frac{\beta}{\alpha}\|R(x_t)\|+\beta^2(C_1^\prime)^2\nonumber\\
\overset{(ii)}{\leq}&\|w_t-w\|^2-2\beta\langle w_t-w,(\nabla F(x_t)^\top\nabla F(x_t)+\rho I)w_t\rangle\nonumber\\
&+\alpha\beta K\ell(M+1)M^2+\beta^2(C_1^\prime)^2\nonumber,
\end{align*}
where $(i)$ follows from Cauchy-Schwartz inequality and $C_1'\ge\sqrt{K}M^2+\rho+\frac{\alpha K\ell(M+1)M^2}{2}$, and $(ii)$ follows from \Cref{lemma:remainder}. 
Then we have
\begin{align*}
\langle w_t-w,\nabla F(x_t)^\top\nabla F(x_t)w_t\rangle\leq&\frac{1}{2\beta}(\|w_t-w\|^2-\|w_{t+1}-w\|^2)+\rho\nonumber\\
&+\frac{\alpha K\ell(M+1)M^2}{2}+\frac{\beta (C_1^\prime)^2}{2}.
\end{align*}
Then substituting the above inequality into \cref{eq:descentlemma-1}, we can obtain
\begin{align*}
F(x_{t+1})w-F(x_t)\leq& -\alpha\|\nabla F(x_t)w_t\|^2+\frac{\alpha^2\ell(M+1)}{2}\|\nabla F(x_t)w_t\|^2\nonumber\\
&+\frac{\alpha}{2\beta}(\|w_t-w\|^2-\|w_{t+1}-w\|^2)+\alpha\rho+\frac{\alpha^2K\ell(M+1)M^2}{2}+\frac{\alpha\beta (C_1^\prime)^2}{2}.
\end{align*}
Then taking sums of the above inequality from $t=0$ to $k$, for any $w\in\mathcal{W}$, we have
\begin{align}\label{eq:convergenceeq}
F(x_{k+1})w-F(x_0)w\leq&-\sum_{t=0}^k\alpha\|\nabla F(x_t)w_t\|^2+\sum_{t=0}^k\frac{\alpha^2\ell(M+1)}{2}\|\nabla F(x_t)w_t\|^2\nonumber\\
&+\frac{\alpha}{2\beta}\|w_0-w\|^2+\alpha\rho T+\frac{\alpha^2 K\ell(M+1)M^2T}{2}+\frac{\alpha\beta (C_1^\prime)^2T}{2}\nonumber\\
\leq&\frac{\alpha}{\beta}+\alpha\rho T+\frac{\alpha^2 K\ell(M+1)M^2T}{2}+\frac{\alpha\beta (C_1^\prime)^2T}{2},
\end{align}
where the last inequality follows from $\alpha\leq\frac{1}{\ell(M+1)}$. Thus, for any $i\in[K]$, it can be shown that
\begin{align*}
f_i(x_{k+1})-f_i^*\leq f_i(x_0)-f_i^*+\frac{\alpha}{\beta}+\alpha\rho T+\frac{\alpha^2 K\ell(M+1)M^2T}{2}+\frac{\alpha\beta (C_1^\prime)^2T}{2}\leq F,
\end{align*}
since we have that $\frac{\alpha}{\beta}\leq c_1$, $\alpha\rho T\leq c_2^\prime$, $\frac{\alpha^2K\ell(M+1)M^2T}{2}\leq c_3\prime$, $\frac{\alpha\beta (C_1^\prime)^2 T}{2}\leq c_4^\prime$. Now we finish the induction step and can show that $f_i(x_k)-f_i^*\leq F$ and \cref{eq:convergenceeq} hold for all $k<T$ and $i\in[K]$. Specifically, for $\alpha\leq\frac{1}{\ell(M+1)}$, we have
\begin{align*}
\frac{1}{T}\sum_{t=0}^{T-1}\|\nabla F(x_t)w_t\|^2\leq\frac{2F(x_0)w-2F^*w}{\alpha T}+\frac{2}{\beta T}+2\rho+\alpha K\ell(M+1)M^2+\beta (C_1^\prime)^2.
\end{align*}
Then following the choice of step sizes, we can obtain
\begin{align*}
\frac{1}{T}\sum_{t=0}^{T-1}\|\nabla F(x_t)w_t\|^2=\mathcal{O}(\epsilon^2).
\end{align*}
The proof is complete.
\end{proof}
\subsection{Formal Version and Its Proof of \Cref{thm:consistsmallcadistance-famo}}\label{proof:gddeterministic-famo}
\begin{theorem}
Suppose Assumptions \ref{ass:diffandlowerbound} and \ref{ass:lsmooth} are satisfied.  We choose $\beta^\prime\leq\frac{1}{M^2}, N=\Omega(\epsilon^{-2}), {C_1^\prime\ge\sqrt{K}M^2+\rho+\frac{\alpha K\ell(M+1)M^2}{2}}, \beta\leq\min\left(\frac{\epsilon^2\rho}{(C_1^\prime)^2},\epsilon^2\right), \alpha\le \min\left(c_1\beta,\frac{2c_3^\prime}{\beta c_1^\prime T},\frac{1}{\ell(M+1)},\frac{\beta\rho\epsilon}{2L_w M},\frac{\rho\epsilon^2}{2L_wMC_1^\prime}\right), T\ge\max\left(\frac{10\Delta}{\alpha \epsilon^2},\frac{10}{\epsilon^2\beta}\right)= \Theta(\epsilon^{-11})$, and $\rho\le\min \left(\frac{\epsilon^2}{20}, \frac{c_2^\prime}{2T\alpha}\right)= \mathcal O(\epsilon^2)$. The CA distance in every iteration takes the order of $\mathcal{O}(\epsilon)$.
\end{theorem}
\begin{proof}
According to the definition of CA distance, we have
\begin{align}\label{eq:ca-famo}
\|\nabla F&(x_t)w_t-\nabla F(x_t)w_t^*\|\nonumber\\
=&\|\nabla F(x_t)w_t-\nabla F(x_t)w_{t,\rho}^*+\nabla F(x_t)w_{t,\rho}^*-\nabla F(x_t)w_t^*\|\nonumber\\
\overset{(i)}{\leq}&\|\nabla F(x_t)w_t-\nabla F(x_t)w_{t,\rho}^*\|+\|\nabla F(x_t)w_{t,\rho}^*-\nabla F(x_t)w_t^*\|\nonumber\\
\overset{(ii)}{\leq}&\sqrt{K}M\|w_t-w_{t,\rho}^*\|+\sqrt{\rho},
\end{align}
where $(i)$ follows from Cauchy-Schwartz inequality and $(ii)$ follows from $\|\nabla f_i(x_t)\|\leq M$ for any $i$ and \Cref{lemma:rho}. Then for the first term in the above inequality on the right-hand side (RHS), we have
\begin{align}\label{eq:t+1term-famo}
\|w_{t+1}-w_{t+1,\rho}^*\|^2=\|w_{t+1}-w_{t,\rho}^*\|^2+\|w_{t+1,\rho}^*-w_{t,\rho}^*\|^2-2\langle w_{t+1}-w_{t,\rho}^*,w_{t+1,\rho}^*-w_{t,\rho}^*\rangle.
\end{align}
For the first term on the RHS in the above inequality, we have
\begin{align}\label{eq:t+1term-1-famo}
\|w_{t+1}-w_{t,\rho}^*\|^2\overset{(i)}{\leq}&\Big\|w_t-\beta\Big(\frac{F(x_t)-F(x_{t+1})}{\alpha}+\rho w_t\Big)-w_{t,\rho}^*\Big\|^2\nonumber\\
=&\Big\|w_t-\beta\Big(\nabla F(x_t)^\top\nabla F(x_t)w_t+\rho w_t+\frac{R(x_t)}{\alpha}\Big)-w_{t,\rho}^*\Big\|^2\nonumber\\
=&\|w_t-w_{t,\rho}^*\|^2-2\beta\langle\nabla F(x_t)^\top\nabla F(x_t)w_t+\rho w_t,w_t-w_{t,\rho}^*\rangle\nonumber\\
&-2\frac{\beta}{\alpha}\langle R(x_t),w_t-w_{t,\rho}^*\rangle+\beta^2\left\|\nabla F(x_t)^\top\nabla F(x_t)w_t+\frac{R(x_t)}{\alpha}+\rho w_t\right\|^2\nonumber\\
\overset{(ii)}{\leq}&(1-2\beta\rho)\|w_t-w_{t,\rho}^*\|^2+2\frac{\beta}{\alpha}\|R(x_t)\|+\beta^2\left(\rho+\sqrt{K}M^2+\frac{\|R(x_t)\|}{\alpha}\right)^2,
\end{align}
where $(i)$ follows from the non-expansiveness of projection and $(ii)$ follows from properties of strong convexity and Cauchy-Schwartz inequality. Then for the second term on the RHS in \cref{eq:t+1term-famo}, we have
\begin{align}\label{eq:t+1term-2-famo}
\|w_{t+1,\rho}^*-w_{t,\rho}^*\|^2\leq L_w^2\|x_t-x_{t+1}\|^2=L_w^2\alpha^2\|\nabla F
(x_t)w_t\|^2\leq \alpha^2L_w^2M^2.
\end{align}
Then for the last term on the RHS in \cref{eq:t+1term-famo}, we have
\begin{align}\label{eq:t+1term-3-famo}
-2\langle w_{t+1}&-w_{t,\rho}^*,w_{t+1,\rho}^*-w_{t,\rho}^*\rangle\nonumber\\
\leq&2\|w_{t+1}-w_{t,\rho}^*\|\|w_{t+1,\rho}^*-w_{t,\rho}^*\|\nonumber\\
\leq&2(\|w_{t+1}-w_t\|+\|w_t-w_{t,\rho}^*\|)\|w_{t+1,\rho}^*-w_{t,\rho}^*\|\nonumber\\
\overset{(i)}{\leq}&2\alpha\beta L_w\Big\|\frac{F(x_t)-F(x_{t+1})}{\alpha}+\rho w_t\Big\|\|\nabla F(x_t)w_t\|+\beta\rho\|w_t-w_{t,\rho}^*\|^2+\frac{4}{\beta\rho}\|w_{t+1,\rho}^*-w_{t,\rho}^*\|^2\nonumber\\
\leq&2\alpha\beta L_wM\left(\sqrt{K}M^2+\frac{\|R(x_t)\|}{\alpha}+\rho\right)+\beta\rho\|w_t-w_{t,\rho}^*\|^2+\frac{4\alpha^2L_w^2M^2}{\beta\rho}.
\end{align}
Then substituting \cref{eq:t+1term-1-famo}, \cref{eq:t+1term-2-famo} and \cref{eq:t+1term-3-famo} into \cref{eq:t+1term-famo}, we have
\begin{align*}
\|w_{t+1}-w_{t+1,\rho}^*\|^2\leq&(1-\beta\rho)\|w_t-w_{t,\rho}^*\|^2+2\frac{\beta}{\alpha}\|R(x_t)\|+\beta^2(\rho+\sqrt{K}M^2+\|R(x_t)\|)^2\nonumber\\
&+\alpha^2L_w^2M+2\alpha\beta L_wM\left(\sqrt{K}M^2+\frac{\|R(x_t)\|}{\alpha}+\rho\right)+\frac{4\alpha^2L_w^2M^2}{\beta\rho}\nonumber\\
\leq&(1-\beta\rho)\|w_t-w_{t,\rho}^*\|^2+\alpha\beta K\ell(M+1)M^2+\beta^2(C_1^\prime)^2\nonumber\\
&+\alpha^2L_w^2M+2\alpha\beta L_wMC_1^\prime+\frac{4\alpha^2L_w^2M^2}{\beta\rho},
\end{align*}
where the last inequality follows from \Cref{lemma:remainder} and $C_1^\prime\ge\sqrt{K}M^2+\rho+\frac{\alpha K\ell(M+1)M^2}{2}$. Then we do telescoping over $t=0,1,..., T-1$
\begin{align*}
\|w_T-w_{T,\rho}^*\|^2\leq&(1-\beta\rho)^T\|w_0-w_{0,\rho}^*\|^2+\frac{\alpha}{\rho} K\ell(M+1)M^2+\frac{\beta}{\rho}(C_1^\prime)^2\nonumber\\
&+\frac{\alpha^2}{\beta\rho}L_w^2M+\frac{2\alpha L_wM}{\rho}C_1^\prime+\frac{4\alpha^2L_w^2M^2}{\beta^2\rho^2}.
\end{align*}
Then substituting the above inequality into \cref{eq:ca-famo}, we have
\begin{align*}
\|\nabla F(x_t)w_t&-\nabla F(x_t)w_t^*\|\nonumber\\
\leq&\sqrt{K}M\Big[(1-\beta\rho)^t\|w_0-w_{0,\rho}^*\|^2+\frac{\alpha}{\rho} K\ell(M+1)M^2+\frac{\beta}{\rho}(C_1^\prime)^2\nonumber\\
&+\frac{\alpha^2}{\beta\rho}L_w^2M+\frac{2\alpha L_wM}{\rho}C_1^\prime+\frac{4\alpha^2L_w^2M}{\beta^2\rho^2}\Big]^{\frac{1}{2}}+\sqrt{\rho}\nonumber\\
=&\mathcal{O}\Big((1-\beta\rho)^{\frac{t}{2}}\|w_0-w_{0,\rho}^*\|+\sqrt{\frac{\alpha}{\rho^2}}+\sqrt{\frac{\beta}{\rho}}+\frac{\alpha}{\beta\rho^2}+\sqrt{\rho}\Big).
\end{align*}
Since we run projected gradient descent for the strongly convex function $J(w_n)=\frac{1}{2}\|\nabla F(x_0)w_n\|^2+\frac{\rho}{2}\|w_n\|^2$ in the N-loop in \Cref{alg:gs-famo}, according to Theorem 10.5 \citep{garrigos2023handbook}, we have $$\|w_0-w_{0,\rho}^*\|^2=\|w_N-w_{0,\rho}^*\|^2\leq2\Big(1-\frac{\rho}{M^2+\rho}\Big)^N.$$
Thus, $\|w_0-w_{0,\rho}^*\|=\mathcal{O}(\epsilon)$ as $N= \Omega(\rho^{-1})$.
CA distance takes the order of $\epsilon$ in every iteration by choosing $\rho= \mathcal O(\epsilon^2), \beta= \mathcal O(\epsilon^4)$, $\alpha= \mathcal O(\epsilon^9)$, and $N=\Omega(\epsilon^{-2})$. The proof is complete.
\end{proof}

\subsection{Formal Version of Its Proof of \Cref{thm:sgdeachca}}\label{proof:sgdeachca}
Let $\alpha, \beta, \rho, T$ satisfy all requirements for \Cref{thm:gdstochastic} with $\delta<\frac{1}{2}$. Moreover, for $\rho = \mathcal O(\epsilon^2), N=\Omega(\epsilon^{-2}), \beta\le \frac{\delta\rho\epsilon^2}{60(1+KM^4)}= \mathcal O(\epsilon^4), n_s\ge \max\{{K\sigma^2},\frac{36K\sigma^2M^2(6+20\beta\rho)}{\delta\rho^2\epsilon^2}\}= \Omega(\epsilon^{-6})$ and $\alpha\le \sqrt{\frac{\delta\beta^2\rho^2\epsilon^2}{12L_w^2(2M^2+4K\sigma^2)(\beta\rho+1)}}= \mathcal O(\epsilon^9)$ and $T= \Theta(\epsilon^{-11})$, we have the following theorem:

\begin{theorem}
If Assumptions \ref{ass:diffandlowerbound}, \ref{ass:lsmooth} and \ref{ass:boundedvariance} hold, with the values of the parameters mentioned  above,  we have that for each $t\le T$,
\begin{flalign*}
    \|\nabla F(x_t)w_t-\nabla F(x_t)w_t^*\|= \mathcal O(\epsilon),
\end{flalign*}
with the probability at least $1-\delta$.
\end{theorem}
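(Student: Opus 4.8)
The plan is to follow the template of the deterministic iteration-wise bound (\Cref{thm:consistsmallcadistance}), but to superimpose the stopping-time argument of \Cref{thm:gdstochastic} so as to keep the gradients bounded, and to neutralize the extra stochastic terms using the mini-batch. As a first step I would reduce the quantity of interest to the weight gap: by Cauchy--Schwarz and \Cref{lemma:sqrtrho},
\begin{align*}
\|\nabla F(x_t)w_t-\nabla F(x_t)w_t^*\|\le \sqrt{K}M\,\|w_t-w_{t,\rho}^*\|+\sqrt{\rho},
\end{align*}
which is valid on the event that all task gradients are bounded by $M$. Since $\rho\sim\mathcal O(\epsilon^2)$ gives $\sqrt{\rho}\sim\mathcal O(\epsilon)$, it suffices to show $\|w_t-w_{t,\rho}^*\|\sim\mathcal O(\epsilon)$ with high probability. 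To guarantee the gradient bound I would reuse the stopping time $\tau$ from \Cref{thm:gdstochastic}: on $\{\tau=T\}$, which occurs with probability at least $1-\delta/2$, we have $f_i(x_t)-f_i^*\le F$, hence $\|\nabla f_i(x_t)\|\le M$ for every $i$ and $t\le T$, so that \Cref{lemma:rho} (giving $L_w\sim\rho^{-1}$) and \Cref{lemma:sqrtrho} both apply.

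Next I would establish the stochastic analogue of the one-step contraction from \Cref{thm:consistsmallcadistance}. Expanding $\|w_{t+1}-w_{t+1,\rho}^*\|^2$ into the three terms $\|w_{t+1}-w_{t,\rho}^*\|^2$, $\|w_{t+1,\rho}^*-w_{t,\rho}^*\|^2$, and the cross term, I would bound the first using the stochastic update $w_{t+1}=\Pi_{\mathcal W}(w_t-\beta[\nabla G_2(x_t)^\top\nabla G_3(x_t)w_t+\rho w_t])$ together with the decomposition $\nabla G_2^\top\nabla G_3=\nabla F^\top\nabla F-\varepsilon_{t,2}^\top\nabla F-\nabla F^\top\varepsilon_{t,3}+\varepsilon_{t,2}^\top\varepsilon_{t,3}$, the second using $\|w_{t+1,\rho}^*-w_{t,\rho}^*\|\le L_w\alpha\|\nabla G_1(x_t)w_t\|$ from \Cref{lemma:rho}, and the cross term via Young's inequality exactly as in the deterministic proof. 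Taking conditional expectation given the filtration $\mathcal F_t$ and using the independence and unbiasedness of $\nabla G_1,\nabla G_2,\nabla G_3$, every noise term that is linear in $\varepsilon$, as well as the product term $\varepsilon_{t,2}^\top\varepsilon_{t,3}$, vanishes in mean, leaving a recursion of the form
\begin{align*}
\mathbb E\big[\|w_{t+1}-w_{t+1,\rho}^*\|^2\,\big|\,\mathcal F_t\big]\le(1-\beta\rho)\|w_t-w_{t,\rho}^*\|^2+E_{\mathrm{det}}+E_{\mathrm{var}},
\end{align*}
where $E_{\mathrm{det}}$ collects the same deterministic errors as in \Cref{thm:consistsmallcadistance} and $E_{\mathrm{var}}$ collects the residual variance terms, each proportional to $\mathbb E[\|\varepsilon_{t,j}\|^2]\le K\sigma^2/n_s$ or to $(K\sigma^2/n_s)^2$ thanks to the mini-batch averaging in $G_i=\tfrac1{n_s}\sum F(\cdot;s)$.

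I would then telescope this recursion. The geometric factor $(1-\beta\rho)$ drives the fixed point to $(E_{\mathrm{det}}+E_{\mathrm{var}})/(\beta\rho)$, and the initial term $(1-\beta\rho)^t\|w_0-w_{0,\rho}^*\|^2$ is made $\mathcal O(\epsilon^2)$ by the warm start: running projected gradient descent with the exact $\nabla F(x_0)$ for $N\sim\epsilon^{-2}$ steps gives $\|w_0-w_{0,\rho}^*\|^2\le 2(1-\rho/M^2)^N$ by the linear convergence of PGD (Theorem 10.5 of \cite{garrigos2023handbook}). Matching each deterministic error, divided by $\beta\rho$, to $\mathcal O(\delta\epsilon^2)$ forces $\alpha\sim\epsilon^9$, $\beta\sim\epsilon^4$, $\rho\sim\epsilon^2$ as in the deterministic case, while matching the variance contribution $E_{\mathrm{var}}/(\beta\rho)$ to the same order is what drives the mini-batch size up to $n_s\sim\epsilon^{-6}$. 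A final application of Markov's inequality converts $\mathbb E[\|w_t-w_{t,\rho}^*\|^2]\sim\mathcal O(\delta\epsilon^2)$ into $\|w_t-w_{t,\rho}^*\|\sim\mathcal O(\epsilon)$ with probability at least $1-\delta/2$, which combined with $\mathbb P(\tau=T)\ge1-\delta/2$ yields the claim with probability at least $1-\delta$.

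The main obstacle is exactly the point flagged after \Cref{thm:sgdeachca}: the optional-stopping cancellation $\mathbb E[\sum_{t=0}^{\tau}\langle\nabla F(x_t)w,\varepsilon_{t,1}w_t\rangle]=0$ that powers the \emph{average}-distance analysis of \Cref{thm:gdstochastic} only controls the cumulative noise and gives no handle on any single iterate, and conditioning on $\{\tau=T\}$ destroys the unbiasedness that would otherwise kill the per-step noise. Consequently there is no martingale shortcut, and the per-iteration variance must be suppressed directly by enlarging the batch, which is why $n_s$ must grow polynomially in $1/\epsilon$ and the overall sample complexity degrades to $\mathcal O(\epsilon^{-17})$. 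Care must also be taken that the stopping time couples the noise across coordinates through $\tau_2,\tau_3$, so the conditional-expectation step should be carried out on the stopped process $w_{t\wedge\tau}$ rather than naively under the event $\{\tau=T\}$.
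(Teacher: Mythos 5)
Your overall architecture matches the paper's proof: reduce the CA distance to $\sqrt{K}M\|w_t-w_{t,\rho}^*\|+\sqrt{\rho}$ via \Cref{lemma:sqrtrho}, reuse the stopping time $\tau$ from \Cref{thm:gdstochastic} for gradient boundedness, expand $\|w_{t+1}-w_{t+1,\rho}^*\|^2$ into the same three terms, handle the base case by the warm start, and finish with Markov's inequality plus a union bound over $\{\tau<T\}$. The genuine gap is in your recursion step, where you claim that conditioning on $\mathcal F_t$ makes ``every noise term that is linear in $\varepsilon$'' vanish in mean. Your own closing paragraph concedes this fails under conditioning on $\{\tau=T\}$, but your proposed repair --- carrying out the computation on the stopped process $w_{t\wedge\tau}$ --- does not restore unbiasedness either: the event $\{t<\tau\}$ is \emph{not} measurable with respect to the $\sigma$-algebra of samples drawn before step $t$, because $\tau_2$ and $\tau_3$ are defined through the time-$t$ noises $\varepsilon_{t,2},\varepsilon_{t,3}$ themselves, and $\tau_1$ through $x_{t+1}$ and hence $\varepsilon_{t,1}$; so $\mathbf{1}\{t<\tau\}$ is correlated with the very noise you want to cancel. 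The predictable indicator $\mathbf{1}\{\tau\ge t\}$ would give cancellation, but on $\{\tau\ge t\}$ the time-$t$ noise is unbounded, so the hypothesis $\|x_t-x_{t+1}\|\le 1/\ell(M+1)$ of \Cref{lemma:rho} --- which you need for $\|w_{t+1,\rho}^*-w_{t,\rho}^*\|\le L_w\|x_t-x_{t+1}\|$ --- can fail. The paper sidesteps this dilemma by never invoking per-step cancellation at all: the linear noise term is bounded brute force by Young's inequality with $a_1=\rho/2$ (see \eqref{eq:t+1term87}), and every conditional moment is controlled by $\mathbb E[\,\cdot\,|\,\tau=T]\le \mathbb E[\,\cdot\,]/\mathbb P(\tau=T)\le 2\,\mathbb E[\,\cdot\,]$, using $\mathbb P(\tau=T)\ge 1-\delta/2\ge 1/2$.

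The cancellation claim also makes your parameter accounting internally inconsistent. If the linear terms truly vanished, the surviving variance contributions would be of order $\beta^2\mathbb E\|\varepsilon_{t,j}\|^2\sim \beta^2K\sigma^2/n_s$; dividing by the contraction margin $\beta\rho$ would require only $n_s\gtrsim \beta K\sigma^2/(\rho\,\delta\epsilon^2)$, which with $\beta\sim\epsilon^4$ and $\rho\sim\epsilon^2$ is $\mathcal O(1)$, not the $n_s\sim\epsilon^{-6}$ you assert. What actually drives $n_s\sim\epsilon^{-6}$ is the \emph{non-cancelled} linear term: Young with $a_1\sim\rho$ produces a contribution $\frac{\beta}{\rho}\,\mathbb E[\|\varepsilon_{t,j}\|^2\,|\,\tau=T]\sim\frac{\beta}{\rho}\cdot\frac{K\sigma^2}{n_s}$, and forcing it below $\mathcal O(\delta\beta\rho\epsilon^2)$ gives $n_s\gtrsim 1/(\rho^2\epsilon^2)\sim\epsilon^{-6}$, exactly matching the paper's requirement $n_s\ge 36K\sigma(6+20\beta\rho)/(\delta\rho^2\epsilon^2)$. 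To repair your argument, drop the martingale cancellation, bound the linear noise term via Young's inequality as in the paper, and propagate the conditioning on $\{\tau=T\}$ through the $\mathbb P(\tau=T)\ge 1/2$ bound; with that substitution the remainder of your outline goes through as written.
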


\begin{proof}
When $\tau=T$ and $t<\tau $, according to the definition of CA distance, we have
\begin{align}\label{eq:ca2}
\|\nabla F&(x_t)w_t-\nabla F(x_t)w_t^*\|\nonumber\\
=&\|\nabla F(x_t)w_t-\nabla F(x_t)w_{t,\rho}^*+\nabla F(x_t)w_{t,\rho}^*-\nabla F(x_t)w_t^*\|\nonumber\\
\overset{(i)}{\leq}&\|\nabla F(x_t)w_t-\nabla F(x_t)w_{t,\rho}^*\|+\|\nabla F(x_t)w_{t,\rho}^*-\nabla F(x_t)w_t^*\|\nonumber\\
\overset{(ii)}{\leq}&\sqrt{K}M\|w_t-w_{t,\rho}^*\|+\sqrt{\rho},
\end{align}
where $(i)$ follows from Cauchy-Schwartz inequality and $(ii)$ follows from $\|\nabla f_i(x_t)\|\leq M$ for any $i\in[K]$ and \Cref{lemma:sqrtrho}. 
We then show that for any $t\le \tau$, we have that $\mathbb E[\|w_t-w_{t,\rho}^*\|^2|\tau=T]\le \frac{\delta}{2}\epsilon^2$ by induction. 

\textbf{Base case:} Since we run projected gradient descent for the strongly convex function $J(w_n)=\frac{1}{2}\|\nabla F(x_0)w_n\|^2+\frac{\rho}{2}\|w_n\|^2$ in the N-loop in \Cref{alg:1},  according to Theorem 10.5 \citep{garrigos2023handbook}, we have by choosing $\beta'\in(0,\frac{1}{M^2}]$ $$\|w_0-w_{0,\rho}^*\|^2=\|w_N-w_{0,\rho}^*\|^2\leq2\Big(1-\frac{\rho}{M^2}\Big)^N.$$
Thus, $\|w_0-w_{0,\rho}^*\|^2=\mathcal{O}(\frac{\delta}{2}\epsilon^2)$ as $N= \Omega(\rho^{-1})$.

\textbf{Induction:} Assume we have that $ \mathbb E[\|w_t-w_{t,\rho}^*\|^2|\tau=T]\le \frac{\delta}{2}\epsilon^2$, we will show that  $ \mathbb E[\|w_{t+1}-w_{t+1,\rho}^*\|^2|\tau=T]\le \frac{\delta}{2}\epsilon^2$ holds for any $t<\tau $ in the following proof.
We first divide $\|w_{t+1}-w_{t+1,\rho}^*\|^2$ into three parts:
\begin{align}\label{eq:t+1term2}
\|w_{t+1}-w_{t+1,\rho}^*\|^2=\|w_{t+1}-w_{t,\rho}^*\|^2+\|w_{t+1,\rho}^*-w_{t,\rho}^*\|^2-2\langle w_{t+1}-w_{t,\rho}^*,w_{t+1,\rho}^*-w_{t,\rho}^*\rangle.
\end{align}
For the first term on the RHS in the above inequality, we have that
\begin{align}\label{eq:t+1term-12}
&\|w_{t+1}-w_{t,\rho}^*\|^2\nonumber\\
\overset{(i)}{\leq}&\Big\|w_t-\beta\Big(\nabla G_2(x_t)^\top\nabla G_3(x_t)w_t+\rho w_t\Big)-w_{t,\rho}^*\Big\|^2\nonumber\\
=&\|w_t-w_{t,\rho}^*\|^2-2\beta\langle\nabla G_2(x_t)^\top\nabla G_2(x_t)w_t+\rho w_t,w_t-w_{t,\rho}^*\rangle\nonumber\\
&+\beta^2\|\nabla G_2(x_t)^\top\nabla G_3(x_t)w_t+\rho w_t\|^2\nonumber\\
\overset{(ii)}{\leq}&(1-2\beta\rho)\|w_t-w_{t,\rho}^*\|^2\nonumber\\
&+2\beta \left\langle w_t-w_{t,\rho}^*, \varepsilon_{t,2}^\top \nabla F(x_t)w_t
+  \nabla F(x_t)^\top\varepsilon_{t,3}w_t-\varepsilon_{t,2}^\top\varepsilon_{t,3}w_t\right\rangle\nonumber\\
&+\beta^2\|\rho w_t
+\nabla F(x_t)^\top \nabla F(x_t)w_t-\varepsilon_{t,2}^\top \nabla F(x_t)w_t-  \nabla F(x_t)^\top\varepsilon_{t,3}w_t+\varepsilon_{t,2}^\top\varepsilon_{t,3}w_t\|^2,
\end{align}
where  $(i)$ follows from the non-expansiveness of projection and $(ii)$ follows from properties of strong convexity and Cauchy-Schwartz inequality. Taking the conditional expectation of \eqref{eq:t+1term-12}, we have that for any $a_1>0$,
\begin{flalign}\label{eq:t+1term87}
    &\mathbb E[\|w_{t+1}-w_{t,\rho}^*\|^2|\tau=T]\nonumber\\
    \le& \frac{\delta}{2}(1-2\beta\rho)\epsilon^2+2\beta\mathbb E[\|w_t-w_{t,\rho}^*\|\|\varepsilon_{t,2}^\top \nabla F(x_t)w_t+  \nabla F(x_t)^\top\varepsilon_{t,3}w_t-\varepsilon_{t,2}^\top\varepsilon_{t,3}w_t\||\tau=T]\nonumber\\
    &+\mathbb E[\beta^2\|\rho w_t+\nabla F(x_t)^\top \nabla F(x_t)w_t-\varepsilon_{t,2}^\top \nabla F(x_t)w_t-  \nabla F(x_t)^\top\varepsilon_{t,3}w_t+\varepsilon_{t,2}^\top\varepsilon_{t,3}w_t\|^2|\tau=T]\nonumber\\
    \le& \beta (\mathbb E[a_1\|w_t-w_{t,\rho}^*\|^2+\|\varepsilon_{t,2}^\top \nabla F(x_t)w_t+  \nabla F(x_t)^\top\varepsilon_{t,3}w_t-\varepsilon_{t,2}^\top\varepsilon_{t,3}w_t\|^2/a_1|\tau=T]])\nonumber\\
    &+ \frac{\delta}{2}(1-2\beta\rho)\epsilon^2+5\beta^2\rho^2+5\beta^2KM^4+5\beta^2\mathbb E[M^2\|\epsilon_{t,2}\|^2|\tau=T]\nonumber\\
    &+5\beta^2\mathbb E[M^2\|\epsilon_{t,3}\|^2|\tau=T]+5\beta^2\mathbb E[\|\epsilon_{t,2}\|^2\|\epsilon_{t,3}\|^2|\tau=T],
\end{flalign}
where the last inequality is due to that for $t\le \tau=T$, and for any $i\in [K]$, we have that $\|\nabla f_i(x_t)\|\le M.$
Then for the second term on the RHS in \cref{eq:t+1term2}, we have
\begin{align}\label{eq:t+1term-22}
\mathbb E[\|w_{t+1,\rho}^*-w_{t,\rho}^*\|^2|\tau=T]&\leq \mathbb E[L_w^2\|x_t-x_{t+1}\|^2|\tau=T]\nonumber\\
&=\mathbb E[L_w^2\alpha^2\|\nabla F
(x_t, s_{t,1})w_t\|^2|\tau=T]\nonumber\\
&\leq \mathbb E[\alpha^2L_w^2(M+\|\epsilon_{t,1}w_t\|)^2|\tau=T],
\end{align}
where the first inequality is due to \Cref{lemma:rho}, where $L_w=\mathcal O(\rho^{-1})$.
Then for the last term on the RHS in \cref{eq:t+1term2}, for any $a_2>0, a_3>0$, we have that
\begin{align}\label{eq:t+1term-32}
&\mathbb E[-2\langle w_{t+1}-w_{t,\rho}^*,w_{t+1,\rho}^*-w_{t,\rho}^*\rangle|\tau=T]\nonumber\\
\leq&\mathbb E[2\|w_{t+1}-w_{t,\rho}^*\|\|w_{t+1,\rho}^*-w_{t,\rho}^*\||\tau=T]\nonumber\\
\leq&\mathbb E[2(\|w_{t+1}-w_t\|+\|w_t-w_{t,\rho}^*\|)\|w_{t+1,\rho}^*-w_{t,\rho}^*\||\tau=T]\nonumber\\
\le& \mathbb E\left[a_2\|w_{t+1}-w_t\|^2+\frac{1}{a_2}\|w_{t+1,\rho}^*-w_{t,\rho}^*\|^2+a_3\|w_{t}-w_{t,\rho}^*\|^2+\frac{1}{a_3}\|w_{t+1,\rho}^*-w_{t,\rho}^*\|^2|\tau=T\right]\nonumber\\
\overset{(i)}{\leq}&\mathbb E\left[a_2\beta^2\|\nabla G_2(x_t)^\top\nabla G_3(x_t)w_t+\rho w_t\|^2+a_3\frac{\delta}{2}\epsilon^2+\left(\frac{1}{a_2}+\frac{1}{a_3}\right)\alpha^2L_w^2(M+\|\epsilon_{t,1}w_t\|)^2|\tau=T\right]\nonumber\\
\leq&a_2(5\beta^2\rho^2+5\beta^2KM^4+5\beta^2\mathbb E[M^2\|\epsilon_{t,2}\|^2|\tau=T]\nonumber\\
    &+5\beta^2\mathbb E[M^2\|\epsilon_{t,3}\|^2|\tau=T]+5\beta^2\mathbb E[\|\epsilon_{t,2}\|^2|\|\epsilon_{t,3}\|^2|\tau=T])\nonumber\\
&+\mathbb E\Big[\left(\frac{1}{a_2}+\frac{1}{a_3}\right)\alpha^2L_w^2(M+\|\epsilon_{t,1}w_t\|)^2|\tau=T\Big]+a_3\frac{\delta}{2}\epsilon^2,
\end{align}
where $(i)$ follows from  the non-expansiveness of projection and \eqref{eq:t+1term-22}, and the last inequality is from  \eqref{eq:t+1term87}.
Then substituting \cref{eq:t+1term87}, \cref{eq:t+1term-22} and \cref{eq:t+1term-32} into \cref{eq:t+1term2}, we have
\begin{align}\label{eq:t+1final}
&\mathbb E[\|w_{t+1}-w_{t+1,\rho}^*\|^2|\tau=T]\nonumber\\
\leq&(1-2\beta\rho+\beta a_1+a_3)\frac{\delta}{2}\epsilon^2\nonumber\\
&+\beta^2(5\rho^2+5KM^4)(1+a_2)\nonumber\\
&+M^2\left(\frac{3\beta}{a_1}+5\beta^2+5\beta^2a_2\right)\mathbb E[\|\varepsilon_{t,2}\|^2|\tau=T]\nonumber\\
&+M^2\left(\frac{3\beta}{a_1}+5\beta^2+5\beta^2a_2\right)\mathbb E[\|\varepsilon_{t,3}\|^2|\tau=T]\nonumber\\
&+\left(\frac{3\beta}{a_1}+5\beta^2+5\beta^2a_2\right)\mathbb E[\|\varepsilon_{t,2}\|^2\|\varepsilon_{t,3}\|^2|\tau=T]\nonumber\\
&+\mathbb E\Big[\left(1+\frac{1}{a_2}+\frac{1}{a_3}\right)\alpha^2L_w^2(M+\|\epsilon_{t,1}w_t\|)^2|\tau=T\Big]\nonumber\\
\leq&(1-2\beta\rho+\beta a_1+a_3)\frac{\delta}{2}\epsilon^2\nonumber\\
&+\beta^2(5\rho^2+5KM^4)(1+a_2)\nonumber\\
&+M^2\left(\frac{3\beta}{a_1}+5\beta^2+5\beta^2a_2\right)\left({\frac{4K\sigma^2}{n_s}}+\frac{{2}K^2\sigma^4}{n_s^2}\right)\nonumber\\
&+\left(1+\frac{1}{a_2}+\frac{1}{a_3}\right)\alpha^2L_w^2\left(2M^2+\frac{4K\sigma^2}{n_s}\right),
\end{align}
where the last inequality is due to that for any $i\in[3]$,
\begin{flalign*}
    \mathbb E[\|\epsilon_{t,i}\||\tau=T]\le  \sqrt{\mathbb E[\|\epsilon_{t,i}\|^2|\tau=T]}\le \sqrt{\mathbb E[\|\epsilon_{t,i}\|^2]/\mathbb P(\tau=T)}\le \sqrt{\frac{2K}{n_s}}\sigma
\end{flalign*}
and 
\begin{flalign*}
    \mathbb E[\|\epsilon_{t,2}\|\|\epsilon_{t,3}\||\tau=T]
    &\le  \sqrt{\mathbb E[\|\epsilon_{t,2}\|^2\|\epsilon_{t,3}\|^2|\tau=T]}\nonumber\\
    &\le \sqrt{\mathbb E[\|\epsilon_{t,2}\|^2\|\epsilon_{t,3}\|^2]/\mathbb P(\tau=T)}\nonumber\\
    &\le \sqrt{\mathbb E[\|\epsilon_{t,2}\|^2]\mathbb E[\|\epsilon_{t,3}\|^2]/\mathbb P(\tau=T)}\nonumber\\
    &\le \frac{\sqrt{2}K\sigma^2}{n_s}.
\end{flalign*}
According to \eqref{eq:t+1final}, with $a_1=0.5\rho, a_2=1, a_3=0.5\beta \rho, \beta\le \frac{\delta\rho\epsilon^2}{60(1+KM^4)}, n_s\ge \max\{{K\sigma^2},\frac{36K\sigma^2M^2(6+20\beta\rho)}{\delta\rho^2\epsilon^2}\}$ and $\alpha\le \sqrt{\frac{\delta\beta^2\rho^2\epsilon^2}{12L_w^2(2M^2+4K\sigma^2)(\beta\rho+1)}}$,
we have that
\begin{flalign*}
    \mathbb E[\|w_{t+1}-w_{t+1,\rho}^*\|^2|\tau=T]\le \frac{\delta}{2}\epsilon^2.
\end{flalign*}
We then complete our induction and prove that for any $t<\tau$, we have that $\mathbb E[\|w_{t+1}-w_{t+1,\rho}^*\|^2|\tau=T]\le \frac{\delta}{2}\epsilon^2$.

As a result, we have that
\begin{flalign*}
     &\mathbb P\left(\|w_{t+1}-w_{t+1,\rho}^*\|^2> \epsilon^2\Big|\tau=T\right)\le \frac{\mathbb E\left[\|w_{t+1}-w_{t+1,\rho}^*\|^2\Big|\tau=T\right]}{\epsilon^2}\le \frac{\delta}{2},
\end{flalign*}
where the first probability is due to Markov inequality.
Thus we have that
\begin{flalign}
    &\mathbb P\left(\|w_{t+1}-w_{t+1,\rho}^*\|^2\le \epsilon^2\right)\nonumber\\
    &\ge 1- \mathbb P\left(\tau<T\right)-\mathbb P\left(\|w_{t+1}-w_{t+1,\rho}^*\|^2\Big|\tau=T\right)\mathbb P\left(\tau=T\right)\nonumber\\
    &\ge 1-\delta,
\end{flalign}
where the last inequality is because our parameters satisfy all the requirements in \Cref{thm:gdstochastic}, thus $\mathbb P(\tau<T)\le \frac{\delta}{2}$.
Then based on \eqref{eq:ca2}, by setting $\rho= \mathcal O(\epsilon^2)$, we have that $\|\nabla F(x_t)w_t-\nabla F(x_t)w_t^*\|= \mathcal O(\epsilon)$ with probability at least $1-\delta$ for each iteration $t$, which completes the proof. 
\end{proof}

\end{document}